\documentclass[a4paper,UKenglish]{lipics}

\newcommand{\hide}[1]{}    
\DeclareSymbolFont{extraup}{U}{zavm}{m}{n}
\DeclareMathSymbol{\vardiamond}{\mathalpha}{extraup}{87}
\usepackage{changepage}
\usepackage{rotate}   
\usepackage{setspace}  
\usepackage[utf8]{inputenc} 
\usepackage{listings}
\usepackage{textcomp}  
\usepackage{multicol}
\usepackage{enumitem} 
\newcommand{\staritem}{\global\asterisktrue\item}
\newcommand{\perhapsstar}{
    \ifasterisk$\star$\global\asteriskfalse\fi
}
\newif\ifasterisk

\usepackage{bussproofs}
\usepackage{color} 
\usepackage{times} 
\usepackage{tikz}    
\usepackage{tikz-qtree}
\usepackage[framemethod=TikZ]{mdframed}
\usepackage{marvosym} 
\usepackage{listings}
\usepackage{caption}   
\usepackage{multirow} 
\usepackage{wrapfig}  
\usepackage[sans]{dsfont}
\usepackage{verbatim}
\usepackage{amssymb}   
\usepackage{amsmath} 
\hide{
\newtheorem{corollary}{Corollary}
\newtheorem{example}{Example} 
\newtheorem{proposition}{Proposition}  
\newtheorem{theorem}{Theorem} 
\newtheorem{lemma}{Lemma}  
\newtheorem{definition}{Definition}  
}
\hide{ 
\usepackage{hieroglf}
\newcommand{\hide}[1]{}    
\usepackage{rotate}    
\usepackage{graphicx} 
\usepackage{tabularx}
\usepackage[utf8]{inputenc} 
\usepackage{listings}  
\usepackage{amsthm} 
\usepackage{amsmath}
\usepackage{textcomp}   
\usepackage{multicol}
\usepackage{enumitem}
\usepackage{bussproofs}
\usepackage{color} 
\usepackage{times}  
\usepackage{helvet}  
\usepackage{courier} 
\usepackage{tikz}    
\usetikzlibrary{lindenmayersystems}
\usepackage[framemethod=TikZ]{mdframed}
\usepackage{marvosym} 
\usepackage{listings}
\usepackage{caption}   
\usepackage{multirow} 
\usepackage{wrapfig}   
\usepackage[sans]{dsfont}
\usepackage{verbatim}
\usepackage[lofdepth,lotdepth]{subfig}

\newtheorem{proposition}{Proposition}  
\newtheorem{theorem}{Theorem}       
}    
\newtheorem{proposition}{Proposition}  
\newcommand{\leftmost}{\textsf{lefm}} 
\newcommand{\Imu}{\ensuremath{\mathcal{I}_{\mu}}} 
 
\newcommand{\update}{\ensuremath{\zeta}} 
\newcommand{\properO}{\ensuremath{\textsf{properO}_{\mathfrak{D}}}}
\newcommand{\epgl}{\textsc{EPGL}}
\newcommand{\fixpoint}{\textsf{fix}}
\newcommand{\argument}{\textsf{arg}}

\newcommand{\obullet}{\ensuremath{o^{\bullet}}} 
\newcommand{\ostar}{\ensuremath{o^{\star}}} 
\newcommand{\proper}{\ensuremath{\textsf{proper}_{\mathcal{I}_{\mu}}}}
\newcommand{\completely}{\textsf{completely}}
\newcommand{\something}{\textsf{something}}
\newcommand{\pnode}{\textsf{Pnode}}
\newcommand{\Subformula}{\textsf{Subformula}} 
\newcommand{\Subobject}{\textsf{Subobject}}

\newcommand{\Subvariable}{\textsf{Subvariable}}
\newcommand{\Subterm}{\textsf{Subterm}}
\newcommand{\Outer}{\textsf{Outer}}
\newcommand{\TScompatible}{\textsf{TScompatible}}

\newcommand{\term}{\ensuremath{\mathbf{t}}} 

\newcommand{\Plato}{\textsf{Plato}}  
\newcommand{\shelf}{\textsf{shelf}}

\newcommand{\orC}{\textsf{or}}
\newcommand{\andC}{\textsf{and}}

\newcommand{\notC}{\textsf{not}}
\newcommand{\ryuta}[1]{{\color{red}{[#1]}}}
 
\newcommand{\tbullet}{\ensuremath{t^{\bullet}}}
\newcommand{\tstar}{\ensuremath{t^{\star}}} 
\newcommand{\tstarLength}[1]{\ensuremath{t^{\star{\small\fbox{#1}}} }}
\newcommand{\ostarLength}[1]{\ensuremath{o^{\star{\small\fbox{#1}}} }}
\newcommand{\Bound}{\textsf{Bound}}
\newcommand{\Free}{\textsf{Free}}

\newcommand{\compatible}{\textsf{Tcompatible}}
\newcommand{\AQcompatible}{\textsf{AQcompatible}}
\newcommand{\Ccompatible}{\textsf{Ccompatible}}
\newcommand{\match}{\ensuremath{\textsf{match}}}
\newcommand{\al}{\ensuremath{\alpha}} 
\newcommand{\alstar}{\ensuremath{\alpha^{\star}}} 
\newcommand{\albullet}{\ensuremath{\alpha^{\bullet}}}

\newcommand{\isPart}{\textsf{isPartOf}}
\newcommand{\kick}{\textsf{kick}}
\newcommand{\door}{\textsf{door}}
\newcommand{\John}{\textsf{John}}
\newcommand{\down}{\textsf{down}}
\newcommand{\girl}{\textsf{girl}} 
\newcommand{\eyes}{\textsf{eyes}} 
\newcommand{\brown}{\textsf{brown}}

\newcommand{\normal}{\textsf{normal}}
\newcommand{\farm}{\textsf{farm}} 
\newcommand{\new}{\textsf{new}}

\newcommand{\rabbit}{\textsf{rabbit}} 
\newcommand{\elephant}{\textsf{elephant}}
\newcommand{\Socrates}{\textsf{Socrates}}
\newcommand{\illReal}{\textsf{illReal}}

\newcommand{\like}{\textsf{like}} 
\newcommand{\order}{\textsf{allowOrderThrough}}
\newcommand{\employ}{\textsf{employ}}
\newcommand{\use}{\textsf{use}}
\newcommand{\school}{\textsf{school}} 
\newcommand{\handsome}{\textsf{handsome}}  
\newcommand{\student}{\textsf{learner}}
\newcommand{\pcinstructor}{\textsf{pcinstructor}} 
\newcommand{\teach}{\textsf{teach}} 
\newcommand{\female}{\textsf{female}}
\newcommand{\Mac}{\textsf{Mac}}
\newcommand{\buy}{\textsf{buy}}
\newcommand{\has}{\textsf{has}}
\newcommand{\father}{\textsf{father}}
\newcommand{\daughter}{\textsf{daughter}}
\newcommand{\speak}{\textsf{speak}}
\newcommand{\lang}{\textsf{language}}

\newcommand{\walk}{\textsf{walk}}
\newcommand{\run}{\textsf{run}}
\newcommand{\teacher}{\textsf{teacher}}

\newcommand{\beat}{\textsf{beat}}
\newcommand{\donkey}{\textsf{donkey}} 
\newcommand{\own}{\textsf{own}} 
\newcommand{\farmer}{\textsf{farmer}}
\newcommand{\RNum}[1]{\uppercase\expandafter{\romannumeral 
        #1\relax}}

\newcommand{\Green}{\textsf{Green}}

\title{Predicate Gradual Logic and
    Linguistics}   
\author{Ryuta Arisaka}  
\affil{National Institute of Informatics\\
ryutaarisaka@gmail.com} 
\date{}
\begin{document} 
\maketitle 
\begin{abstract}  
   There are several major proposals
   for treating donkey anaphora  
   such as discourse representation theory 
   and the likes, or E-Type theories 
   and the likes. 
   Every one of them  
   works well for a set of specific examples 
   that they use to demonstrate validity 
   of their approaches. 
   As I show in this paper, however, they 
    are 
   not very generalisable 
   and do not account for essentially the 
   same problem that they remedy  
   when it manifests in other examples. 
    I propose another  
  logical approach. 
  I develop 
  logic that extends a recent, propositional 
  gradual logic, and show that it 
  can treat donkey anaphora generally. 
  I also identify and address 
  a problem around the modern 
  convention on existential import.  
  Furthermore, I show that Aristotle's syllogisms 
  and conversion are realisable 
  in predicate gradual logic. 
\end{abstract} 
\section{Introduction}        
\begin{enumerate}[label={(\arabic*)}] 
\item Every farmer who owns a donkey beats it.
\end{enumerate} 
The donkey sentence by Geach 
\cite{Geach62} depicts 
how Fregean predicate logic fails   
to naturally encode anaphoric expressions. 
\begin{enumerate}[label={(\protect\perhapsstar\arabic*)}] 
\setcounter{enumi}{1}
\item There is a farmer who owns a donkey. It runs. 
\item[] {\small $\exists x \exists y.
    \farmer(x) \wedge \donkey(y) 
    \wedge \own(x, y) \wedge \run(y)$}  \\
\item Every farmer owns a donkey. It runs. 
\setcounter{enumi}{2}
\item[] {\small $\forall x.(\farmer(x) 
\supset \exists y.\donkey(y) \wedge \own(x, y) \wedge 
\run(y))$}\\ 
\end{enumerate}  
    Before going further, I state my assumption that  
    $\neg$ binds the strongest, 
    $\wedge$ and $\vee$ bind 
    the second strongest, 
    $\forall$ and $\exists$ the third strongest, 
    and $\supset$ the weakest. The expressions (2) and (3) 
and the orthodox first-order encoding 
shown below them suggest that the indefinite noun {\it a donkey}, 
in the presence of the pronoun {\it it} anaphoric to it, 
should be understood to be possessing existential force. 
However, the heuristics does not 
appear to be extendible to (1), for, suppose 
it were applicable, 
we would be 
getting the following first-order 
encoding 
which is not a sentence. \\\\
$\forall x.( 
\farmer(x) \wedge (\exists y.\donkey(y) \wedge 
\own(x, y)) \supset \beat(x, y))$.    \\\\
We also cannot 
just bring the existential 
quantifier out of the inner scope. \\\\
$\forall x \exists y.(\farmer(x) \wedge 
\donkey(y) \wedge \own(x, y) \supset 
\beat(x, y))$. \\\\
In this attempted encoding, 
assume for now that each entity 
$d_1, d_2, \ldots$ 
in a given domain of discourse 
is denoted by a constant 
$a_1, a_2, \ldots$ in a one-to-one manner, 
then $\exists y$ binds 
$\farmer(a) \wedge \donkey(y) \wedge \own(a, y) 
\supset \beat(a, y)$ for each such $a$. 
Hence we only need find some individual  
denoted by some constant $a_2$ 
for every such $a$ 
to replace $y$ with in the expression 
such to make  
$\farmer(a) \wedge \donkey(a_2) \wedge \own(a, a_2) 
\supset \beat(a, a_2)$ true. 
Suppose we have a domain of discourse 
having three entities: a farmer, a donkey 
and a pebble denoted by $a_1$, by $a_2$ 
and respectively by $a_3$ such that 
the farmer owns the donkey which 
he/she does not beat.  
If $x$ is replaced by $a_2$ or $a_3$, 
the antecedent of the material implication 
is false, and the sentence is vacuously 
true,  
and if $x$ is replaced by $a_1$, then substitution of  
$a_3$ into $y$ makes it true. So 
the first-order sentence, quite out of keeping 
with the semantics of (1), is true in the given domain 
of discourse. \\
\indent These taken into account, 
formal semanticists who accept 
the view that the Geach' sentence 
is first-order formalisable 
in a standard way 
are forced 
to choose the following 
alternative for the representation 
of (1). 
\begin{enumerate}[label=(\roman*)]  
\item 
$\forall x \forall y.(\farmer(x) \wedge 
\donkey(y) \wedge \own(x, y) \supset 
\beat(x, y))$. 
\end{enumerate} 
Now, if it were the adequate encoding, 
a pair of 
an indefinite noun and a pronoun anaphoric 
to it would be sometimes understood existentially,  
sometimes understood universally,  
the quantificational force  
irregularly varying depending 
on the context in which it is found. 
That would be certainly a hindrance to 
automating formalisation of 
anaphoric expressions.  \\
\indent A quick search on the Web has 
taken the author to the Wikipedia, the Stanford
Encyclopedia of Philosophy, and 
a few forum threads on the donkey anaphora. 
Except for the Wikipedia entry where multiple 
viewpoints are blended in, all the others  
seem to assert that  
{\it it is not the case that the donkey sentence 
cannot be encoded as a first-order expression 
in a standard way; it is 
just that the encoding of the donkey sentence in 
that manner is inconsistent 
with the encoding of other anaphoric expressions}, 
which can be taken to be 
a consensus among - at the very least - the users 
of formal logic.\\
\indent However, regardless of the trend, 
I cannot help  wondering 
how (i) came to be considered 
the adequate encoding of (1), i.e. 
since when it is the case that (1) means (4). 
\begin{enumerate}[label=(\arabic*)]  
\setcounter{enumi}{3} 
   \item Every farmer who owns a 
   donkey 
   beats 
   every one of the donkeys he/she owns.
\end{enumerate} 
Suppose a domain of discourse 
in which there are two farmers. 
One of them owns a couple of donkeys which 
he/she beats equally. 
The other owns two donkeys and beats 
one of them but not the other. While  
there is hardly any doubt that (4) is false in this domain of discourse, 
the judgement that 
(1) must be false in the same domain of discourse 
sounds a little too audacious. 
When we utter (1) correct to 
its usage, we surely cannot mean  
so much more off the mark as (4) than the following: 
\begin{enumerate}[label=(\arabic*)]  
        \setcounter{enumi}{4}
    \item Every farmer who owns a donkey 
        beats a donkey he owns. 
\end{enumerate}
which, as I will argue like a linguist 
in cognitive tradition 
may do,  already 
carries imprecision, but which 
may be still acceptable 
by force of common sense prevailing over this 
particular donkey sentence. In any case, 
logic should capture general principles 
rather than specific appearance of 
them. 
\subsection{Previous Approaches}   
There are several proposals 
for formally treating the donkey anaphora such as  
discourse representation theory 
and the likes
\cite{Heim82,Kamp11,Kamp81,Groenendijk91,Muskens96}, 
or E-Type theory and the likes \cite{Wilson84,Neale90,Davies81,Evans77}. Still, it does not appear as if 
any of them had extricated  
the linguistic problems that the donkey anaphora presents. 
There is a defeating counter-example 
to all of them. 
\begin{enumerate}[label={(\protect\perhapsstar\arabic*)}] 
        \setcounter{enumi}{5}
    \item Every school which employs a 
        handsome PC instructor  
         who teaches every 
         female learner using a Mac machine 
         buys it for him. 
\end{enumerate} 
It is just another donkey sentence having 
the same linguistic characteristic 
as the Geach' sentence. 
If it were the case 
that they had indeed solved the donkey 
anaphora, then the same formal techniques 
could adequately encode (6) just as 
they could (1). 
But none of them can formalise (6). 
\subsubsection{Dynamic Theories}   
Appeared earliest - or at least published earliest - in this movement 
is discourse representation theory (DRT). 
Unlike  more traditional formal linguistics 
formed around the Russelian interpretation,  
DRT treats quantifying 
force of a noun phrase context-dependently. 
In classic DRT, it is possible to derive (i) 
from (1) in a manner consistent 
to `more normal' anaphora. 
In the more recent DRT, 
it is possible 
to interpret (1) as (ii).   
\begin{enumerate}[label=(\roman*)] 
        \setcounter{enumi}{1} 
    \item      For 
        every farmer $X$ who owns a donkey $Y$,  
        it holds true that $X$ beats $Y$.  
\end{enumerate}
If one accepts the commonly approved view  
\cite{Kanazawa94} 
that (1) has both the so-called weak reading: 
(5), and the so-called strong reading: (4), then 
the modern DRT somewhat 
offers an interim solution in 
 that its user, according to his/her 
 preference, could choose 
either (4), or (ii) -  which 
is not quite (5) but still closer 
to it than (4) is. Even then, 
the revamped DRT does not voluntarily 
facilitate two different 
formal semantics corresponding to them, 
one for each: suppose 
that somebody has encoded (4) using the modern DRT; 
suppose that we pass the formal expression 
to Mr X, but not the fact that 
it is meant to be formalisation of (4); 
suppose that Mr X wants to find 
a natural expression corresponding to 
the formal expression; then 
he may be inclined to say that (4) corresponds to it, 
that (ii) corresponds to it, or,
if he is cautious, that it could be 
(4) or (ii); but Mr X cannot obtain that it must be 
(4). That is, DRT formalisation 
of a donkey sentence is not lossless,
multiple donkey sentences in natural language mapping 
to a DRT expression in general. 
Besides, neither the classic nor the modern DRT 
can encode (6).  If we apply the more recent DRT to (6), 
what we eventually get is the following 
expression:\footnote{
    Readers 
    are referred to the cited references 
    for the syntax.}\\\\ 
{\small $[_0[_1x,y:\school(x),
    \handsome(y),\pcinstructor(y), \employ(x,y), \\$}
{\small $
    [_2 z, w:\female(z), \student(z), 
    \Mac(w), \use(z,w)]\forall z
    [_3 \teach(x,z)]]\forall x
    [_4 u: \buy(u, x)]]$}. \\\\
\hide{
    $[_0[_1x,y:\farmer(x),\daughter(y),[_2 z: \lang(z)]\forall z[_3\speak(y,z)]]\forall x[_4 u: 
    \speak(x,u)]]$}
In this DRT representation $u$ cannot be resolved
to be $w$ as DRS 2 that contains the variable $z$ 
is not accessible from 
DRS 4 that needs it. \\
\indent Neither can FCS which is another 
quantifier-free theory which puts 
an extra focus on discourse referents and 
the relations to hold between them,
nor DPL, a formal variant of DRT and FCS, 
encode (6). First of all, the principle 
of FCS, 
simply for the fact that it encodes (1) into (i), 
is controversial. 
DPL can translate (1) into (ii), on the other 
hand. However, DPL, and 
also FCS, 
suffer from the same scoping issue as DRT 
does. 
\hide{
Let us denote a file card index  $n$ and its contents 
by $[n:(\text{the contents})]$. 
Then before the utterance of (5), 
we have a file $F_0$ such that 
$\text{Dom}(F_0) = \phi$, 
and that $\text{Sat}(F_0) = \{\phi\}$ where 
$\phi$ is {\it an  empty set} of file cards and $\{\phi\}$ 
is {\it an  empty sequence} \cite{Heim82}. 
The logical form of (5), LF, is as follows. A box 
indicates the position of {\it an atomic proposition} 
\cite{Heim82}. \\      
\begin{center}  
    \scalebox{0.85}{
\begin{tikzpicture} 
\Tree [.S [.every ] [.NP$^\star$ [.\framebox[1.2\width]{NP$_1$} -father ] [.S who$_1$ 
[.S [.\framebox[1.2\width]{NP$_2$} {a daughter} ] 
[.S [.\framebox[1.2\width]{S} {$e_1$ has $e_2$} ] 
[.S who$_2$ [.S$^\bullet$ every [.\framebox[1.2\width]{NP$_3$} -language ] [.\framebox[1.2\width]{S$^\circ$} {$e_2$ speaks $e_3$} ] ] ] ] ] ] ] [.\framebox[1.2\width]{S$^\vardiamond$} {$e_1$ speaks 
    $e_3$} ] ]  
\end{tikzpicture}  
    }
\end{center} 
\noindent Since the operator `every' is found 
right below the tree root,  
Sat$(F_0 + (\text{LF}))
$ is calculated in three steps, 
in the way that is stated 
in \cite{Heim82}. \\
\textbf{Step 1:} 
We tentatively update $F_0$ into 
$F_0' := F_0 + (\text{LF under NP}^\star -   
\text{LF under S}^\bullet) + (\text{LF under S}^\bullet)$. 
Let

However, note that (LF under S$^\bullet$)

Sat$(F_0') = \{[1:\text{is father}, 
    \text{has } 2], [2:\text{is daughter}, 
    \text{is had by } 1]: 
    \{\phi\}\}$. \\\\ 
Now, we once again see the occurrence of 
`every'. 
the same operator, we tentatively 
update $F_0'$ into $F_1'$ such that\\\\
{\small 
Sat$(F_1') = \{[1: \text{is father}, \text{has } 2], [2:\text{is 
        daughter}, \text{is had by } 1], 
    [3:\text{is language}]: 
    F_0'$ is satisfiable$\}$.} \\\\
We tentatively update 
$F_1'$ into $F_1''$ such that \\\\
{\small 
Sat$(F_1'') = \{[1: \text{is father}, 
    \text{has } 2], 
    [2:\text{is daughter}, \text{is had by } 1, 
    \text{speaks } 3], 
    [3: \text{is a language}, \text{is spoken by } 2]: 
    F'_0$ is satisfiable $\}$.} \\\\
We then update Sat($F_0'$) by $F_1''$ as follows. \\\\
{\small Sat$(F_0'') = 
    \{[1:\text{is father}, \text{has } 2], 
        [2:\text{is daughter}, \text{is had by } 1]:$
        if $2$ speaks every language$\}$. }\\\\
We then try to tentatively update 
$F_0''$ with ``$e_1$ speaks $e_3$''
into $F_0'''$. However, 
notice that $F_0'''$ does not 
contain the card with the index 3. Hence 
the attempt is {\it not appropriate} \cite{Heim82} 
with respect to $F_0'''$, even though 
it should be appropriate. \\ 
} 
\hide{ 
Let us first of all recall the mechanics of 
DPL through its representation of 
(1). 
\begin{enumerate}[label=(\roman*)] 
        \setcounter{enumi}{1}
    \item $\forall x.\farmer(x) \wedge^{\bullet}
        \exists^{\bullet} y.(\donkey(y) \wedge^{\bullet} 
        \own(x,y)) \supset \beat(x, y)$. 
\end{enumerate}   
The conjunction and the existential quantification, 
being characteristic to the logic, 
are marked by a $\bullet$. Basically 
the DPL semantics is an input-output semantics 
characterised by a pair of 
a set of mappings from 
discourse referents (variables) 
into individuals of a given domain of discourse. 
Of each pair, the first component is the input 
assignment, while the second component is the output 
assignment. Given 
a DPL conjunction $F_1 \wedge^{\bullet} F_2$, 
$F_1$ is evaluated first and then $F_2$ in 
a sequential order. 
Given a DPL existential quantification 
$\exists^{\bullet} x.F$, 
it is viewed as an update of what $x$ 
maps into in a given input assignment, which 
is reflected on the output assignment. 
The semantics of $\forall$ and $\supset$ 
are very close to those in the predicate logic. 
In (ii), if the input assignment 
for $\farmer(x) \wedge^{\bullet} 
\exists^{\bullet} y.(\donkey(y) \wedge^{\bullet} 
\own(x,y)) \supset \beat(x, y)$ 
does not map $x$ into some $a$ such 
that $a$ is a farmer under the interpretation 
of $\farmer$, then the expression is vacuously 
true for the specific input assignment. 
If, on the other hand, it has a mapping 
$x \mapsto a$ where $a$ is a farmer, 
then the assignment is the input 
assignment for 
$\exists^{\bullet} y.(\donkey(y) \wedge^{\bullet} 
\own(x,y))$, which updates the input assignment 
to an output assignment that contains 
$y \mapsto b$ such that $b$ is a donkey which 
is owned by $a$, if there should be such $b$ at all 
under the interpretation of 
$\donkey$ and $\own$.\footnote{If there is no such $b$, 
then $\farmer(x) \wedge^{\bullet} 
\exists^{\bullet} y.(\donkey(y) \wedge^{\bullet} 
\own(x,y)) \supset \beat(x,y)$ is vacuously 
true for the specific input assignment.} 
The updated assignment, not the initial input
assignment, then evaluates 
$\beat(x,y)$, which realises the effect 
of the inner existential quantification 
to be carried outside its scope. 
Meanwhile the effect of a universal quantification
can still be contained within its scope. \\
\indent This approach, while it seems to work
for (1), 
cannot address the cause of the donkey anaphora. 
An attempt to encode (5) in 
DPL results in the following expression, 
where $z$ in $\lang(z) \supset 
\speak(y,z)$ does not link to 
$z$ in $\speak(x,z)$. 
\begin{enumerate}[label={(\protect\perhapsstar\arabic*)}] 
        \setcounter{enumi}{4}
    \item[] {\small $\!\!\!\!\!\!\!\!\!\!\forall x.\father(x) 
        \wedge^{\bullet} 
        \exists^{\bullet} y.
        (\daughter(x,y) \wedge^{\bullet}
        \forall z.(\lang(z) \supset \speak(y,z))) \supset 
        \speak(x, z)$.} 
\end{enumerate}   
The only input assignments that work 
for this expression are those 
that already have $z \mapsto c$ such that 
$c$ is in the interpretation of 
$\lang$. 
FCS similarly fails to encode (5) because 
it does not work unless 
the initial file already contains 
a card for \lang.
}
\subsubsection{Substitution theories}
There are other approaches based on 
E-Type Theory by Evans \cite{Evans77}. \begin{enumerate}[label={(\protect\perhapsstar\arabic*)}] 
        \setcounter{enumi}{6} 
    \item If Thomas$_1$ owns a donkey$_2$, 
        he$_1$ beats it$_2$.   
    \item If Thomas owns a donkey, 
        {\it (the) Thomas} beats 
    {\it the donkey}. 
\end{enumerate}
Evans  
argues that, if for instance we 
have (7),  
it should be paraphrasable 
to (8) where 
the {\it he} anaphoric to   
{\it Thomas} 
is replaced 
by that particular {\it Thomas} who owns 
the donkey, and similarly for the {\it it}.   
By applying this principle 
to (1), we obtain (9). 
\begin{enumerate}[label=(\arabic*)]
        \setcounter{enumi}{8} 
    \staritem Every farmer 
    who owns a donkey beats {\it 
        the donkey.} 
\end{enumerate} 
The uniqueness presupposition  
justifies the use of 
the definite noun phrase {\it the donkey} 
in (9). Despite the E-Type theory having 
appeal to intuition, this approach 
has not found many proponents. There is 
firstly an obvious doubt over the uniqueness 
condition. There is, however, more 
serious a problem: it requires 
the substituting entity to carry 
the context in which {\it a donkey} was used. 
This cannot be the case in general, as I will 
discuss shortly. \\
\indent The so-called numberless theories  
\cite{Davies81,Neale90} 
derive from the E-Type Theory. For 
the perceivable inconvenience in 
the uniqueness presupposition, those 
theories interpret (1) as follows. 
\begin{enumerate}[label=(\arabic*)]
        \setcounter{enumi}{9} 
    \staritem Every farmer who owns a donkey 
        beats the donkey or the donkeys 
        he/she owns. 
\end{enumerate} 
These theories, as derivatives 
of the E-Type theory, 
inherit the serious problem 
from it. 
Kanazawa argues against the semantic 
plurality these theories  assume  \cite{Kanazawa01}. 
\subsubsection{Anaphora is not bound by the context 
    that binds its antecedent}    
There is a problem common to all that have been 
mentioned above. Consider (6), the Mac sentence, again.  
In the sentence, while the pronoun {\it it}  
is anaphoric to the {\it Mac machine}, 
it is by no means the case that 
the Mac sentence intends to mean that 
School X which satisfies the stated condition 
buys the Mac machine(s) owned by the female 
learner(s) for the handsome PC instructor.  
Most naturally, it is some brand new Mac machine 
the school buys for him. In another word, 
while the {\it it} is indeed some {\it Mac machine}, 
the {\it Mac machine} cannot be further made 
specific by the specific context in which 
it appears in the Mac sentence. \\
\indent Hence the Mac sentence is a testimony 
also to the fact 
that 
neither the strong reading of (1), which is 
(4), nor the weak reading of it, which is (5), 
as categorized in \cite{Kanazawa94} is  
general enough for a general donkey sentence. 
See \cite{Fauconnier85} for effectively 
the same outlook as the author's.
\subsection{A new approach with Gradual Logic}      
I have shown that, if the effect of 
the donkey anaphora has been 
somehow addressed for specific donkey sentences  
by the above-mentioned formal approaches, 
the cause of it has still not been.  
In this work I will identify a fundamental 
cause of the gap 
between natural and formal languages, 
and, by resolving it, solve 
the donkey anaphora within formal logic 
as a consequence. \\
\indent To address 
its cause, it is necessary that we contemplate over 
the following point. \begin{itemize} 
\item There are no smallest concepts 
    referring to objects 
in natural languages, {\it so long just as 
we acknowledge infinitely extending notions 
such as space, time, size, colour, etc, 
in short any reasonable quality into which 
an object is subjected} \cite{Arisaka15Gradual}.  
But formal/symbolic logic has always assumed 
the entities 
that cannot be 
further divisible. 
\end{itemize}       
Hence inspecting the logical atomicism and 
developing 
an alternative foundation for formal/symbolic 
logic should be the key to closing 
the gap.   I presented a class of propositional 
logics that do not assume smallest entities 
\cite{Arisaka14tech1,Arisaka15Gradual}. 
I will 
derive a solution to the donkey anaphora  
by extending, with quantifiers, one formulation of propositional 
gradual logic.  \\
\indent At least something should be said 
of gradual logic, however. Propositional gradual logic 
bears a deceptively simple appearance. Apart from 
the usual logical connectives available 
to propositional logic, there is only 
one additional logical connective $\gtrdot$. 
From specific examples used in \cite{Arisaka14tech1,Arisaka15Gradual}, 
I recall 
$\textsf{Hat} \gtrdot \textsf{Green}$ 
for instance. Given, as so supposed 
in \cite{Arisaka14tech1,Arisaka15Gradual}, 
that  
$\textsf{Hat}$ and $\textsf{Green}$  
both convey existence of some entities, it means 
that Hat is, and Green as an attribute of Hat 
is. It moreover means that 
$\textsf{Hat} \gtrdot \textsf{Green}$ 
itself indicates existence of a concept: Green Hat, if 
it is indeed the case that the concept 
Hat has the attribute of being Green.  
The usual nullary 
logical connectives of propositional logic $\top$ 
and $\bot$ signify 
the presence and the absence respectively. 
A postulate in \cite{Arisaka15Gradual} 
dictates that 
for any concept $\textsf{X}$ it always holds 
that $\textsf{X} = 
(\textsf{X} \gtrdot \top)$, i.e. a concept 
is if and only if it is such that 
it has some attribute, which implies 
that there cannot be an atomic entity 
in Gradual Logic.

To speak 
more of specific details 
about what the attribution relation means 
here semantically, we need 
to keep in mind that 
Gradual Logic assumes 
multiple domains of discourse. 
Let us say that any concept that appears 
at the leftmost position in 
some $\gtrdot$ chain; e.g. 
$\textsf{Hat}$ in $\textsf{Hat} 
\gtrdot (\textsf{Green} \gtrdot \textsf{Lamination})$, 
a 0-th degree concept. Any 0-th degree concept 
is a constituent of  some domain of discourse. 
The $\textsf{Green}$ in 
$\textsf{Hat} \gtrdot (\textsf{Green} 
\gtrdot \textsf{Lamination})$ is 
in another domain of discourse, whose 
existence is dependent 
on the existence of $\textsf{Hat}$ 
in the 0-th degree domain of discourse. 
In the like manner,  
$\textsf{Lamination}$ in 
$\textsf{Hat} \gtrdot (\textsf{Green} 
\gtrdot \textsf{Lamination})$ 
is in another domain of discourse 
which depends on 
the existence of $\textsf{Green}$ 
in the domain of discourse 
which in turn depends on the existence 
of $\textsf{Hat}$ in the 0-th degree domain 
of discourse. \\
\indent Although propositional gradual logic 
does not handle 
quantified expressions generally, 
there is a sketch towards Gradual Predicate Logic 
in the last section 
of \cite{Arisaka15Gradual}. The key idea 
mentioned is to 
consider a propositional gradual logic 
expression as a term; e.g. 
instead of using a variable $x$ 
to say there is a green hat: 
$\exists x.\Green(x) \wedge \textsf{Hat}(x)$, 
we let a predicate range 
over a propositional gradual logic expression: 
$\textsf{is}(\Green \gtrdot \textsf{Hat})$. 
However, given that 
there is virtually no predicate-term 
distinction in propositional gradual logic, 
it should be also possible to 
use a gradual logic expression as 
a predicate. I will show that such idea can be 
indeed formalised, which I may 
call Predicate Gradual Logic for
some distinction from the idea of gradual predicate logic 
in the mentioned reference. 
There is obviously no difficulty involved 
in augmenting propositional gradual logic with quantifiers for 
the concepts in the 0-th degree 
domain of discourse.  They 
appear basically the same as the standard 
first-order expressions. 
\begin{enumerate}[label=(\roman*)] 
    \item[] There is some 
        hat. 
        \setcounter{enumi}{2}
    \item $\exists x.\textsf{Hat}(x)$.\\
    \item[] Every man walks. 
        \setcounter{enumi}{3}
    \item $\forall x.\textsf{Man}(x) \supset 
        \walk(x)$. 
\end{enumerate}
Unrestricted predication 
over a general gradual logic expression appears, 
by contrast, 
a lot more involved. 
Nonetheless, recall 
that $\textsf{Hat} = ({\textsf{Hat} \gtrdot \top})$, 
which means that (iii) is the same as 
$\exists x.({\textsf{Hat} \gtrdot \top})(x)$, 
while (iv) is the same as
$\forall x.({\textsf{Man} \gtrdot \top})(x) 
\supset ({\walk \gtrdot \top})(x)$. Hence, 
at least in a certain way, (iii) and (iv) already 
illuminate a solution to the difficult cases. 
In case we have an expression 
{\it There is a hat which is green.}, 
the corresponding predicate gradual logic encoding 
should be something to do with 
replacing $\top$ in (iii) with 
an expression that involves $\textsf{Green}$. 
I develop formal semantics of 
Predicate Gradual Logic based on this insight.
\subsection{Predicate Gradual Logic encoding 
    of donkey sentences} 
I believe, however, that it is probably 
more helpful if I begin by presenting 
examples of predicate gradual logic expressions 
than by spelling out rigorous formal definitions 
outright. Semantics will not be developed 
until Section 3, and the formal expressions 
themselves may not be fully comprehended 
except by some. But it is more important to 
understand explanations of features 
of logic given after the examples at this point 
than the examples themselves. 
Now, although I am presenting materials 
in this order 
for the best, any reader who wishes 
to see examples 
only after the semantics of logic has been given 
should immediately go to 
Section 3, and should come back to this 
part and Section 2 later. \\\\
 \textbf{Minimal reading aid}: 
    $x$, $y \lessdot x$, and $u \lessdot x$ and 
    so on are variables. $\lessdot$ 
    is left-associative. $\gtrdot$ binds 
    stronger than $\forall$ and $\exists$ but weaker 
    than $\wedge$ and $\vee$. $\simeq$ 
    is a binary operator. 
\begin{enumerate}[label={(\protect\perhapsstar\arabic*)}]  
        \setcounter{enumi}{4} 
    \item[(5)] Every farmer who owns a donkey beats 
        a donkey he/she owns. 
        \setcounter{enumi}{4} 
        \staritem $\forall x \exists y \lessdot x 
        \exists u \lessdot x.
    ([\farmer \gtrdot \donkey(y) \wedge 
    \own(^\circ x,y)](x) \supset  {y \lessdot x}
    \simeq {u \lessdot x} 
    \wedge 
    {\beat(x, u \lessdot x)})$.  
    \\
    \setcounter{enumi}{3} 
    \item Every farmer who owns a donkey 
        beats every one of the donkeys 
        he/she owns. 
        \setcounter{enumi}{3}
        \staritem $\forall x \forall y \lessdot x.
        ([\farmer \gtrdot (\donkey(y) \supset 
    \own(^\circ x,y))](x) \supset 
    \beat(x,y \lessdot x))$. \\ 
    \setcounter{enumi}{0}
\item Every farmer who owns a donkey 
    beats it.  
    \setcounter{enumi}{0} 
\staritem $\forall x \exists y \lessdot x\ \exists u.
    ([\farmer \gtrdot \donkey(y) \wedge 
    \own(^\circ x, y)](x) \supset  
    y \simeq u \wedge 
    \beat(x, u))$. \\
    \setcounter{enumi}{5} 
    \item Every school who employs a handsome 
        PC instructor who teaches every female 
        student using a Mac machine buys 
        it for him. 
        \setcounter{enumi}{5}  
        \staritem 
        {\small $\forall x\ \exists y \lessdot x \ 
            \forall z \lessdot y \lessdot x\ \exists w \lessdot z \lessdot y \lessdot x\ \exists u.([\school \gtrdot 
            ([\pcinstructor \gtrdot \handsome(^\circ y) \wedge ([\student 
            \gtrdot \female(^\circ z) \wedge 
            (\Mac(w) \wedge  
            \use(^\circ z, w))
        ](z) 
            \supset \teach(^\circ y,z))
            ] (y) \wedge \employ(^\circ x,y) )]
            (x) \supset w \simeq u \wedge  
            \buy(x, u))$}. \\
        \setcounter{enumi}{10} 
    \item  Every school who employs 
    a handsome PC instructor who teaches 
    every female student who 
    likes a Mac allows her to 
    order it through him. 
   \setcounter{enumi}{10}
   \staritem  {\small $\forall x\ \exists y \lessdot x \ 
            \forall z \lessdot y \lessdot x \ \exists w \lessdot z \lessdot y \lessdot x\ \exists u.([\school \gtrdot 
            ([\pcinstructor \gtrdot \handsome(^\circ y) \wedge ([\student 
            \gtrdot \female(^\circ z) \wedge 
            (\Mac(w) \wedge  
            \like(^\circ z, w))
        ](z) 
            \supset \teach(^\circ y,z))
            ] (y) \wedge \employ(^\circ x,y) )]
            (x) \supset  
            w \simeq u \wedge 
            \order(x, z, u, y))$}.
\end{enumerate}  
\subsubsection{Scopes of quantifiers}  
In first-order logic, supposing 
that $x$ is some variable, we have $\forall x$ for 
universal quantification, and $\exists x$ 
for existential quantification. 
In either of the cases, 
$x$ does not occur free in the expression 
in the scope of it. 
That is 
no different in predicate gradual logic.  
\subsubsection{Attributed objects} 
But a variable in predicate gradual logic 
is not assumed to be indivisible. In 
all the five formal expressions above there is  
some variable of the sort of $y \lessdot x$.  
It means $y$ as an attribute of $x$.  \\
\indent Significance of $\simeq$ is best explained 
with the recognition cut-off principle 
\cite{Arisaka15Gradual,Kant08}: even if two things are 
distinguishable, they are not so distinguished in 
a domain in which the distinction cannot be perceived. 
$v_1 \simeq v_2$ says that $v_1$ may not 
be the same as $v_2$ but that 
they are indistinguishable as far as 
the most {\it salient parts} of $v_1$ and $v_2$, 
like foreground is to backgrounds in Langacker's 
cognitive grammar \cite{Langacker87,Langacker91}, 
are concerned. Such surficial indistinguishability 
is amply expected from the fact 
that there are many 
domains of discourse in gradual logic. 
For instance when we have 
$y \lessdot x$ and $u \lessdot x$,
the entity referred to by $y$ and 
the entity referred to by $u$ are being `looked in' by $\lessdot$ as 
an attribute of the entity referred to by $x$, which are then 
the most salient parts of $y \lessdot x$ and of 
$u \lessdot x$ respectively.  
The expression $y \lessdot x \simeq u \lessdot x$ 
signifies that 
the entity for $y$ 
is indistinguishable from the entity for $u$ within 
the particular domain of discourse in which 
the two are discussed. See ($\star$5), 
and take a contrast with ($\star$4). \\
\indent In this work, I consider 
{\it attribute normal interpretation} 
for attribution. 
Under the interpretation, 
that the entity in a domain 
of discourse $\mathbf{Y}$ is an attribute 
of the entity 
in a domain of discourse $\mathbf{X}$ 
means that $\mathbf{X}$ contains the latter  
as well as the former entities. 
Hence $\exists x \lessdot a_1.\something(x \lessdot a_1)$ for instance 
requires that some entity referred to by 
$x$
be found in at least two domains of discourse:  
in the domain of discourse 
as an attribute domain of the entity referred to by 
$a_1$; 
and in the domain of discourse 
in which the entity is found. See $y \simeq u$ in ($\star$1), 
to the right of $\supset$ which is 
bound indirectly by $\exists y \lessdot x$ 
in this manner. The formal expression 
does not limit the donkey to be beaten 
to be some donkey owned by a farmer, although 
that too is a possibility. The portion: 
$y \simeq u \wedge \beat(x, u)$ merely 
ensures that the donkey be a donkey. \\
\subsubsection{Existential import}  
Under Fregean interpretation, a universal 
expression does not assume existential import, 
while an existential expression does. 
An expression of the sort of (1), under 
the interpretation, does not presuppose 
existence of a farmer who owns a donkey. 
That is, it can be a phantom term. 
I will reason in Section 2 that this convention 
does not assign a truth value clearly 
to natural expressions, e.g. the Mac sentence. 
For now I note that 
there should not be any such phantom terms
if the truth/falsehood of the sentence that 
contains them is judgeable at all. 
Consequently, for any predicate available 
to any domain of discourse, there must be 
at least one entity in the domain of discourse 
to be predicated by it.\\
\subsubsection{Attributed predicates} 
Not only objects, but also predicates 
can have attributes in predicate gradual logic.  
For ($\star$4), 
{\small $\farmer \gtrdot \textsf{X}$}
is a predicate: is-farmer-with-the-attribute-$\textsf{X}$.
If $\textsf{X}$ is $\top$, {\small $(\farmer \gtrdot \textsf{X})
    = (\farmer \gtrdot \top) = \farmer$} according 
to the principle of Gradual Logic, 
and the predicate is is-farmer-with-some-attribute, 
or is more simply is-farmer. This 
means that 
$\farmer \gtrdot \exists y.\donkey(y)$ is a predicate 
is-farmer which takes $\exists y.\donkey(y)$ 
as its attribute. Given, however, that there should be many specific attribution relations, 
we may explicitly state the own-owned relation 
in ($\star$4) 
on $\gtrdot$: 
$\farmer \gtrdot_{\own} \exists y.\donkey(y)$. 
Nonetheless, instead of diversifying 
$\gtrdot$, deriving a multi-$\gtrdot$ gradual logic that 
way, we may agree on a convention 
that, by  
$[\farmer \gtrdot \exists y.\own(^\circ a, y) \wedge \donkey(y)](a)$, 
the $^\circ a$ in $\own(^\circ a, y)$ 
refers to the entity to which $\own(^\circ a, y)$ 
is an attribute, 
$\farmer$ in this case. Then the expression 
is 
a paraphrase of 
$[\farmer \gtrdot_{\own} \exists y.\donkey(y)](a)$ 
but  with 
actually a greater mathematical and philosophical flexibilities. 
Now, although is-farmer-who-owns-a-donkey 
is more descriptive 
than is-farmer, 
it is no less a predicate 
than is-farmer is, 
the former (should be) requiring the same number of arguments as the latter. 
Provided that the entity denoted by $a$ is in the given domain of discourse, 
$[\farmer \gtrdot \exists y.\donkey(y) \wedge 
\own(^\circ a, y)](a)$ denotes 
the assertion that what 
$a$ denotes is a farmer who owns a donkey. 
By  
$\forall x.([\farmer \gtrdot \exists y.\donkey(y) 
\wedge \own(^\circ x, y)](x) \supset \mathsf{X})$, 
we express 
the following assertion: 
for every 
farmer who owns a donkey, it is the case that $\mathsf{X}$, or, if the scope of $y$ must extend to 
$\mathsf{X}$ as well, 
then $\forall x \exists y \lessdot x. 
([\farmer \gtrdot \donkey(y) 
\wedge \own(^\circ x, y)](x) \supset \mathsf{X})$, 
which provides a 
generic solution to (1), (4) and (5), 
as ($\star$1), ($\star$4) and ($\star$5).  \\
\indent Now, at this point careful readers may fret that, 
if we adopt this kind of predication, 
an entity to take place of $y$ as 
an attribute to an entity to take  place 
of $x$ 
must be discussed 
before the latter can be ever determined. 
To assure readers, 
I mention that that is just an ostensible paradox that  
does not need to be worried about. 
I will provide details in Section 2. 
\subsubsection{Let us check the given examples}  
That (1) is different from 
(4) is formally expressed. 
Further, it does not matter how deep 
in a noun phrase a noun phrase to be referred 
back by a pronoun occurs for 
the predicate gradual logic encoding to work.\footnote{ 
    This is not to say that  
    this first-order predicate gradual logic 
    could encode all the donkey sentences.}  
Take (1), it says that, for any 
farmer who owns a donkey, 
it is the case that he/she beats a donkey; Cf. 
the last section of 1.1.
But this is precisely 
what ($\star$1) is expressing. 
It almost makes no difference if 
we instead like to express (4), to which 
predicate gradual logic provides ($\star$4).  
In a like manner, yet another distinct 
expression is obtained for 
the natural expression below that 
perhaps was the true intention of Evans' 
in tackling the Geach' sentence. 
\begin{enumerate}[label={(\protect\perhapsstar\arabic*)}]   
        \setcounter{enumi}{11}
    \item Every farmer who owns a donkey 
        beats the donkey he/she owns.   
        \setcounter{enumi}{11}
    \staritem $\forall x \exists y \lessdot x.
    ([\farmer \gtrdot \donkey(y) \wedge 
    \own(^\circ x,y)](x) \supset {\beat(x, y \lessdot 
        x)})$. 
\end{enumerate}  
The innovation of predicate gradual logic 
in view of natural expression encoding 
is that 
it can express those 
different expressions distinctly, allowing 
loss-lesser translation of anaphora than 
is possible in currently predominant 
formal approaches. \\
\indent The predicate gradual logic 
encoding of (6) appears a little daunting. 
However, at every attributive domain of discourse we recognise 
self-similarity. Technically, therefore, 
($\star$6) is only a longer version 
of ($\star$1) and ($\star$4) - except that  
we cannot neglect the matter 
of existential import of a universally 
quantified expression. This, as I mentioned, is to be discussed 
in Section 2. \\ 
\indent The predicate gradual logic 
encoding of (11) appears even more frightening. 
Not only existentially quantified 
entities but also a universally 
quantified entity are referred to  
in {\it allows her to order it through him}. 
Actually handling this kind of sentence is 
beyond the scope of this work. Just like  
Evans' counter-example to Geach thesis on anaphora: 
\begin{enumerate}[label=(\arabic*)]  
        \setcounter{enumi}{12}
    \item Harry brought some sheep and 
        John vaccinated them. 
\end{enumerate}
which says (most likely) that John vaccinates 
all the sheep brought by Harry, 
there are donkey sentences of this kind 
in which all that satisfy a condition (e.g. 
all the female students who like a Mac machine 
in (11)) need to be treated as though 
they are a single entity. 
So I am only listing it to demonstrate 
that first-order 
predicate gradual logic can deal with 
this kind of an involved natural expression 
in some cases. In 
 ($\star$11) it is the case that 
possibly many entities 
that match the 
formal description   
$[\student 
\gtrdot \female(^\circ z) \wedge 
            \exists w.(\Mac(w) \wedge  
            \like(^\circ z, w))
        ](z)$ should be most naturally treated 
        if they were talked about collectively. 
At the same time, however, it is 
certainly possible to think of the elements 
in the collection one by one, and the 
method works for (11). 
\section{On certain philosophical matters}   
Predicate gradual logic does not share 
the same philosophical foundation with Fregean 
logic. I will discuss on two points. 
\subsection{Existential import}   
\subsubsection{Being}  
\begin{enumerate}[label=\arabic*)]  
\setcounter{enumi}{0}
\item Man walks. 
\item Tom walks. 
\end{enumerate}  
Many perhaps agree that {\it Man} in 1) 
is a man in general, or idealisation
of what a man is, talked as though a single unique entity, 
if, as Aristotle considers, an expression that is 
of universal character but that is not expressed 
universally should be distinguished 
from a universal expression of universal 
character. I reckon, however, that 
they do not produce much difference in the modern 
logical context where one has freedom in 
defining a domain of discourse at will: 
one, if he/she likes, could just choose such a domain of discourse 
in which {\it Man} is a single entity. 
By contrast, a good 
number of formal logicians who have familiarised 
themselves with Fregean logic would disagree with 
the statement  
that {\it Tom} 
in 1) is Tom in general. Consider,
however, a domain of discourse where 
every being is Tom. Then if 2) 
was uttered without anything consigned to 
the realm of presupposition, it must be understood 
to be meaning that just any being in the 
domain of discourse walks, and there  {\it Tom} 
is synonymous to {\it Man}. 
While I do not intend to anonymise  
Tom, there is a point 
that I wish to draw from this: that 
no noun phrases, or in fact 
no natural expressions, definitely 
identify unique and indivisible entities; Cf.
\cite{Arisaka14tech1,Arisaka15Gradual}. 
DRT supposes that proper nouns are predicates, 
some dynamic predicate logic such as 
\cite{Muskens96} consider 
them to be constants. I do not agree with the latter. 
\subsubsection{There should be 
    no phantom terms without existential import} 
\begin{enumerate}[label=\arabic*)]  
        \setcounter{enumi}{2} 
    \item Socrates is ill in the given domain 
        of discourse.  
    \item Socrates is ill in reality.  
    \item A ball is in Room 201 on the second floor 
        of the Tower Hotel. 
        Room 201 is not connected to Room 1202 
        which is on the 12th floor of the Tower Hotel.  
        The ball physically hits a guest in Room 1202.
\end{enumerate} 
3) is true if he is ill in the given 
domain of discourse; otherwise, it is false. 
For 4), we know that the Socrates, 
the great Greek philosopher who is said 
to have lived in the past, 
is not existing any more in reality. Suppose 
Socrates in the given domain of discourse 
is denoted by $a$. 
Now, the statement is such that 
$a$ in reality is ill. 
But the problem is that $a$ does not exist in reality. 
What is the truth value of 4)? 
This is analogous to the last sentence of 5). 
Let us assume that any of the three observations
the three sentences impart to us 
were done at almost the same moment, so that 
the observation by the last sentence cannot be considered to be 
occurring at a time as late with respect to 
the moment the observation of 
the first sentence was done as would 
allow transfer of the ball into Room 1202. 
Then the third sentence says that 
the ball in Room 201 hits a guest 
in Room 1202, which is impossible. 
What would be the truth value 
of the last sentence of 5)?  
Is it false 
because the ball in Room 201 did not 
hit the guest in Room 1202? 
Or is it true because the ball could 
not possibly hit the guest in a disjoint room, 
i.e. true by the situation being absolutely 
impossible to occur? 
I, as many would agree, 
judge that it should be false because 
it did not hit the guest. 
For exactly the same 
reason 4) should be then false. 
However, it basically means (Cf. \textbf{Being} above)
that any that is Socrates 
in the given domain of discourse 
is ill in reality. And it is this that 
I just stated is false. But because 
it is false, it must be the case that 
a universal expression 
carries existential import. For a proof, 
suppose - by means of showing contradiction - that  
Socrates in 4) did not denote anything, 
then 4) which is understood to be 
$\forall x.(\textsf{Socrates}(x) \supset 
\illReal(x))$ is vacuously true. But this is 
contradictory because it is false. 
$\Box$  \\\\
In fact, if we should adopt  
the modern account of existential import,  
there are certain things that simply go unexplained. 
Let us consider (4) (the so-known strong reading 
of the Geach donkey sentence) again, which was: 
\begin{enumerate}[label=(\arabic*)]
        \setcounter{enumi}{3} 
    \item Every farmer who owns a donkey 
        beats every one of the donkeys  
        he/she owns. 
\end{enumerate}
And the first-order encoding was: 
\begin{enumerate}[label=(\roman*)]
    \item $\forall x \forall y.(\farmer(x)  
        \wedge \donkey(y) \wedge \own(x,y) 
        \supset \beat(x,y))$. 
\end{enumerate}  
In case there is no farmer, 
there is no donkey, or there is 
no own-owned relation to hold 
between a farmer and a donkey,  
it holds that 
the expression is vacuously true. 
However, suppose a domain of discourse 
that contains just two elements denoted 
by $a_1$ and $a_2$ such that 
$\farmer(a_1) \wedge \donkey(a_2) \wedge \own(a_1,a_2)$ 
is true, while $\farmer(a_2)$ or $\donkey(a_1)$ is false. 
Then (i) is: (A) true iff $\beat(a_1,a_2)$ holds true; 
(B) false iff $\beat(a_1,a_2)$ is false.  
Now, consider that the action to `beat' does not denote 
anything. Then (i) evaluates to  false according 
to (B). 
But should it not be the case that 
(i) is vacuously true in that case, too? 
However, suppose that we also guard against 
the absence of `beat'. 
Then we gain the following expression. 
\begin{enumerate}[label=(\roman*)]
        \setcounter{enumi}{1} 
    \item $\forall x \forall y. 
        (\farmer(x) \wedge \donkey(y) 
        \wedge \own(x,y) \wedge \beat(x,y) 
        \supset \top)$ 
\end{enumerate}
which is: (A) vacuously true 
in case there is no farmer, there is no donkey, 
or there is a farmer denoted by $a_1$ and a donkey 
denoted by $a_2$ but 
there is no own-$a_1$-$a_2$ or no beat-$a_1$-$a_2$, 
and (B) true for a farmer denoted by $a_1$ and a donkey denoted by $a_2$ 
for which own-$a_1$-$a_2$ and beat-$a_1$-$a_2$ both hold. 
But whichever may hold to be the case, the 
expression is always true. 
I take this example, despite 
its simplicity, as a strong 
counter-argument against the modern 
existential import convention. 
Further, lack of existential import 
in universal expressions makes little sense 
in (6) (the Mac sentence). There 
are scenarios relevant to truth judgement 
of the sentence. 
\begin{itemize} 
    \item There is no school in 
        a given domain of discourse. 
In this case, (6) is vacuously true, 
as its subject phrase does not refer  
anything. 
    \item There is a school in a 
        given domain of discourse, 
        and every school in the domain 
        of discourse does not employ 
        a handsome PC instructor. 
        In this case, too, (6) is vacuously 
        true. 
\end{itemize}
These two are  
legible. However, what if we apply 
the same principle to the inner universal expression 
for female students? Suppose 
there is at least one school in the given 
domain of discourse.  If every school 
employs a handsome PC instructor 
but there is 
no Mac machine in the same domain of discourse, 
it is presumably the case that the modern 
existential import convention 
judges that {\it there is/are a school/schools which employs/employ a handsome PC instructor 
    who teaches every female student 
    who uses a Mac machine}  is satisfiable 
in the domain of discourse, which 
is to say that there exists something  
which is a school which employs 
a handsome PC instructor who teaches 
every female student who uses a Mac 
machine, simply because there is 
no Mac machine in the domain of discourse and
then because 
there is consequently no female 
student using it (in the domain of discourse).\footnote{Readers are cautioned 
against confusion about the difference between 
talking about the entities in the domain 
of discourse and talking about entities outside 
the toy domain. 
A Mac machine as a concept which I think exists 
of course exists in my mind. It is in the 
supposed domain of discourse that 
it is defined to not exist. When I 
say that there is no Mac machine in 
a domain of discourse, 
I am saying that a Mac machine that I know of  
is nowhere to be found in the particular domain 
of discourse. 
} 
Under this scenario, (6) is not vacuously true. 
It is in fact tantamount to: 
\begin{enumerate}[label=(\arabic*)] 
        \setcounter{enumi}{13} 
    \item$^\ast$  Every school$_1$ which employs a handsome 
    PC instructor$_2$ buys it$_?$ for him$_2$. 
\end{enumerate}    
and the truth of 
{\it buys it for him} must be evaluated.  
However, what could be this misplaced `it'?   
Given 
there has not appeared any indication 
of an entity which 
the pronoun could point to, it is hard 
to judge that 
(14) is sensible. The discordance 
is caused by 
the Fregean absence of existential import 
in a universal expression. 
A counterargument to this exhibition 
of mine could be the following:   \\
\begin{adjustwidth}{1cm}{}
While it is true that (14) makes no sense, 
it is not the same as (6), after all, even 
in the supposed particular domain of discourse. So 
if (6), instead of (14), is mentioned, the sentence 
should make sense: the `it' is anaphoric to the `a Mac 
machine' even though there exists no `a Mac machine' 
in the particular domain of discourse, 
and the sentence evaluates to false because 
no schools buy `a Mac machine' which 
is not in the domain of discourse.\\
\end{adjustwidth} 
This argument is  flawed. 
If no entity in a domain of discourse is `a Mac machine', 
the pronoun could not possibly refer to 
anything in the domain of discourse, not 
only in (14)  but also in (6).
To detail, note that it is impossible 
to discuss some entity that does not exist 
in a domain of discourse from within the particular 
domain of discourse. It does not follow, just 
because (6) contains a phrase `a Mac machine', 
that a discussor who was defined   
to be able to discuss the entities in 
a domain of discourse only from within 
the particular domain of discourse all of a sudden 
gains ability to discuss an entity that lies 
outside it. The discussor in the domain 
of discourse would not cognize the `a Mac machine'.
It is true that we who think of this problem 
from outside the toy domain do, but 
that is just because our domain 
of discourse is not the toy domain of discourse.\footnote{Now, how does it ever happen in 
    reality that he who 
    comprehends language and 
    who 
    has been presented a phrase does not 
    register it in his mind? 
    I do not know, and I do not think that 
    such be realistic. However, 
    it is not the matter at hand. If 
    one defines a domain of discourse 
    to be not expandable with a new external input, 
    what I have stated is a natural 
    consequence to ensue. Whether the 
supposition is adequate or inadequate, that 
is another issue.} 
Hence plausibility of the above 
counterargument attempt is only 
attained by mixing 
two domains of discourse and by allowing 
free substitutions of one to the other at every 
convenience. In fact, if it were 
the case that (6) would make sense in 
the particular domain of discourse, 
it must be the case that the use of `it' 
must also make sense. How could it have ever been learned 
that it is `it' and not `he', `she', etc. without 
conveniently importing what we know of a Mac machine 
from outside the particular domain of discourse into 
it? \\
\indent Therefore, it seems right that 
(6) and (14) are indistinguishable  
in the particular domain of discourse 
if we suppose the modern existential import 
convention. 
And (6) under the supposition would make no more 
sense than (14). However,  
(6) certainly is a reasonable natural expression, which 
then counterevidences the 
convention. \subsubsection{Square of opposition}  
Square of opposition has its root in Aristotle's 
Organon.\footnote{Freely 
    available at http://classics.mit.edu/Browse/browse-Aristotle.html} The Traditional Square of Opposition 
(hereafter TSO)
is illustrated in Figure 1. 
\begin{figure}[!h]
  \begin{center}  
    \begin{tikzpicture}      
        \node (a) at (-2, 1) {A}; 
        \node (b) at (2, 1)  {E};
        \node (c) at (-2, -1) {I}; 
        \node (d) at (2, -1) {O};   
        \node (e) at (-3, 0) {subalterns};  
        \node (f) at (3, 0) {subalterns};  
        \node (g) at (0, 1.5) {contraries}; 
        \node (h) at (0, -1.5) {subcontraries}; 
        \node (j) at (0, -3) [align=left] {A: Omne S est P. (Every 
            S is P.)\\
            E: Nullum S est P. (No S is P.)\\ 
            I: Quoddam S est P. (Some S is P.)\\ 
            O: Quoddam S non est P. (Some
            S is not P./Not every S is P.)
        };  
        \draw[color=gray] (a) -- (b); 
        \draw[color=gray] (c) -- (d); 
        \draw[->,color=gray] (a) -- (c);  
        \draw[->,color=gray] (b) -- (d);  
        \draw[color=gray] (a) -- (d); 
        \draw[color=gray] (b) -- (c);  
        \node (i) at (0, 0) {contradictions}; 
        \hide{ 
      \draw[thick] (-2, 0) -- (-2, -1);   
      \node (a) at (-1, 0) {$\text{Brooch}$}; 
      \node (b) at (0.48, 0) {$\text{Shirt}$}; 
      \node (c) at (2, 0) {$\text{Hat}$};  
      \node (d) at (-1, -1) {$\text{Large}$}; 
      \node (e) at (0.5, -1) {$\text{Yellow}$};  
      \node (f) at (2, -1) {$\text{Hat}$};  
      \draw [->] (d) -- (a); 
      \draw [->] (d) -- (b); 
      \draw [->] (e) -- (a); 
      \draw [->] (e) -- (b); 
      \draw [->] (e) -- (c); 
      \draw [->] (f) -- (c); 
      \node (g) at (-3, 0) {$\text{(Object)}$};   
      \node (h) at (-3, -1) {$\text{(Attribute)}$};  
      \node (i) at (-3, -0.5) {(\reflectbox{\rotatebox[origin=c]{-90}{$\gtrdot$}})};   
  }
    \end{tikzpicture} 
  \end{center}
  \caption{The Traditional Square of Opposition}
  \label{first_figure}
\end{figure}
While A implies I and E implies O 
in the figure, 
the subaltern relations do not carry over to 
the Fregean logic because 
it does not assume existential import 
of a universal expression. 
Contraries do not remain in the Fregean logic, either. 
According to an entry in the Stanford 
Encyclopedia of Philosophy on the Square of Opposition
\cite{Parsons15}, 
the modern revision was motivated by 
the following reason, which 
I quote from the 
entry word for word:  \\
\begin{adjustwidth}{1cm}{}
Suppose that `S' is an empty term; it is true 
of nothing. Then the I form: `Some S is P' is 
false. But then its contradictory E form: 
`No S is P' must be true. But then the subaltern 
O form: `Some S is not P' must be true. But that 
is wrong, since there aren't any Ss. \\
\end{adjustwidth} 
Now, the O form, if Aristotle's descriptions  
in On Interpretation (Cf. Part 6 onwards in 
the Internet Classics Archive) are to be followed properly, 
should be regarded more as 
`Not every S is P.'. 
Some works in the literature use 
this observation for defence of TSO, 
saying that if O is not `Some S is not P.' but 
`Not every S is P.' there 
is little problem in the formulation of TSO. 
They assume existential import in affirmative expressions
only. \\ 
\indent But this popular defence which 
goes on to justify TSO by absence or presence 
of existential import in certain expressions 
is not very accurate. 
Nowhere 
in Organon is it mentioned that a sentence 
which contains a term that may not 
denote anything could be judged of truth and 
falsehood. In fact, On Interpretation 
seems to offer a quite different 
view -  that if a sentence is such that 
its truth value can be judged, it must be the case 
that there is no such phantom term that 
denotes nothing. 
What I am now, and not before (for 
the viewpoint  assumed 
in \cite{Arisaka15Gradual}  
is no different from one of the popular 
interpretations), 
conscious of, largely thanks to the analysis 
I conducted in \textbf{There should be 
    no phantom terms without existential import}, is that it is almost 
pointless to suppose lack of existential 
import, not just of a term used in the subject phrase 
of a sentence, but also of any term that appears 
in the sentence. 
When Aristotle considers the truth value of 
a sentence describing a future event: 
of whether a sea fight would or would not 
take place tomorrow, Aristotle is not 
merely alluding to intuitionistic reasoning (under 
which a proposition is true just when 
it can be constructively found to be true). 
Consider the following as found in Part 9 of On 
Interpretation: \\
\begin{adjustwidth}{1cm}{} 
    ...[W]e see that both deliberation
    and action are causative with regard to 
    the future, and that, to speak more generally,
    in those things which are not continuously
    actual there is potentiality in either direction.
    Such things may either be or not be; 
    event also therefore may either take place 
    or not take place. There are many 
    obvious instances of this. It is possible 
    that this coat may be cut in half, and 
    yet it may not be cut in half, but wear 
    out first. So it is therefore with 
    all other events which possess this kind of 
    potentiality. It is therefore plain that
    it is not of necessity that everything 
    is or takes place; but in some instances
    there are real alternatives, in which case 
    the affirmation is no more true and 
    no more false than the denial;... 
    Let me illustrate. A sea-fight must 
    either take place tomorrow or 
    not, but it is not necessary that it should 
    take place tomorrow, neither is it necessary
    that it should not take place, yet it is 
    necessary that it either should or should not 
    take place tomorrow. Since propositions
    correspond with facts, it is evident 
    that when in future events there 
    is a real alternative, and a potentiality in 
    contrary directions, the corresponding
    affirmation and denial have the same character. ...
    [I]n the case of that which exists potentially, 
    but not actually, the rule which 
    applies to that which exists actually does not 
    hold good. \\\\
\end{adjustwidth}   
In plain terms it says that the proposition: `This coat (in a domain of discourse X) will 
be cut in half (in another domain of discourse Y 
which is not X).', due to 
possibilities that 
Y may no longer retain the coat in X
that could be cut in half, cannot be 
assigned a truth value. Neither  
can to the denial of the proposition: 
`This coat (in X) 
will not be cut in half (in Y).'.\footnote{
    A proposition A contradictory to 
    a proposition B assumes the same 
    subject as A does. Cf. On Interpretation. 
    Hence if 
    a proposition is about {\it This coat} 
    (although an indivisible individual is not assumed 
    in predicate 
    gradual logic), then 
    `This coat will not be cut in half.' is indeed 
    the proposition contradictory to 
    `This coat will be cut in half.'.}  
What this implies is the following. Suppose  
we have:  
\begin{enumerate}[label=\arabic*)]  
        \setcounter{enumi}{2} 
    \item Socrates is ill in reality. 
    \item[iii)] $\forall x.(\Socrates(x) 
        \supset \illReal(x))$. \\
    \item Socrates is not ill in reality. 
    \item[iv)] $\forall x.(\Socrates(x) 
        \supset \neg\ \illReal(x))$. 
\end{enumerate}
If it were the case that Socrates could be 
an empty term not denoting anything, 
then it must be the case that neither 3) nor 
4) is true or false, since  
non-being which is not cannot be judged, first of all,
to be ill or not to be ill. 
Therefore, when Aristotle 
mentions in Part 10 of Categories: \\
\begin{adjustwidth}{1cm}{}
    ...[I]f Socrates exists, one of the two 
    propositions `Socrates is ill', `Socrates 
    is not ill' is true, and the other false. 
    This is likewise the case if he does not 
    exist; for if he does not exist, 
    to say that he is ill is false, to 
    say that he is not ill is true.   \\
\end{adjustwidth} 
it is not intention of Aristotle's to question 
existential import of Socrates in 
the domain of discourse in which 
Aristotle is delivering all these discussion. 
It is indisputable that Socrates is  
in the domain of discourse. It is only 
whether the Socrates which is in the domain 
of discourse is also in Aristotle's mental representation 
of reality, i.e. a different domain 
of discourse, that 
is being questioned. Hence the modern dispute 
against the truth assignment by Aristotle 
to 3) and 4) 
(falsehood to the former and truth to the latter) 
in case Socrates does not exist, that  
both should be vacuously true, 
is quite irrelevant. Within Aristotle's discourse,
there is no such theory 
of `vacuous truth'. 
In light of Aristotle's  judgement, 
3) is false if Socrates is not in reality; 
and if 3) is false, 4) is naturally true. 
If 4) is false, then Socrates must be ill,
which makes 3) true. 
\\
\indent Let us put things together. In 
all of A, E, I, and O, there is existential import 
in S if the expressions should bear 
any truth value at all. 
Therefore 
the argument that motivated the modern revision 
on TSO is off the mark. The popular defence 
for TSO was defending against a possibility 
which never arises, and, in so trying to defend, 
themselves resorted to introduction of  
existential import in affirmative statements while 
not in negative statements, which is no less 
controversial.
In fact, it does not matter
if O is `Not every S is P.' or `Some S is not P.', 
for, so long as they bear 
a truth value, S is not a phantom term in 
the domain 
of discourse discussing them.
In case 
S or any term in a sentence is a phantom term in the domain of discourse it bears no truth value.   \\
\indent This is a very important verification. 
Previously, by a few discerning scholars 
\cite{Hart51,Strawson52}
on Aristotle's Organon, this result was 
anticipated, but their 
treatises left open wide possibilities  
of opposition.  The analysis that I have conducted 
above 
corroborates the anticipated result reasonably 
beyond doubt. 
\subsection{On attributed predicates} 
    I stated under the last header of 1.3 that  
   $\forall x \exists y \lessdot x.([\farmer \gtrdot 
   \donkey(y) \wedge \own(^\circ x, y)](x) \supset 
   \mathsf{X})$ expresses 
   the assertion: 
   for every farmer who owns a donkey, 
   it is the case that $\mathsf{X}$.   
   I stated that this may appear paradoxical 
   at first sight, because the formal 
   expression seems to say that there is 
   $y$, and then $x$, while 
   $y$ must be an attribute of $x$. 
   Let me dispel the ostensible  paradox. \\
   \indent When 
we talk about anything, say a car, we indeed talk about it. But 
to talk about it, we do not need know its  
details: we do not need know the size of the car's 
engine to be considered to be knowing the `car'; 
and we do not need know how 
many litres of diesel it loads. 
On the contrary, by a car we mean some 
exemplary representation of it (Kant's {\it Form} 
\cite{Kant08}) that covers only certain details 
of it but no further. 
When we then talk about 
its parts, we have in mind some exemplary car, 
and we probe its inside, again in mind. The parts may not 
be found in the particular exemplary car. 
But in that case our focus 
shifts from the car to the next car that 
is just as equally exemplar as the first one. 
And so an exemplary 
`a car with the parts' is being determined in three steps: 
(A) an exemplary `a car' is thought of; 
(B) the particular exemplary car that we 
thought of is looked in; (Ca) if 
the particular exemplary car is indeed with 
the parts, then it is an exemplary `a car with 
the parts'; (Cb)  
if not, the exemplary `a car' was 
inconsistent with the described 
attribute `the parts', in which 
case the thinker has to choose 
another equally exemplary `a car'. 
This 
is how our mind processes concepts, 
and $\forall x \ \exists y \lessdot x. 
[\farmer \gtrdot \exists y.\donkey(y) \wedge 
\own(^\circ x, y)](x)$ 
precisely models the cognitive process: 
(A) a constituent denoted by $a$ in the outermost 
domain of discourse takes place 
of $x$. It is irrelevant to $\farmer$ 
if it is not a farmer in that 
particular 
domain of discourse; (B) By the principle 
of recognition cut-off \cite{Arisaka15Gradual}, however, 
the domain of discourse does not distinguish 
a farmer from another any more specifically 
than must be in the domain of discourse.   
Hence, if that which is denoted by $a$ is a farmer in the outermost 
domain of discourse, its attributive domain 
needs to be looked in to see 
what farmer it is; (Ca) if  
it indeed owns a donkey, then it 
falls under $\exists y \lessdot a.[\farmer \gtrdot 
\donkey(y) \wedge \own(^\circ a, y)]$; 
(Cb) if not, then it is inconsistent 
with the described attribute 
$\donkey(y) \wedge \own(^\circ a, y)$. \\
\indent Hence existence 
of some constituent to take place of $y$ can be justifiably discussed 
under the hypothesis that some constituent 
to take place of $x$ is a farmer at 
the outer (main) domain of discourse. 
\subsubsection{References to composite concepts}    
There may be many ways to interpret 
$\gtrdot$. Of the two interpretations 
introduced in \cite{Arisaka15Gradual}, 
the first one considers that 
if we have $P \gtrdot Q$ and $R \gtrdot Q$, 
then it is the same as if we had 
$P \wedge R \gtrdot Q$. In this interpretation 
it is attributes that are prioritised over 
concepts that have them. But let us say that 
 we like to express: 
{\it There is a small rabbit, and there 
    is a small elephant.} 
Under the particular interpretation 
it would follow that both the rabbit and 
the elephant fall under a certain 
size range to be considered small, and the elephant 
becomes large as soon as it is as large as 
a large rabbit. But in this case, or 
perhaps when, in general, we deal with natural language 
expressions, another interpretation under which  
$(\rabbit \gtrdot \textsf{small}) 
\wedge (\elephant \gtrdot \textsf{small})$ 
means that $\rabbit$, relative to all 
its specific instances, is $\textsf{small}$, 
and that $\elephant$, relative to 
all its specific instances, is $\textsf{small}$, 
should be more appropriate. Now, 
it is possible that the main concept 
to the left of $\gtrdot$ is not 
a single entity. 
\begin{enumerate}[label={(\protect\perhapsstar\arabic*)}] 
        \setcounter{enumi}{14}
    \item There is a pair 
        of a farmer and a teacher 
        who collectively own a donkey.  
        \setcounter{enumi}{14}
    \staritem $\exists x. 
        [\farmer \wedge \teacher \gtrdot   
        \exists y.\own(^\circ x, y) \wedge 
        \donkey(y)](x)$. 
\end{enumerate} 
The predicate represents a composition of 
a farmer and a teacher who collectively own 
a donkey. The expression is not necessarily the same as: 
{\it There is a farmer who owns a donkey, 
    and there is a teacher who owns the same donkey.}, 
because it could be that
neither of them could claim sole ownership 
of the donkey 
to himself/herself. 
But the real matter  
is that the parts of $\farmer \wedge \teacher$  
go inaccessible in the above formalisation. Let us consider 
another sentence to follow (15): 
\begin{enumerate}[label=(\arabic*)] 
        \setcounter{enumi}{15}
    \item The farmer buys a new farm. 
\end{enumerate}
There is no way that we could sensibly 
reuse $x$ to refer just to the farmer here. This tells 
that an object may be divisible not just by attributes
in other domains of discourse 
but also by entities in the main domain of discourse. 
I introduce the following facility to 
also apply quantification inside the collective 
predicate in the main domain of discourse. 
\begin{enumerate}
    \item[] $\exists x \ \exists z\ \exists w.
        [\farmer(z) \wedge \teacher(w) \gtrdot 
        \exists y.\own(^\circ x, y) \wedge 
        \donkey(y)](x) 
        \wedge \exists u.[\farm \gtrdot \new(^\circ u)](u) \wedge 
        \buy(z, u)$.
\end{enumerate} 
It formalises (15) followed by (16). 
\section{Formalisation}           
{\it Preliminary remarks.} 
To both readers who have come here 
by completing Section 2 and those who 
may have jumped from the middle of Section 1, 
may I state that if at any point 
you are unsure of notations, 
then you should not hope to find clues 
after that point, for 
nothing that comes later will help 
understand the concept you may stumble upon - no, 
you should read from the beginning of this 
section again to find what you did not take 
a note. Then you will find your answers.  
\hide{ 
This statement is simply to 
spare myself absurd reviews such as 
``not all  notations are defined'', 
or ``no intuition is given'' - no, 
all notations are going to be defined 
strictly in a linear order, and 
intuition has been given already in Section 1 and 
Section 2. If you need more intuition than 
minimally provided in Section 3, you should go back to 
1.3, should move to Section 2 and should then come back 
again to Section 3, and should repeat this process 
as many times as necessary until understanding seeps in.  
}
This may be a good place, incidentally, to note one thing 
peculiar to my writing. Due partly to my dislike about  
indiscriminately using `and' and `or'
in both natural and formal contexts,
and due partly to the need for eventually dealing with 
three truth values: $\mathbb{T}$ for logical 
truth, $\mathbb{F}$ for logical falsehood, 
and $\mathbb{U}$ for undecided, 
I will more likely - though I do not give a guarantee 
that I will always do - use $\andC$, $\orC$ and $\notC$, 
whenever the context in which such 
words are used strongly indicates 
truth-value comparisons. `if...then' or `iff' to mean 
`if and only if' are not so meticulously 
marked. I define for $\mathbb{U}$ that: 
\begin{enumerate} 
    \item $(\mathbb{U}\ {\andC}\ \mathbb{T}) = (\mathbb{U}\ {\orC}\ \mathbb{T}) = \mathbb{U}$. 
    \item $(\mathbb{U}\ {\andC}\ \mathbb{F}) = 
        (\mathbb{U}\ {\orC}\ \mathbb{F}) = \mathbb{U}$. 
    \item $\notC\ \mathbb{U} = \mathbb{U}$.  
\end{enumerate} 
Assume that $\andC$ and $\orC$ are associative 
and commutative. The three meta-notations 
follow the semantics 
of classical conjunction, disjunction and 
negation when an expression does not involve $\mathbb{U}$. In short $\mathbb{T}, \mathbb{F}$ 
and $\mathbb{U}$ are Bochvar-three-values \cite{Bochvar37}. \noindent Assume that $\andC$ and $\orC$ are associative 
and commutative. The three meta-notations 
follow the semantics 
of classical conjunction, disjunction and 
negation when an expression does not involve $\mathbb{U}$. In short $\mathbb{T}, \mathbb{F}$ 
and $\mathbb{U}$ are Bochvar-three-values \cite{Bochvar37}. When I mention some 
predicate $\mathbb{P}$, not, however, of 
predicate gradual logic, 
in such a way as not to explicitly mention 
either of $\mathbb{U}, \mathbb{T}$ and $\mathbb{F}$; 
e.g. ``$\mathbb{P}$ iff ...'', I mean to really say
``$\mathbb{P}$ is $\mathbb{T}$ iff ...''. 
\\
\textbf{Materials {\it not} to be covered}   
\begin{enumerate} 
    \item Proof theory: while I present 
        semantics that is not 
        finitely restricted,
        I will not show consistency 
        in infinite cases. 
        Proof theory 
        is being developed in another 
        ongoing work, and consistency 
        relative to ZFC  consistency 
        in infinite cases will be dealt with there. 
    \item Non-nullary function symbols and 
        nullary predicates:   
        they are not needed for the objective 
        of this paper. 
    \item Modalities and generalised quantifiers:  
         dealing with every class of 
         natural expressions is not 
         an objective. 
\end{enumerate}
{\it End of preliminary remarks.} $\Box$\\\\ 
\indent Firstly, I assume 
a set of logical connectives $\{\top_0, 
    \bot_0, \forall_1, \exists_1, \neg_1, \wedge_2, \vee_2, \supset_2, 
    \gtrdot_2\}$. 
The subscripts indicate  
their arity. $\neg$ binds the strongest, 
$\wedge$ and $\vee$ bind the second strongest,
$\gtrdot$ the third strongest, 
$\forall$ and $\exists$ the fourth strongest, 
while $\supset$ binds the weakest. 
Secondly, I assume 
parentheses, brackets and punctuation symbols. 
Thirdly, I assume a set $\mathcal{X}$ of   
an uncountably infinite number of {\it basic variables}. 
Fourthly, I assume a symbol $\simeq$ 
for {\it surface equality} and a symbol = for 
{\it equality}. Differences between the two 
symbols 
will become clear by and by. And lastly, 
I assume $\lessdot$ also, which 
is the inverse symbol of $\gtrdot$, but 
which is only used within terms, to be defined 
shortly. 
Any of these that 
I have just mentioned is treated as a logical symbol. \\
\indent In addition, I assume: (A) a set 
$\mathcal{A}$ of an at most  countable number of 
{\it basic domain symbols}; 
 and (B) a set $\mathcal{P}$ of 
an at most countable number of 
distinct {\it basic predicate symbols} with arity greater 
than or equal to 1. Any of these is 
treated as a non-logical symbol. \\
\indent The language for predicate gradual logic, 
$\mathcal{L}$, 
comprises these symbols. 
For convenience, I will denote 
any basic variable by $x$ with or without a subscript, 
any basic domain symbol by $a$ with or without a subscript, 
and any basic predicate symbol by $p$ with or without a subscript. 
 A term body, which 
I denote by $\tstar$ with or without 
a subscript, and  
a term, which I denote by $t$ with or without 
a subscript and/or a superscript, 
are recursively and simultaneously defined:\\
\textbf{Term body} 
\begin{enumerate}   
    \item A sequence of 
        terms $t_1.\cdots.t_n$ 
        for $n \ge 1$.  
\end{enumerate} 
\textbf{Term}  
\begin{multicols}{2} 
\begin{enumerate} 
    \item {\small A basic variable or a basic 
            domain symbol. } 
        \hide{ 
    \item $t \prec \tstar$ 
        for some term $t$ and some term body $\tstar$.  
    }
    \item {\small $t \lessdot \tstar$ 
            for some term $t$ and some term body $\tstar$.} 
        \hide{ 
    \item $x \prec \tstar$ 
        for some basic variable $x$ 
        and some term body $\tstar$. 
    \item $a \prec \tstar$ 
        for some basic domain symbol $a$ 
        and some term body $\tstar$. 
    \item $x \lessdot \tstar$ 
        for some basic variable $x$ 
        and some term body $\tstar$. 
    \item $a \lessdot \tstar$ 
        for some basic domain symbol $a$ 
        and some term body $\tstar$.  
    \item $t_1 \prec t_2$ for 
        some terms $t_1$ and $t_2$. 
    \item $t_1 \lessdot t_2$ 
        for some terms $t_1$ and $t_2$.  
    }
\end{enumerate}   
\end{multicols} 
\noindent For presentational convenience, 
I use the notation: 
$\tstarLength{n}$ 
for some $n \ge 1$, 
to mean that $\tstar$ is 
a $n$-long sequence of terms,
i.e. $\tstar = t_1.\cdots.t_n$ 
for some terms $t_i$, $1 \le i \le n$.  
Further, I write 
$\tstarLength{n, i}$ for some 
$1 \le i \le n$ 
to refer to the $i$-th term in 
the sequence of terms; e.g. 
$\tstarLength{n, i} = t_i$ 
in case $\tstarLength{n} = t_1.\cdots.t_n$.  
I call a term with a front superscript of 
$\circ$ with or without 
a subscript and/or a superscript 
a {\it connector}; e.g. $^\circ t^1_1$. 
I denote either a term or a connector 
by $\mathbf{t}$ with or without a subscript 
and/or a superscript, and call it
a {\it g-term}.   
\begin{definition}[Term leftmost position]  
    \normalfont
   I define $\leftmost$ to be a recursive function 
   taking as the input a term and outputting 
   either a basic variable or a basic domain 
   symbol:  
   \begin{multicols}{3} 
   \begin{enumerate} 
       \item $\leftmost(x) = x$. 
       \item $\leftmost(a) = a$. 
       \item $\leftmost(t \lessdot \tstar) = 
           \leftmost(t)$. $\Box$
   \end{enumerate} 
   \end{multicols}  
\end{definition} 
I define 
a {\it variable} 
to be a term $t$ with 
$\leftmost(t) \in \mathcal{X}$.
\hide{Definition of domain symbols omitted.}  
\hide{ 
\noindent I define a {\it domain symbol} 
to be a term 
that contains no basic variables.  
\ryuta{Can probably omit this description? 
Check that these definitions do not appear 
until 3.2.} 
Likewise I define a domain body 
symbol to be a term body that contains no basic variables. 
I will use $\al$ with or without a subscript 
to refer to a domain body symbol; 
and $\alstar$ with or without a subscript 
to refer to a domain body symbol.\ryuta{I do not know.} \\
}
I simultaneously define 
a {\it predicate}: $P$ with or 
without a subscript, and a {\it formula}: 
$F$ with or without a subscript, 
to be any that matches the following descriptions.  \\
\textbf{Predicate} 
\begin{enumerate}  
    \item Any basic predicate in $\mathcal{P}$. 
    \item $\bigwedge_i^{l (\ge 2)}P_i(t^i_1, \ldots, t^i_n)$ such that 
         $P_i, 1 \le i \le l$ are $n$-ary predicates 
         having $n$ terms.  
     \item $P \gtrdot F$ such 
         that 
         $P$ is a $n$-ary predicate and 
         $F$ is a formula.     
\end{enumerate}
\textbf{Formula} 
\begin{enumerate}  
    \item $p(\term_1, \ldots, \term_n)$ or 
$\neg p(\term_1, \ldots, \term_n)$ 
        such that $p$ is a $n$-ary 
        basic predicate and $\term_1, \ldots, 
        \term_n$ are g-terms.  
        \item $\top$ or $\bot$. 
\item $t_1 = t_2$ or $t_1 \simeq t_2$ 
        such that $t_1$ and $t_2$ are terms.  
\item $P(t_1, \ldots, t_n)$  
        such that $P$ is a $n$-ary predicate  
        and $t_1, \ldots, t_n$ are terms.   
    \item $\neg F$ such that $F$ is a formula.  
    \item $\forall v.F$ or 
        $\exists v.F$ 
        such that:
        $v$ is a variable; 
        and $F$ is a formula. 
    \item $F_1 \wedge F_2$, 
        $F_1 \vee F_2$ or $F_1 \supset F_2$
        such that $F_1$ and $F_2$ are formulas.  
    \end{enumerate}  
  Notice from these definitions that 
  I do not consider, for instance, 
  $[\own \gtrdot \completely](^\circ a_1, a_2)$ 
  or $[\own(^\circ a_1, a_2) \wedge 
  \like(^\circ a_3, a_4)](^\circ a_5, a_6)$ 
  as a formula. A connector can 
  occur only 
  in a basic predicate which 
  does not form a collective predicate 
  (save implicitly with $\top$ by
  $p = (p \gtrdot \top)$). 
   \subsection{Free and bound variables}           
   Four definitions precede 
   the definition of free and bound variables. 
   But these four will be also referred to. 
   \begin{definition}[Subterm and Subvariable] 
       \normalfont  
   For a term body symbol $\tstar$, I define  
   $\Subterm(\tstar)$ to be a list of 
   all its sub-terms: 
   \begin{enumerate}   
       \item If $\tstar$ is a basic term, 
           then $\tstar$ is a sub-term 
           of $\tstar$. 
       \item If $\tstar$ is in the form 
           $t_1 \lessdot 
           \tstar_2$, 
           then: (A) all sub-terms of $\tstar_2$ 
           are sub-terms of $\tstar$; and 
           (B) if $t_3$ is a sub-term of $t_1$, 
           then $t_3 \lessdot \tstar_2$ is 
           a sub-term of $\tstar$.   
           \begin{itemize}
               \item Any in group (A)  
                   appears before 
                   any in group (B) 
                   in $\Subterm(\tstar)$. 
               \item For each $\Subterm(\tstar_2)$ 
                   and $\Subterm(t_3)$, 
                  and for any $\tstar_a, \tstar_b$ 
                  in the list, 
                  if $\tstar_a$ appears before 
                  $\tstar_b$, then 
                  the order is preserved in $\Subterm(\tstar)$. 
            \end{itemize}
       \item If $\tstar$ is in the form
           $t_1.\cdots.t_n$, 
           then any sub-term  
           of $t_i, 1 \le i \le n$ 
           is a sub-term of $\tstar$.   
           \begin{itemize}
               \item 
           For $1 \le i < j \le n$, 
           any sub-term 
           of $t_i$ appears before 
           all sub-terms of $t_j$ in  
           $\Subterm(\tstar)$.  
       \item 
           For any $1 \le i \le n$, 
           and for any $\tstar_a, \tstar_b 
           \in 
           \Subterm(t_i)$, if 
           $\tstar_a$ appears before $\tstar_b$ 
           in $\Subterm(t_i)$, 
           then the order is preserved 
           in $\Subterm(\tstar)$. 
       \end{itemize} 
       \item Any others, $\tstar$ does not 
           have any sub-term. 
   \end{enumerate} 
   I define {\small $\Subvariable(\tstar)$} for 
   a term body symbol to be 
   all variables in {\small $\Subterm(\tstar)$} order-preserving.  $\Box$
   \end{definition} 
   \begin{definition}[Occurrence of a variable 
       in a formula]   
       \normalfont 
       Let $S$ be either a term body 
       or a formula. 
Let $\match$ be a predicate taking as inputs 
       a variable and $S$ of $\mathcal{L}$. 
       The definition is recursive.  
       \begin{enumerate} 
           \item  
               $\notC\ \match(v, S)$ 
               for some symbol $S$  
               iff there 
               are no descriptions
               below by which  
               $\match(v, S)$. 
         \item For any $x_1, x_2 \in \mathcal{X}$, $\match(x_1, x_2)$ iff $x_1$ is $x_2$.
    \item $\match(v, 
        \tstarLength{n})$ iff 
        $\match(v, 
        \tstarLength{n, i})$ 
        for some $1 \le i \le n$. 
    \item $\match(x, t \lessdot \tstarLength{n})$ 
        iff  $\match(x, \tstarLength{n,i})$ 
        for some $1 \le i \le n$. 
    \item $\match(v_1 \lessdot \tstarLength{n}_1, 
        t \lessdot \tstarLength{m})$ iff one 
        of the following two  holds true. 
        \begin{enumerate}[label=(\Alph*)]  
            \item $\match(v_1 \lessdot \tstarLength{n}_1,
                \tstarLength{m, i})$ 
                for some $1 \le i \le m$.  
            \item When the following 
                three conditions hold true. 
        \begin{enumerate}[label=(\Alph*\Alph*)] 
            \item $n = m$. 
    \item $\match(\tstarLength{n,i}_1, \tstarLength{m,i})$ 
        for each $1 \le i \le n$.  
    \item 
        $\match(v_1, t)$.    
\end{enumerate}  
\end{enumerate} 
         \item $\match(v, p(\term_1, \ldots, \term_n))$ 
             iff  
              there exists $\term_i, 1 \le i \le n$, 
              such that $\match(v, \term_i)$.    
          \item $\match(x, [\bigwedge^{l (\ge 2)}_{i=1} P_{i}(t^{i}_1, 
              \ldots, t^{i}_n)](t_1, \ldots, t_n))$ 
               iff there exists 
               some $t_j, 1 \le j \le n$, 
               or $t^k_j, 1 \le j \le n, 1 \le k \le l$
               such that 
               $\match(x, t_j)$ $\orC$
               $\match(x, t^k_j)$.  
           \item $\match(v_1 \lessdot \tstarLength{m}_1,
               [\bigwedge_{i=1}^{l (\ge 2)} 
               P_i(t_1^i, \ldots, t^i_n)](t_1, 
               \ldots, t_n)$ iff one of 
               the conditions below satisfies: 
               \begin{enumerate}[label=(\Alph*)]    
\item There exists 
                       some  $1 \le k \le n$ 
                       such that $\match(v_1 \lessdot
                       \tstarLength{m}_1, t_k)$. 
                   \item There exists 
                       some $1 \le i \le l$ 
                       and some $1 \le k \le n$ 
                       such that 
                       $\match(v_1 \lessdot 
                       \tstarLength{m}_1, t^l_k)$. 
                    \item 
                        There exists 
                        some $1 \le i \le l$ 
                        such that 
                        $\match(v \lessdot 
                        \tstarLength{m}_1, 
                        P_i(t_1^i, \ldots, 
                        t_n^i))$. 
       \end{enumerate} 
    \item $\match(x, [P \gtrdot F](t_1, \ldots,t_n))$ 
        iff: (A) there exists 
               some $t_i, 1 \le i \le n$, such that 
               $\match(x, t_i)$; 
               or (B) 
               $\match(x, P)$ (provided $P$ is a formula). 
        \item $\match(v \lessdot \tstarLength{m}, 
            [P \gtrdot F](t_1, 
            \ldots, t_n))$ iff one of the following 
            conditions satisfies: 
            \begin{enumerate}[label=(\Alph*)] 
                \item $\match(v \lessdot 
                    \tstarLength{m}, t_i)$ for some
                    $1 \le i \le n$. 
                \item When 
            the following 
            three conditions hold true:  
            \begin{enumerate}[label=(\Alph*\Alph*)] 
                \item 
             $n = m$. 
         \item  
             $\match(\tstarLength{m,i}, t_i)$ for every 
            $1 \le i \le m$. 
        \item 
            $\match(v, F)$. 
    \end{enumerate}    
\item  $\match(v \lessdot \tstarLength{m}, 
    P)$ (provided $P$ is a formula). 
\end{enumerate} 
           \item $\match(v, \neg F)$ 
               iff $\match(v, F)$.  
           \item $\match(v_1, \forall v_2.F)$ iff 
               $\match(v_1, v_x)$ for 
             $v_x \in \Subvariable(v_2) 
             $ excluding $v_2$  
             $\orC$ $\match(v_1, F)$. 
      \item $\match(v_1, \exists v_2.F)$ iff  
          $\match(v_1, v_x)$ for $v_x \in 
          \Subvariable(v_2)$ excluding $v_2$ $\orC$  
          $\match(v_1, F)$.  
      \item $\match(v, F_1 \wedge F_2)$ 
          (or $\match(v, F_1 \vee F_2)$ 
          or $\match(v, F_1 \supset F_2)$) 
          iff 
          $\match(v, F_1)$ $\orC$ 
          $\match(v, F_2)$.  
      \item $\match(v, t_1 = t_2)$ 
          (or $\match(v, t_1 \simeq t_2)$) 
          iff $\match(v, t_1)$ $\orC$ $\match(v, t_2)$. 
     \end{enumerate}
     Then I say that  
     a variable $v$ 
     occurs in a formula $F$ iff 
     $\match(v, F)$. 
      $\Box$ 
   \end{definition} 
   \noindent For convenience,  
   let us assume  that 
   $\tbullet$ with or without 
   a subscript is either 
   $\epsilon^\dagger$, 
   $(\tstar)^\dagger$ 
   or else $((\ldots((\tstar_1)^\dagger \lessdot \tstar_2) 
   \lessdot \cdots ) \lessdot 
   \tstar_{n-1})\lessdot \tstar_n$, 
   containing one and only one 
   constituent that comes with 
   superscript $\dagger$.  
   I call any of them 
   a {\it marked term body}.  
   \begin{definition}[Marked term body concatenation
       and extraction]   
       \normalfont I define $\oplus$ to be  
       a function that 
       takes as inputs a term body 
       and a marked term body 
       and that outputs a marked term body.\footnote{Two
           definitions may suffice, 
           the second and the third 
           items presented together, but 
           these three are more perspicuous.} 
       \begin{enumerate}
           \item $\tstar_1 \oplus \epsilon^\dagger 
               = (\tstar_1)^\dagger$. 
            \item $\tstar_1 \oplus   
                (\tstar_2)^\dagger 
                = (\tstar_1)^\dagger \lessdot 
                \tstar_2$ for 
                a term body $\tstar_2$. 
            \item $\tstar_1 \oplus 
                ((\ldots((\tstar_2)^\dagger 
                \lessdot \tstar_3) \lessdot 
                \cdots) \lessdot \tstar_{n-1}) 
                \lessdot \tstar_n 
                = 
                ((\ldots( ((\tstar_1)^\dagger 
                \lessdot \tstar_2) 
                \lessdot \tstar_3) \lessdot 
                \cdots) \lessdot \tstar_{n-1}) 
                \lessdot \tstar_n$ 
                for term bodies 
                $\tstar_2, \ldots, \tstar_{n}$, 
                $3 \le n$. 
       \end{enumerate}       
    \hide{ 
   I define $\oplus$ to be a function 
   that takes as inputs an object body and that 
   outputs a marked object body. 
   \begin{enumerate} 
       \item $\ostar_1 \oplus \epsilon^{\dagger} 
           = (\ostar_1)^\dagger$. 
       \item $\ostar_1 \oplus (\ostar_1)^\dagger
           = (\ostar_1)^\dagger \lessdot 
           \ostar_2$ for an object body $\ostar_2$. 
       \item $\ostar_1 \oplus ((\cdots((\ostar_2)^\dagger
           \lessdot \ostar_3) \lessdot \cdots) 
           \lessdot \ostar_{n-1}) \lessdot 
           \ostar_n = 
           ((\ldots(((\ostar_1)^\dagger \lessdot 
           \ostar_2) \lessdot \ostar_3) \lessdot 
           \cdots) \lessdot \ostar_{n-1}) 
           \lessdot \ostar_n$ for 
           object bodies $\ostar_2, \ldots, \ostar_n, 
           3 \le n$. 
   \end{enumerate}  
    }
   I define $\tau_l$ to be a function 
   that takes as the input a marked term body and 
   that outputs a marked term body, 
   and $\tau_r$, also a function that takes 
   as the input a marked term body 
   and that outputs a term body: 
   \begin{enumerate}  
       \item $\tau_l((\tstar_1)^\dagger) = \tstar_1$. 
       \item $\tau_l((\tstar_1)^\dagger \lessdot 
           \tstar_2) = \tstar_1$. 
       \item $\tau_l(((\ldots(((\tstar_1)^\dagger 
           \lessdot \tstar_2) \lessdot 
           \tstar_3) \lessdot 
           \cdots) \lessdot \tstar_{n-1}) \lessdot 
           \tstar_n) = \tstar_1$. \\
       \item $\tau_r((\tstar_1)^\dagger) = \epsilon^\dagger$. 
       \item $\tau_r((\tstar_1)^\dagger \lessdot  
           \tstar_2) = (\tstar_2)^\dagger$. 
       \item $\tau_r(((\ldots(((\tstar_1)^\dagger 
           \lessdot \tstar_2) \lessdot \tstar_3) 
           \lessdot \cdots) \lessdot 
           \tstar_{n-1}) \lessdot 
           \tstar_n) = 
           (\ldots(((\tstar_2)^\dagger \lessdot 
           \tstar_3) \lessdot \cdots) \lessdot 
           \tstar_{n-1}) \lessdot \tstar_n$. $\Box$
   \end{enumerate} 
\end{definition}   
    It holds true that  $\tau_l(\tbullet) 
    \oplus \tau_r(\tbullet) = \tbullet$ 
    for a marked term body.  
  
   \begin{definition}[Projection 
       of marked term bodies]  
       \normalfont
I define $\delta$ to be a function 
      that takes a marked term body  
      and that outputs a term body 
      or $\epsilon$, such that: 
      \begin{enumerate}
          \item $\delta(\epsilon^\dagger) 
               = \epsilon$.  
           \item  $\delta((\tstar_1)^\dagger 
               \lessdot \tstar_2) 
                = \tstar_1 \lessdot \tstar_2$.  
            \item $\delta(((\ldots((\tstar_1)^\dagger 
                \lessdot \tstar_2) \lessdot 
                \cdots) \lessdot \tstar_{n-1}) 
                \lessdot \tstar_n) 
                = ((\ldots(\tstar_1 \lessdot 
                \tstar_2) \lessdot 
                \cdots )\lessdot 
                \tstar_{n-1}) \lessdot \tstar_n
                $. $\Box$ 
      \end{enumerate}  
  \end{definition} 
   \begin{definition}[Free and bound variables]  
       \normalfont
      Let $\Free$ and $\Bound$  
      be functions taking 
      some marked term body and 
      a formula 
      and outputting a set of variables.   
      The definition is recursive.  
      \begin{enumerate} 
              \hide{ 
    \item Any variable that occurs free 
        in a formula 
        is not bound in the formula. 
        Any variable that is bound 
        in a formula 
        does not occur free in the formula.   
    }
    \item  $\Free(\tbullet, \top) = 
        \Bound(\tbullet,
        \top) = \Free(\tbullet,\bot) = 
        \Bound(\tbullet, \bot) =
         \emptyset$. 
    \item 
                $\Free(\tbullet, t_1 = t_2)$ 
                                and similarly $\Free(\tbullet, 
                t_1 \simeq t_2)$ for some terms $t_1$ and $t_2$ 
                contain any variable 
                $\delta(v \oplus \tbullet)$ iff 
                $v$ occurs in $t_1$ or $t_2$. 
        The sets contain no other variables.   
        Meanwhile, 
        $\Bound(\tbullet, t_1 = t_2) = 
        \Bound(\tbullet, t_1 \simeq t_2) = \emptyset$. 
    \item 
                $\Free(\tbullet, 
                p(\term_1, \ldots, \term_n))$ 
                for some g-terms 
                contains any variable 
                $\delta(v \oplus t^{\bullet})$ 
                iff $v$ occurs in 
                $\term_i, 1 \le i \le n$. 
        The set contains no other variables.
        Meanwhile,  $\Bound(\tbullet,  
        p(\term_1, \ldots, \term_n)) 
        = \emptyset$.   
    \item  
                $\Free(\tbullet, 
                [\bigwedge_{i=1}^{l (\ge 2)}
                P_{i}(t^{i}_1,\ldots,t^{i}_n)](t_1, \ldots, t_n))$ 
                contains any variable 
                $\delta(v \oplus \tbullet)$ 
                if $v$ occurs in 
                $t_j, 1 \le j \le n$. 
                It  also contains 
                $\bigcup_{i=1}^l 
                \Free(t_1. \cdots .t_n 
                \oplus \tbullet, 
                P_{i}(t^{i}_1, \ldots, 
                t^{i}_n))$. 
                It contains no other variables.  
                Meanwhile, 
                {\small $\Bound(\tbullet,  
                [\bigwedge_{i=1}^{l (\ge 2)} P_{i}(t^{i}_1,\ldots,t^{i}_n)](t_1, \ldots, t_n)) = 
\bigcup_{i=1}^l \Bound(t_1.\cdots .t_n \oplus \tbullet, P_{i}(t^{i}_1, \ldots, 
t^{i}_n))$.} 
    \item  
                $\Free(\tbullet,  
                [P \gtrdot F](t_1, \ldots, t_n)) 
                = \Free(\tbullet,  
                P(t_1, \ldots, t_n)) 
                \cup \Free(t_1. \cdots .t_n \ \oplus \ 
                \tbullet, F)$. 
        Meanwhile, 
        {\small $\Bound(\tbullet,  
        [P \gtrdot F](t_1, \ldots, t_n)) 
        = \Bound(\tbullet, 
         P(t_1, \ldots, t_n)) 
         \cup \Bound(t_1. \cdots .t_n\ \oplus \ 
         \tbullet, F)$.} 
    \item $\Free(\tbullet,  
        \neg F) = \Free(\tbullet, F)$. And 
        $\Bound(\tbullet, \neg F) 
        = \Bound(\tbullet, F)$.    
    \item $\Free(\tbullet,  \forall v.F)  
        = (\Free(\tbullet, F) \cup 
        \{\delta(v_x \oplus \tbullet) \ | \ v_x \in \Subvariable(v)\ {\andC}\\ \leftmost(v_x) \not= 
            \leftmost(v) \})$. 
        Meanwhile,  
        $\Bound(\tbullet, \forall v.F) 
     = \Bound(\tbullet, F) \cup  
     \{{\delta(v_y \oplus \tbullet)}\ | \ 
         v_x \in \Subvariable(v) \ \andC\ 
         v_y = v_x[\leftmost(v_x) \mapsto 
         \leftmost(v)]\}$. Here, 
     $v_x[\leftmost(v_x) \mapsto \leftmost(v)]$ 
     is a variable that is almost exactly 
     $v_x$ except that $\leftmost(v_x)$ 
     is replaced by $\leftmost(v)$. 
     \item $\Free(\tbullet,  \exists v.F)  
         = \Free(\tbullet, \forall v.F)$. 
         And 
         $\Bound(\tbullet, \exists v.F) 
         = \Bound(\tbullet, \forall v.F)$. 
    \item  
        {\small $\Free(\tbullet,  
        F_1 \wedge F_2) = 
        \Free(\tbullet,  F_1 \vee F_2) 
        = \Free(\tbullet, F_1 \supset F_2)
        = \Free(\tbullet,  F_1) 
        \cup \Free(\tbullet,  F_2)$.} \\ 
{\small $\Bound(\tbullet,  
        F_1 \wedge F_2) = 
        \Bound(\tbullet,  F_1 \vee F_2) 
        = \Bound(\tbullet,  F_1 \supset F_2)
        = \Bound(\tbullet, F_1) 
        \cup \Bound(\tbullet,  F_2)$.} $\Box$ 
        \end{enumerate}    
\end{definition}  
\begin{definition}[Well-formedness]   
    \normalfont
    Let $\Subformula$ be a function 
       that takes a formula 
       and that outputs the set of 
       all the subformulas of the formula.  
       I assume that  
       any $p(\term_1, \ldots, \term_n)$, 
       $\top$, $\bot$, any $t_1 = t_2$ 
       or any $t_1 \simeq t_2$ has 
       only itself as its subformula, 
       but otherwise 
       I take it for granted that 
       readers are familiar with the notion 
       of a subformula of a formula; Cf. 
       \cite{Kleene52}. 
         Let $\pnode$ be a function that takes two formulas and that outputs a set of formulas such that
         $\pnode(F_1, F_2) =  
         \{[P \gtrdot F_3](\term_1, \ldots, 
             \term_n) \in 
             \Subformula(F_1) \ | \  
             F_2 \in \Subformula(F_3)\ 
             {\andC}\ 
             (\text{there exist no } P_x, F_4 
             \text{ such that } 
              [P_x \gtrdot F_4](\term_1, 
              \ldots, \term_m) \in 
              \Subformula(F_3)\ {\andC}\ 
              F_2 \in \Subformula(F_4))
         \}$.  
I say that a formula $F$ is well-formed  
    iff the following conditions hold true 
    of $F$. 
    \begin{description} 
        \item[(Term compatibility in equalities)]    
            $\compatible(F)$ which holds 
            true iff 
            the following satisfy for any subformula 
            of $F$ in either of the  
            forms: $t_1 = t_2$ or $t_1 \simeq t_2$: 
   \begin{enumerate}
       \item If $t_1$ is 
           either 
           some basic variable or 
           some basic domain 
           symbol, 
           then 
           $t_2$ is either 
           a basic variable or a 
           basic domain symbol. 
       \item If $t_1$ is  $t'_1 \lessdot \tstarLength{m}_1$ 
           for some term $t'_1$ and 
           some term body $\tstarLength{m}_1$, 
                   then 
           (A)  
           $t_2$ is $t'_2 \lessdot \tstarLength{m}_2$ 
           for some term $t'_2$ and 
           some term body $\tstarLength{m}_2$; 
           (B) $\compatible(t'_1 = t'_2)$; 
            and 
           (C) 
           $\compatible(\tstarLength{m,i}_{1} =
           \tstarLength{m,i}_{2})$ 
           for each $1 \le i \le m$.  
   \end{enumerate}    
   \hide{ 
   \item[(Term compatibility in surface equality)]  
       \ryuta{Complete.} 
      For any subformula of $F$ in the form 
      ${t_1 \simeq t_2}$, it holds true that  
      $\TScompatible(t_1, t_2)$, 
      where $\TScompatible$ is recursively 
      defined:  
      \begin{enumerate}[label=(\Alph*)] 
          \item If 
              $t_1$ is either some
              basic variable or 
              some basic domain symbol, 
              then $\TScompatible(t_1, t_2)$ 
              iff $t_2$ is either: 
              (AA) a basic variable; 
              (BB) a basic domain symbol; 
              or (CC) $t_2 = t' \lessdot t''$ 
              for some $t'$ and $t''$
              such that $\TScompatible(t_1, 
              t'')$.  
          \item wIf $t_1 = t'_1 \lessdot  
              \tstarLength{m}_1$
              for some term $t'_1$ and some 
              term body $\tstarLength{m}_1$, 
              then $\TScompatible(t_1, t_2)$
               iff 
     \end{enumerate}
 }  
 \hide{ 
   \item[(Attributive quantification compatibility)]  
       Let $\Subformula$ be a function 
       that takes a formula 
       and that outputs the set of 
       all the subformulas of the formula.   
       Let $\Outer$ be a function
       that takes two formulas and 
       that outputs a set of formulas 
       such that 
            $\Outer(F_1, F_2) = \{
                   F_3 \in \Subformula(F_1) \ | \ 
                   F_2 \in \Subformula(F_3)\}
                   $.   
      Let $\AQcompatible$ be a predicate 
      that takes a variable and a formula 
      such that 
      $\AQcompatible(F, v, F_1)$ iff, 
      if $F_1$ is a subformula 
      of $F$, then all the variables
      occurring in $v$ except for $v$ itself 
       do not occur free in $F_1$. 
       Then $F$ is said to satisfy 
       attributive quantification compatibility 
       iff 
       for any subformula 
       of $F$ in either of the forms: 
       $\forall v.F_1$ or $\exists v.F_1$, 
       it holds true that $\AQcompatible(F, v, F_1)$.    
   }
     \item[(Connector compatibility)]     
            $\Ccompatible(F)$ which holds 
            true 
      iff, for any basic predicate $p(\term_1, \ldots, \term_n)$  
      occurring as a subformula of $F$, 
      and for any $\term_i$, $1 \le i \le n$,  
      if $\term_i = {^\circ t_i}$ 
      for some term $t_i$, 
      then 
      for each $[P \gtrdot F_3](t'_1, \ldots, 
      t'_m) \in 
      \pnode(F, p(\term_1, \ldots, \term_n))$, 
      there exists $1 \le j \le m$ 
      such that $t'_j = t_i$. 
\item[(No free variables)]  
    $\Free(\epsilon^{\dagger}, F) = \emptyset$. 
         $\Box$ 
   \end{description}    
   \end{definition}  
   Connector compatibility says 
   that the use of 
   $^\circ t_i$ must be proper (Cf. 1.3): 
   it must have a corresponding term $t_i$.  
\subsection{Semantics}    
Let us define domains of discourse. 
Since I presume that any object 
is divisible by attributes 
it has, the domains must 
be nested ones. I have mentioned 
also that an object may be a collection 
of objects, so it can be also divisible 
by objects. The divisibility 
and the nesting then go in two directions. \\
\begin{adjustwidth}{1cm}{}
But, let us begin with some intuition 
on divisibility by attributes, and then 
on divisibility by objects. 
For a start readers could 
imagine a countable set $W$ of possible worlds. 
To make matters even simpler, let us for now forget about divisibility 
by objects. Readers could (for now) imagine that 
an object in predicate gradual logic is 
each such possible world. 
Usually it is the case that a possible 
world $w$ is a point in $W$ which is not further 
divisible. In predicate gradual logic  
$w \in W$ is a point {\it only in the particular
    set $W$}. That is, if we look into $w$, 
what appeared as a point in $W$ turns out to 
have extension, like in any fractal space. 
This should form primitive understanding 
of $\lessdot$ (and also $\gtrdot$), 
which is a relation between 
objects that are in one space and their attributes
that are in other attributive spaces 
of the space. This, however, 
is of course too coarse to 
capture domains of discourse in predicate gradual logic.
Rigorously, any combination of 
points in such a Kripkean space, 
or in fact any combination of sequences of points 
in any Kripkean spaces should be permitted to 
have such attributive 
Kripkean spaces in order that that be 
reflection of predicate gradual logic' objects 
and attributes. We have covered up to 
divisibility by attributes. Let us now 
add
divisibility by objects into this understanding. 
We no longer just have those points, 
but we can also refer to collections 
of those points as if they were 
points. Remember the entities in set theory, where
every one of them is a set. We are done 
with divisibility by objects. 
Now, to put things together, 
any collections of points, any sequence 
of them in one or more such spaces 
can have an attributive space, which 
is a proper understanding about 
the divisibility and the nesting of objects in predicate 
gradual logic. \\
\end{adjustwidth}  
\indent Incidentally, by an object I do not mean 
the usual indivisible entities lying in the external 
world. The postulate of such entities and the postulate 
that any reasoning would be possible 
of them are incompatible with my point of view \cite{Arisaka15Gradual}.\\
\indent Onto formal definitions now. 
I define a semantic structure 
to be a tuple ${(\mathcal{D}, \mathfrak{D}, 
    \mathfrak{P}, \mu, \mathcal{I}_{\mu}, \sqsubseteq, 
    \in^\star)}$. 
The components are as defined below.    
\begin{definition}[$\mathcal{D}$: Constituents 
    of domains of discourse]   
    \normalfont
    Let $\mathcal{D}^-$ be 
    an at most countable set. 
    Its elements 
    are referred to by $d$ with 
    or without a subscript.  
    Then $\mathcal{D}$ is 
    some subset of  
    $\mathcal{D}^- \times \mathbb{N}$. 
    I refer to any $(d, n) \in \mathcal{D}$ by $d^n$. 
    I define $\rho$ to be a projection 
    operator for $\mathcal{D}$ into $\mathcal{D}^-$ 
    such that: $\rho(d^n) = d$ if $d^n \in \mathcal{D}$.
    $\Box$ 
\end{definition}
\hide{ 
But before that, one thing I should note is that  
I will be liberal about the use of $=$. 
In some cases the symbol appeals to mean 
the equality in ordinary arithmetical sense: 
(some A) = (some B) means that A is B and B is A. 
For some ordered pairs it is formally defined 
to mean a certain thing. 
The convention is: 
if $=$ is used for ordered pairs for which 
a formal definition of $=$ is given, 
then the definition applies; otherwise, 
the ordinary arithmetical sense applies. 
} 
From here on I will call each element of $\mathcal{D}$ 
strictly by a {\it basic object}. 
It is not 
necessary that a basic object 
be independent of the others. It could happen 
that $d^{n_1}_1 \in \mathcal{D}$ be a collection 
of $d^{n_2}_2, d^{n_3}_3 \in \mathcal{D}$. It, however, 
must be 
a set in ZFC; and in case 
$d^{n}$ comprises $d^{n_1}_1, \ldots, d^{n_k}_k$, 
it must be that $d_1 \not= \cdots \not= d_k$. 
There is a good reason why it should be so, 
which will be explained later. \\
\indent It will become convenient that 
we 
have similar notions   
to term symbols, term body symbols, and so on  for the elements of $\mathcal{D}$, too. 
Let us define object symbols, $o$ with or without 
a subscript and/or a superscript, and object body 
symbols, $\ostar$ with or without a subscript 
and/or a superscript. They derive 
from substituting $o$ into all $t$ and 
$\ostar$ into all $\tstar$ in the definitions 
of \textbf{Term body} and \textbf{Term}.  
\hide{ 
\begin{description} 
    \item[Object body symbol]{\ }
        \begin{enumerate} 
            \item A sequence of object symbols 
                $o_1.\cdots.o_n$ for $n \ge 1$.
        \end{enumerate}
    \item[Object symbol]{\ } 
        \begin{enumerate} 
            \item A basic object symbol. 
            \item $o \lessdot \ostar$  
                for some object symbol $o$ 
                and some object body symbol $\ostar$.
        \end{enumerate} 
\end{description}   
}
I denote the class of all object symbols 
by $\mathfrak{O}$. 
$\ostarLength{m}$ with or without a subscript 
similarly denotes a sequence of 
object symbols with length $m$, and, similarly, marked object bodies (Cf. 
around 
Definition 4), $\obullet$ with or without a subscript. 
Similarly for $\oplus$, $\tau_l$, $\tau_r$ and $\delta$, 
and assume also $\Subobject$ is analogous to 
$\Subterm$. 
\hide{ 
\begin{definition}[$\mu$] 
    \normalfont The function $\mu$ 
    is recursively defined. \ryuta{Everything 
        should be an object, check later again. 
    But I do not have to define 
it for a term body symbol, actually.} 
    \begin{enumerate}
        \item $\mu(x) \in \mathcal{D}$ for 
            any $x \in \mathcal{X}$.
        \item $\mu(a) = a$.  
        \item $\mu(p^0) = p^0$.  
        \item $\mu(\neg p^0) = \neg p^0$. 
        \item $\mu(\top) = \top$. 
        \item $\mu(\bot) = \bot$. 
        \item $\mu(t \lessdot \tstar) 
            = \mu(t) \lessdot \mu(\tstar)$. 
        \item $\mu(t_1.\cdots.t_n) 
            = \mu(t_1).\cdots.\mu(t_n)$. 
    \end{enumerate} 
    $\Box$ 
\end{definition}   
}
\begin{definition}[$\mu$: Variable assignment  and $\mathcal{I}_{\mu}$: Interpretation]  
    \normalfont   
    $\mu$ is defined to be some function 
    that takes as inputs 
    an object body and a basic 
    variable symbol and that outputs 
    a basic object. 
    $\mathcal{I}_{\mu}$ is a 
            function dependent on 
            some variable assignment $\mu$, defined as follows.
\begin{enumerate}
             \item $\mathcal{I}_{\mu}(\delta(\obullet), 
                     a) \in \mathcal{D}$ 
                     for each basic domain symbol $a$.
                 \item $\mathcal{I}_{\mu}(\delta(\obullet), 
                     x) = \mu(\delta(\obullet), x)$. 
                 \item $\mathcal{I}_{\mu}(\delta(\obullet), 
                     t_1.\cdots.t_n) 
                     = \mathcal{I}_{\mu}(\delta(\obullet), 
                     t_1).\cdots.\mathcal{I}_{\mu}(\delta(\obullet),
                     t_n)$. 
                 \item $\mathcal{I}_{\mu}(\delta(\obullet), 
                     t \lessdot \tstar) 
                     = \mathcal{I}_{\mu}(\delta(
                     \Imu(\delta(\obullet),\tstar)  \oplus \obullet), t) \lessdot \mathcal{I}_{\mu}(\delta(\obullet), \tstar)$.    
                 \item $\mathcal{I}_{\mu}(\delta(\obullet), 
                     {^\circ t}) =  \mathcal{I}_{\mu}(
                     \delta(\tau_r(\obullet)), t)$.  
                 \item $\mathcal{I}_{\mu}(\delta(\obullet), 
                     p) \subseteq  
                     \mathfrak{O}^n$ 
                     for any $n$-ary basic predicate. 
                     $\Box$ 
             \end{enumerate} 
\end{definition} 

\begin{definition}[$\sqsubseteq$: Binary relation 
    over object body symbols]  
    \normalfont The binary relation satisfies the following 
    conditions.  
\begin{adjustwidth}{1cm}{}

\begin{description}   
                     \item[Reflexibity] 
                         For any object body 
                         symbol $\ostar$, 
                         it holds that 
                         $\ostar \sqsubseteq 
                         \ostar$.    
                     \item[Transitivity] For any three 
                         object body symbols 
                         $\ostar_1, \ostar_2$ 
                         and $\ostar_3$, 
                         it holds that 
                         $\ostar_1 \sqsubseteq \ostar_3$
                         if $\ostar_1 \sqsubseteq 
                         \ostar_2$ 
                         {\andC} $\ostar_2 \sqsubseteq 
                         \ostar_3$. 
                     \item[Antisymmetry] 
                         For any two 
                         object body symbols 
                         $\ostar_1$ and $\ostar_2$, 
                         it holds that 
                         $\ostar_1 = \ostar_2$  
                         iff  
                         $\ostar_1 \sqsubseteq \ostar_2$
                         {\andC} $\ostar_2 \sqsubseteq 
                         \ostar_1$. 
                     \item[Subsumptivity] For  
                         any three 
                         object body symbols 
                         $\ostar_1, \ostar_2$ and 
                         $\ostar_3$, it holds  
                         that 
                         $(\ostar_1 \lessdot 
                         \ostar_2) \sqsubseteq 
                         \ostar_3$ 
                         iff $\ostar_2 \sqsubseteq 
                         \ostar_3$. 
                         It also holds that 
                         $\ostar_1 \sqsubseteq 
                         (\ostar_2 \lessdot 
                         \ostar_3)$ iff 
                         $\ostar_1 = 
                         (\ostar_4 \lessdot \ostar_5)$
                         for some $\ostar_4$ and 
                         $\ostar_5$ 
                         such that 
                         $\ostar_5 
                         \sqsubseteq \ostar_3$ 
                         {\andC} $\ostar_4 \sqsubseteq 
                         \ostar_2$. 
                         \hide{ 
                       \item[Parallel]  
                        For any 
                        four domain body 
                        symbols $\ostar_1, \ostar_2, 
                        \ostar_3$
                        and $\ostar_4$,  
                        it holds that 
                        $\ostar_1 \prec \ostar_2 
                        \sqsubseteq 
                        \ostar_3 \prec \ostar_4$ 
                        iff $\ostar_2 = \ostar_4$ 
                        {\andC} $\ostar_1 \sqsubseteq 
                        \ostar_3$. 
                    }
                     \item[Sequentiality]   
                     For any two 
                         object body symbols 
                         $\ostarLength{n}_1$ and 
                         $\ostarLength{m}_2$, 
                         it holds
                         that $\ostarLength{n}_1
                         \sqsubseteq \ostarLength{m}_2$ 
                     iff 
                     $n = m$ {\andC} 
                         $\ostarLength{n,i}_1
                             \sqsubseteq 
                             \ostarLength{m,i}_2$ 
                             for each $1 \le i \le m$. 
                             $\Box$ 
                 \end{description} 
             \end{adjustwidth}  
        \end{definition}    
        \begin{definition}[ 
            $\mathfrak{D}$: Domain function, 
            and $\mathfrak{P}$: Predicate function]   
            \normalfont
            Components of a structure around 
            domains are defined to satisfy the following 
            conditions. $\mathfrak{D}$ takes 
            as the input $\epsilon$ or an 
            object body and outputs a subset of 
            $\mathcal{D}$. 
            $\mathfrak{P}$ takes as the input 
            $\epsilon$ or an object body 
            and outputs a subset of $\mathcal{P}$. 
\begin{adjustwidth}{1cm}{}
        \begin{description}   
            \item[Top]  
                $\mathfrak{D}(\epsilon) = 
                \mathcal{D}$ 
                {\andC}  $\mathfrak{P}(\epsilon) = 
               \mathcal{P}$. 
    \item[Monotonicity]   
        For any two object body symbols 
        $\ostar_1, \ostar_2$, 
        it holds that 
        if 
         $\ostar_1 \sqsubseteq \ostar_2$, 
        then $\mathfrak{D}(\ostar_1) \subseteq \mathfrak{D}(\ostar_2)$ 
        {\andC} $\mathfrak{P}(\ostar_1) \subseteq 
        \mathfrak{P}(\ostar_2)$.  $\Box$ 
\end{description}   
\end{adjustwidth} 
        \end{definition}    
This definition embodies attribute normal interpretation
for both $\mathcal{D}$ and $\mathcal{P}$. 
Now, let us come back to the point made 
after Definition 8. The point of the superscripts on some $d$ is 
to express differences between $d^1$, $d^2$, ..., 
that are observable if we zoom out of 
the domain of discourse that contains them, say 
$\mathfrak{D}(\ostar)$, 
but that are not observable in the very domain 
of discourse. Hence in the very domain $\mathfrak{D}(\ostar)$ having 
a set $\{d^1, d^2\}$ for instance is the same as 
having another set $\{d^i\}$ for some 
$d^i \in \mathfrak{D}(\ostar)$. 
\hide{ 
\noindent The condition \textbf{(No empty entities
    in the outermost domain of discourse)} basically says that syntax
and semantics are not entirely 
separated within a coherent structure to 
a given formula. It reflects 
the viewpoint of mine 
stipulated in the previous section. 
\begin{proposition} 
   If $\alstar_1$ does not have 
   the same structure as $\alstar_2$, 
   then it does not hold that 
   $\alstar_1 = \alstar_2$. 
\end{proposition}  
\begin{proof} 
  wnoenownen
\end{proof}  
} 
\begin{definition}[$\in^{\star}$: Indexed set inclusions]  
    \normalfont
    I define a class of 
    set inclusion operators: $\{\in\} \times  
    \mathfrak{O}$ such that  
    if $\ostar_1 (\in, \delta(\obullet_1)) \ostar_2$, 
    it is not necessary that
    $\ostar_1 (\in, \delta(\obullet_2)) \ostar_2$ 
    unless $\delta(\obullet_1) = 
    \delta(\obullet_2)$. Apart from this mutual 
    independence, each  
    one of them 
    is a ZFC set inclusion operator.  
    I denote any $(\in, \delta(\obullet))$ 
by $\in^{\delta(\obullet)}$. 
     I define $\in^{\star}$ to be 
     the class of all these set inclusions. 
    $\Box$ 
\end{definition}   
Incidentally, suppose there are two sequences of 
object symbols. One is $\ostar_1 = d_1^{n_1}.\cdots.d_i^{n_i}$, 
and the other is $\ostar_2 = d_1^{n'_1}.\cdots.d_i^{n'_i}$ for $i > 1$.  
Even if $d_j^{n_j} \in^{\delta(\obullet)} d_j^{n'_j}$ 
for at least one $1 \le j \le i$ (but potentially 
all such $j$) and 
$d_k^{n_k} = d_k^{n'_k}$ 
for any other $1 \le k \le i$, 
it is not perforce the case that  
$\ostar_1 \in^{\delta(\obullet)} \ostar_2$, while, 
of course, if $\ostar_1 \in^{\delta(\obullet)} 
\ostar_2$ $\andC$ $\ostar_2 \in^{\delta(\obullet)} 
\ostar_3$, then it holds true that 
$\ostar_1 \in^{\delta(\obullet)} \ostar_3$.   

I have just finished defining all the components 
of a structure. There are a few other definitions required for 
characterisation of semantics.   
\begin{definition}[Variable assignment updates]  
    \normalfont  
    Let $\xi^n$ be a recursive function  
    that takes a variable and that outputs  
    a marked term body, such that:  
    $\xi^0(v)$ is almost exactly $v$ except 
    that $\leftmost(v)$ is replaced by 
    $(\leftmost(v))^{\dagger}$; 
    and $\xi^{k+1}(v) = \tau_r(\xi^k(v))$. 
    Let $\mathcal{V}(v)$ be 
    $\bigcup_{k \ge 1} \{\xi^k(v) \ | \ \xi^k(v) 
        \not= \epsilon^{\dagger}\}$. 
  Then I define $\update$ to be a function 
  taking: a structure; an object body symbol or 
  else $\epsilon$; a  variable; and a basic object, 
  and 
  outputting a structure, such that 
  $\update(\mathcal{M}, \delta(\obullet), 
  v, d^n)$ is a strucutre that is almost 
  exactly $\mathcal{M}$ except that   
  $\mu(\delta(\obullet), \leftmost(v)) = d^n$ 
  and  
  $\mu(\delta(\Imu(\delta(\obullet), 
  \delta(\tbullet))  
  \oplus \obullet), \leftmost(v)) = d^n$  
  for each $\tbullet \in \mathcal{V}(v)$.  $\Box$ 
\end{definition}  
This embodies the attribute normal interpretation 
for variable assignment. 
\begin{definition}[Surface equality]      
    \normalfont 
    I say that $o_1 \simeq o_2$  
    for any two objects $o_1$ and $o_2$  
    iff $|\Subobject(o_1)| = |\Subobject(o_2)|$ 
    $\andC$ 
    $\rho(\leftmost(o^1)) = \rho(\leftmost(o^2))$ where 
    $o^1$ ($o^2$) is the last element in 
    $\Subobject(o_1)$ (in 
    $\Subobject(o_2)$) 
    $\andC$ if $|\Subobject(o_1)| \not= 1$, then 
    for all $1 \le i < |\Subobject(o_1)|$, 
    \{$i$-th element of $\Subobject(o_1)$\} 
    = \{$i$-th element of $\Subobject(o_2)\}$.  
    $\Box$ 
\end{definition}   
\hide{ 
\begin{definition}[Congruence class mapping]  
    \normalfont
Let 
    $(\mathcal{P}^0, \top, \bot, \neg, \wedge, \vee)$ 
    represent a Boolean algebra 
    such that: 
    (A) atoms comprise any element of $\mathcal{P}^0$, 
    and any $\neg p^0$ if 
    $p^0$ is an element of $\mathcal{P}^0$; 
    (B)  
    $\top = \bigvee \{\text{all the atoms}\}$; 
    (C) 
    $\bot = \bigwedge \{\text{all the atoms}\}$; 
    and (D) 
    $\neg$ is the complement operator. 
    Let $\mathcal{B}$ be all the elements 
    of the algebra, and let 
    $\mathcal{B}^=$ be its subset 
    such that: for 
    any $b \in \mathcal{B}$ there exists 
    $b^= \in \mathcal{B}^=$ such that  
    $b = b^=$; and for any distinct 
    elements $b^=_1, b^=_2 \in \mathcal{B}^=$, 
    it is not the case that $b^=_1 = b^=_2$.\footnote{Meaning that $\mathcal{B}^=$ 
        defines a congruence class 
        of $\mathcal{B}$.} 
    I define a unary operator $[[\cdot]]$   
    on $\mathcal{B}$ into $\mathcal{B}^=$ 
    such that  $[[b]] = 
     b^= \in \mathcal{B}^=$ 
    if $b = b^=$.  $\Box$ 
\end{definition}   
\ryuta{Should modify, but see if it can be 
    defined recursively.}  
\begin{definition}[Trials]  
    \ryuta{alpha should return a list or a set.} 
  I define $\alpha$ to be a partial function 
  on formulas with the followig recursive definition:
  \begin{enumerate} 
      \item $\alpha(p^0) = p^0$. 
      \item $\alpha(\neg p^0) = \neg p^0$. 
      \item $\alpha(\top) = \top$.  
      \item $\alpha(\neg \top) = \bot$. 
      \item $\alpha(\bot) = \bot$.  
      \item $\alpha(\neg \bot) = \top$. 
      \item $\alpha(P(t_1, \ldots, t_n)) 
          = t_1.\cdots.t_n$ if $n \ge 1$.  
      \item $\alpha(\neg P(t_1, \ldots, t_n)) 
          = (t_1.\cdots.t_n)^c$ if $n \ge 1$.  
      \item $\alpha(P \gtrdot F) = \alpha(P) \gtrdot 
          \alpha(F)$. \ryuta{Verify that 
              $P$ is always conjunctive. If it is so, 
          no problem. If it could become disjunctive, 
      adjust logic to remove the possibility.} 
      \item $\alpha(\neg [P \gtrdot F](\term_1, 
          \ldots, t_n)) =  
          \alpha(\neg P(\term_1, \ldots, t_n)) 
          \vee \alpha([P \gtrdot \neg F](\term_1,
      \ldots, \term_n))$. 
      \item $\alpha(t_1 = t_2) = t_1 = t_2$.  
      \item $\alpha(\neg t_1 = t_2) = (t_1 = t_2)^c$. 
      \item $\alpha(t_1 \simeq t_2) = t_1 \simeq t_2$.  
      \item $\alpha(\neg t_1 \simeq t_2) =  
          (t_1 \simeq t_2)^c$.  
      \item $\alpha(\neg F_1 \wedge F_2) = 
          \alpha(\neg F_1) \vee \alpha(\neg F_2)$. 
      \item $\alpha(\neg F_1 \vee F_2) = 
          \alpha(\neg F_1) \wedge \alpha(\neg F_2)$. 
      \item $\alpha(\neg F_1 \supset F_2) = 
          \alpha(F_1) \wedge \alpha(\neg F_2)$.  
  \end{enumerate}

\end{definition} 
\begin{definition}[Nullary predicate amalgamation] 
  \normalfont  
  I define $\alpha$ to be a partial function 
  on formulas with the following recursive definition: 
    \begin{enumerate} 
        \item $\alpha(p^0) = p^0$. 
        \item $\alpha(\neg p^0) = \neg p^0$. 
        \item $\alpha(\top) = \top$. 
        \item $\alpha(\bot) = \bot$.  
        \item $\alpha(F_1 \wedge F_2) = 
            \alpha(F_1) \wedge \alpha(F_2)$.  
        \item $\alpha(P \gtrdot F) = \alpha(P)$.  
    \end{enumerate}
    If $F$ is a formula, and if $\alpha(F)$ 
    does not reduce to a formula, then 
    $\alpha$ is considered undefined for $F$.    
    $\Box$ 
\end{definition}    
\begin{proposition}
   If $\alpha$ is defined on a formula $F$, 
   then $\alpha(F) = \alpha^n(F)$ 
   for $n \ge 1$. \hfill$\Box$
\end{proposition} 
\begin{definition}[proper]  
    \normalfont
I define $\proper$ to be a predicate   
    that takes an object body and a term.  
    \begin{enumerate} 
        \item 
    \end{enumerate} 
    \begin{enumerate} 
        \item $\tau_l(\ostar_1) \in \mathcal{D}$. 
        \item $\tau_l(\ostar_2) \in \{0^{p^0}, 0^{\neg 
                    p^0}\}$ for some nullary 
            predicate $p^0$. 
        \item $\ostar_1 \in \Subobject(\ostar_2)$. 
    \end{enumerate}
$\Box$ 
\end{definition}  
}  
\begin{definition}[Proper objects] 
    \normalfont
     I define $\properO$ to be a predicate dependent 
     on some $\mathfrak{D}$. It takes 
     an object body symbol or $\epsilon$, 
     and an ordered list of object body symbols. 
     The definition is recursive. 
     \begin{enumerate} 
         \item $\properO(\delta(\obullet), (d^n))$ 
             iff $d^n \in \mathfrak{D}(\delta(\obullet))
             $. 
         \item $\properO(\delta(\obullet), (o 
             \lessdot \ostar))$ iff 
             $\properO(\delta(\ostar \oplus \obullet), (o))$ $\andC$ $\properO(\delta(\obullet), (\ostar))$.  
         \item $\properO(\delta(\obullet), 
             (o_1.\cdots.o_n))$ iff 
             $\properO(\delta(\obullet), (o_i))$ 
             for each $1 \le i \le n$.  
         \item $\properO(\delta(\obullet), 
             (\ostar_1, \ldots, \ostar_n))$ iff 
             $\properO(\delta(\obullet), 
             (\ostar_i))$ 
             for each $1 \le i \le n$. $\Box$ 
     \end{enumerate} 
\end{definition} 

\begin{definition}[Proper g-terms]  
    \normalfont
    I define 
    $\proper$ to be a predicate   
dependent on some $\mathcal{I}_{\mu}$. 
    It takes an object body symbol or $\epsilon$, 
    and a g-term. It is recursive.  
    \begin{enumerate} 
        \item $\proper(\delta(\obullet), x)$ 
            iff $\Imu(\delta(\obullet), x)  
            \in \mathfrak{D}(\delta(\obullet))$. 
        \item $\proper(\delta(\obullet), a)$ 
            iff $\Imu(\delta(\obullet), a) 
            \in \mathfrak{D}(\delta(\obullet))$. 
        \item $\proper(\delta(\obullet), t \lessdot \tstar)$ 
            iff {\small $\proper(\delta(\obullet), \tstar)$ 
                $\andC$ $\proper(\delta(\Imu(\delta(\obullet), \tstar) \oplus \obullet), t)$.}  
        \item $\proper(\delta(\obullet), t_1.\cdots.t_n)
            $ iff $\proper(\delta(\obullet), t_i)$ 
            for all $1 \le i \le n$.  
        \item $\proper(\delta(\obullet), ^\circ t)$ 
            iff $\proper(\delta(\tau_r(\obullet)), t)$. 
            $\Box$ 
    \end{enumerate} 
\end{definition}  
These two notations judge whether each basic predicate 
and term can appear under a given 
domain of discourse. 
\begin{definition}[Semantics]   
    \normalfont  
    I define the following forcing relations. 
\begin{enumerate}     
        
    \item[A] $\mathcal{M}, \obullet 
        \models p(\term_1, \ldots, \term_n)$ 
       is $\mathbb{U}$ 
       if $\mathcal{I}_\mu(\delta(\obullet), p) = \emptyset$ $\orC$ 
       $\notC$ $\properO(\delta(\obullet),
       (\Imu(\delta(\obullet), p)))$
           $\orC$ 
           $p \not\in \mathfrak{P}(\delta(\obullet))$ 
           $\orC$ $\notC$ 
           $\proper(\delta(\obullet), \term_i)$ 
           for some $1 \le i \le n$. 
           These rules marked with alphabets 
       have a priority 
       over any other rules to follow 
       and override any of them.  
   \item[B] $\mathcal{M}, \obullet 
       \models t_1 = t_2$ is $\mathbb{U}$ 
       if $\notC\ \proper(\delta(\obullet),t_1)
       \ \orC \ \notC\ \proper(\delta(\obullet),t_2)$. 
   \item[C] $\mathcal{M}, \obullet 
       \models t_1 \simeq t_2$ is $\mathbb{U}$ 
       if $\notC\ \proper(\delta(\obullet),t_1) \ \orC \ \notC\ \proper(\delta(\obullet),t_2)$. 
   \item[D] $\mathcal{M}, \obullet 
       \models  [\bigwedge_{i=1}^{l (\ge 2)}P_i(t_1^i, 
       \ldots, t^i_n)](t_1, \ldots, t_n)$ is $\mathbb{U}$ 
       if there exist some $1 \le i \le l$ 
       and some $1 \le j \le n$ such that 
       $\notC\ \proper(\delta(\obullet), t_j^i)$ 
       $\orC$ 
        $\notC\ \proper(\delta(\obullet), 
       t_j)$. 
   \item[E] $\mathcal{M}, \obullet 
       \models [P \gtrdot F](t_1, \ldots, t_n)$
       is $\mathbb{U}$ if 
       there exists some $1 \le j \le n$ such that 
       $\notC\ \proper(\delta(\obullet), t_j)$ 
       $\orC$ $\notC$ $\mathcal{M}, 
       \Imu(\delta(\obullet), t_1).\cdots.
       \Imu(\delta(\obullet), t_n) 
       \oplus \obullet \models F$.
       \\
\item $\mathcal{M}, 
        \obullet \models p(\term_1, \ldots, \term_n)$ 
        if $(\mathcal{I}_{\mu}(\delta(\obullet), \term_1), 
        \ldots, \mathcal{I}_{\mu}(\delta(\obullet), \term_n)) 
        \in^{\delta(\obullet)} \mathcal{I}_{\mu}(\delta(\obullet), p)$. 
    \item $\mathcal{M}, \obullet 
        \models [\bigwedge_{i=1}^{l (\ge 2)}
        P_{i}(t^{i}_1,
        \ldots, t^{i}_n)](t_1, \ldots, t_n)$ 
        if:\\ 
            $\mathcal{M}, 
            \obullet 
            \models P_i(t^i_1,\ldots, 
            t^i_n)  
            $  
            for each $1 \le i \le l$, 
            $\andC$ 
            $
            \Imu(\delta(\obullet), t_1^i).\cdots.\Imu(\delta(\obullet), t_n^i) 
            \in^{\delta(\obullet)} \Imu(\delta(\obullet), t_1).\cdots. \Imu(\delta(\obullet), t_n)$ 
            for each $1 \le i \le l$.  
    \item $\mathcal{M}, \obullet 
        \models [P \gtrdot F](t_1, \ldots, t_n)$ 
        if: \\
            $\mathcal{M}, \obullet 
            \models P(t_1, \ldots, t_n)$ 
                {\andC} 
                $\mathcal{M},   \Imu(\delta(\obullet), t_1).\cdots.\Imu(\delta(\obullet), t_n) \oplus \obullet \models F$.  

    \item $\mathcal{M}, \obullet \models \top$. 
    \item  $\notC$ $\mathcal{M}, \obullet \models \bot$. 
    \item $\mathcal{M}, \obullet \models 
        \neg F$ if 
        $\notC\ \mathcal{M},\obullet \models 
        F$. 
    \item $\mathcal{M}, \obullet \models 
        \forall v.F$ if 
        {\small $\update(\mathcal{M}, 
        \delta(\obullet), v, d^n), \obullet
         \models  
         F$} for each 
     {\small $d^n \in \mathfrak{D}(\delta(\Imu(\delta(\obullet), v) \oplus \obullet))$}.  
    \item $\mathcal{M}, \obullet \models 
        \exists v.F$ if 
        {\small $\update(\mathcal{M}, 
        \delta(\obullet), v, d^n), \obullet
         \models  
         F$} 
        for some   {\small $d^n \in \mathfrak{D}(\delta(\Imu(\delta(\obullet), v) \oplus \obullet))$}.  
    \item $\mathcal{M}, \obullet \models 
        F_1 \wedge F_2$ if 
        $\mathcal{M}, \obullet \models F_1$ 
        {\andC} $\mathcal{M},\obullet \models F_2$. 
    \item $\mathcal{M}, \obullet \models F_1 \vee F_2$ 
        if $\mathcal{M},\obullet \models F_1$ {\orC} 
        $\mathcal{M},\obullet \models F_2$.  
    \item $\mathcal{M}, \obullet \models F_1 \supset F_2$
        if $\mathcal{M}, \obullet \models 
        \neg F_1$ 
        {\orC} $\mathcal{M}, \obullet \models F_2$.  
    \item $\mathcal{M}, \obullet \models t_1 = t_2$ 
        if $\Imu(\delta(\obullet), 
        t_1) = \Imu(\delta(\obullet),
        t_2)$. 
    \item $\mathcal{M}, \obullet \models t_1 \simeq t_2$ 
        if $\Imu(\delta(\obullet), t_1) 
     \simeq   
    \Imu(\delta(\obullet), t_2)$. $\Box$ 
\end{enumerate}   
\hide{ 
$\normal$ is defined to be a function 
that takes a formula 
and that outputs a formula: 
\begin{enumerate}[label=(\Alph*)] 
    \item $\normal(F) = F$ iff 
        $F \not= \neg F_1$ 
        for some $F_1$. 
    \item $\normal(\neg p) = p^-$
    \item $\normal(\neg [\bigwedge_{i=1}^{l (\ge 2)} 
        P_i(t_1^i, \ldots, t_n^i)](t_1, 
        \ldots, t_n)) = \\
        (\bigvee_{i = 1}^l\neg 
        P_i(t_1^i \prec t_1.\cdots.t_n, 
    \ldots, t_n^i \prec t_1.\cdots.t_n))$.  
\item $\normal(\neg [P \gtrdot F](t_1, \ldots, t_n))
    = $. 
\end{enumerate}
}   
\end{definition}   
Alphabetical rules apply when a predicate 
or a term that is not under $\delta(\obullet)$ 
nonetheless appears. To explain 
the second condition of the rule E, 
$P \gtrdot F$ is inconsistent in that case, 
and the terms are then predicated 
by inconsistency, which is impossible. 
\begin{definition}[Coherence]  
    \normalfont
    Let $\mathcal{M}$ be a structure. Meanwhile, let 
    $F$ be a well-formed formula in $\mathcal{L}$.    
   I say that $\mathcal{M}$ coheres to $F$ 
   iff 
   $\mathcal{M}, \epsilon^{\dagger} \models F$ 
   is either $\mathbb{T}$ or 
   $\mathbb{F}$.  $\Box$
\end{definition} 
\begin{definition}[Satisfiability and 
    validity]  
    \normalfont 
   I say that a structure $\mathcal{M}$
   satisfies a formula $F$ 
   iff $\mathcal{M}$ coheres to $F$ $\andC$
   $\mathcal{M}, \epsilon^{\dagger} \models F$. 
   I say that $F$ is valid iff 
   there is at least one $\mathcal{M}$ 
   that satisfies $F$ {\andC}  
   every $\mathcal{M}$ that coheres 
   to $F$ satisfies $F$. 
   $\Box$ 
\end{definition}   
\hide{ 
\section{Consistency of finite predicate gradual logic}      
As I mentioned at the beginning of Section 3,  
a proof system will be presented 
in another work in which 
I intend to study completeness relative 
to consistency of ZFC. In this section
I will, at a certain point, restrict my attention to 
finite predicate gradual logic, 
to show that predicate gradual logic is self-contradiction-free 
in most common practical situations involving natural
expressions, including 
donkey anaphora encoding. 
\begin{definition}[Extended predicate gradual 
    logic]  
    \normalfont  
    I extend $\mathcal{L}$ 
    with an additional binary operator $\in$
    as a logical symbol. I denote 
    the language by $\mathcal{L}^+$.  
    I also extend $\mathcal{R}$, 
   the set of rules for predicate gradual 
   logic formulas and predicates, 
   by 
   additional rules into \textbf{Formula}:  
   \begin{enumerate} 
       \item 
   $\tstar_1 \in \tstar_2$ 
   such that $\tstar_1$ and $\tstar_2$ 
   are term bodies. 
   \item 
   $[\top \gtrdot F](t_1, \ldots, t_n)$ 
   such that $F$ is a formula 
   and that $t_1, \ldots, t_n$ are terms.  
   \item  $[\bot \gtrdot F](t_1, \ldots, t_n)$ 
   such that $F$ is a formula 
   and that $t_1, \ldots, t_n$ are terms.  
   \end{enumerate} 
   I denote the extended $\mathcal{R}$ by $\mathcal{R}^+$.
   I denote all the formulas of $\mathcal{L}^+$ 
   generated by $\mathcal{R}^+$ 
   by $\mathcal{F}^+$ which 
   extends $\mathcal{F}$, i.e. all the formulas of $\mathcal{L}$.  
   I call $\mathcal{L}^+$ the language  
   of extended predicate gradual logic, 
   or simply of {\epgl}
       and 
   $\mathcal{F}^+$ its formulas. $\Box$
\end{definition}  
\indent I show through  
a subclass of {\epgl} 
(relative) consistency of predicate 
gradual logic. 
The semantics of {\epgl} extends  
that of predicate gradual 
logic with the following rules:  
\begin{enumerate} 
    \item[G] $\mathcal{M}, \obullet \models 
        \tstar_1 \in \tstar_2$ is $\mathbb{U}$ if  
        $\notC$ $\proper(\delta(\obullet), \tstar_1)$ 
        $\orC$ $\notC$ $\proper(\delta(\obullet), 
        \tstar_2)$. 
        \setcounter{enumi}{13}
    \item $\mathcal{M}, \obullet \models \tstar_1 \in \tstar_2$ if $\mathcal{I}_{\mu}(\delta(\obullet), \tstar_1) \in^{\delta(\obullet)} \mathcal{I}_{\mu}(\delta(\obullet), \tstar_2)$.  
    \item $\mathcal{M}, \obullet \models 
        \top(t_1, \ldots, t_n)$. 
    \item $\notC$ $\mathcal{M}, \obullet \models 
        \bot(t_1, \ldots, t_n)$.
\end{enumerate}   
The alphabetic rule G above  has the same priority as 
A - F, that is, it has a higher priority than any non-alphabetical 
rules. The other non-alphabetical rules 14 - 16 
have a lower priority than A - G. \\\\ 
\hide{
and modifies the rule for
$\forall v.F$: \\\\
    \indent $\mathcal{M}, \obullet 
        \models \forall v.F$ 
        iff $\mathcal{M}[\mu(v) \mapsto d^n], 
        \obullet \models F$ 
        for each $d^n \in \mathfrak{D}(\delta(\obullet))$.\\\\ 
This logic does not have so much philosophical 
motivation. It is used, at least for my purpose, 
for simplifying the proof of consistency of 
predicate gradual logic. 
}
By a basic formula of \epgl, 
I mean either a basic formula in 
$\mathcal{F}$, or 
     $t^1_1.\cdots.t^1_n \in t^2_1.\cdots.t^2_n$ 
     for some terms, 
        or $\neg t^1_1.\cdots.t^1_n \in t^2_1.\cdots.t^2_n$
        for some terms.  
\begin{definition}[Embedding function]  
    \normalfont  
    I define $\mathbf{F}$ to be a recursive   
    function 
    from $\mathcal{F}^+$ into $\mathcal{F}^+$ such that: 
    \begin{enumerate}
        \item $\mathbf{F}(\neg \neg F) = \mathbf{F}(F)$. 
        \item $\mathbf{F}(F) = F$ if $F$ is a basic 
            formula.  
            \hide{ 
        \item $\mathbf{F}(\top) = \top$. 
        \item $\mathbf{F}(\neg \top) = \bot$.
        \item $\mathbf{F}(\bot) = \bot$. 
        \item $\mathbf{F}(\neg \bot) = \top$.    
        }
        \item $\mathbf{F}(F_1 \wedge F_2) 
            = \mathbf{F}(F_1) \wedge \mathbf{F}(F_2)$.  
        \item $\mathbf{F}(\neg (F_1 \wedge F_2)) 
            = \mathbf{F}(\neg F_1) \vee 
            \mathbf{F}(\neg F_2)$. 
        \item $\mathbf{F}(F_1 \vee F_2) 
            = \mathbf{F}(F_1) \vee \mathbf{F}(F_2)$.  
        \item $\mathbf{F}(\neg (F_1 \vee F_2)) 
            = \mathbf{F}(\neg F_1) \wedge 
            \mathbf{F}(\neg F_2)$. 
        \item $\mathbf{F}(F_1 \supset F_2) 
            = \mathbf{F}(\neg F_1) \vee \mathbf{F}(F_2)$. 
        \item $\mathbf{F}(\neg (F_1 \supset F_2)) 
            = \mathbf{F}(F_1) \wedge \mathbf{F}(\neg F_2)$.
        \item $\mathbf{F}(\forall v.F) 
            = \forall v.\mathbf{F}(F)$. 
        \item $\mathbf{F}(\neg \forall v.F) 
            = \exists v.\mathbf{F}(\neg F)$. 
        \item $\mathbf{F}(\exists v.F) 
            = \exists v.\mathbf{F}(F)$. 
        \item $\mathbf{F}(\neg \exists v.F) 
            = \forall v.\mathbf{F}(\neg F)$.  
        \item $\mathbf{F}([\bigwedge_{i=1}^{l (\ge 2)}
            P_i(t^i_1, \ldots, t^i_n)](t_1, \ldots, t_n))
            = \\ 
            \bigwedge_{i=1}^{l}
            \mathbf{F}(P_i(t^i_1, \ldots, t^i_n)) 
            \wedge \bigwedge_{i=1}^l 
            t_1^i.\cdots.t_n^i  
            \in t_1.\cdots.t_n$. 
    \item  $\mathbf{F}(\neg 
        [\bigwedge_{i=1}^{l (\ge 2)}P_{i}
        (t_1^i, \ldots, t_n^i)
        ](t_1, \ldots, t_n)) 
        =\\ \bigvee_{i=1}^l \mathbf{F}(\neg 
        P_i(t_1^i, \ldots, t_n^i)) 
        \vee \bigvee_{i=1}^l \neg \
        t_1^i.\cdots.t_n^i 
        \in 
        t_1.\cdots.t_n$.   
    \item $\mathbf{F}([P \gtrdot F](t_1, 
        \ldots, t_n)) = \mathbf{F}( 
        P(t_1, \ldots, t_n)) \wedge 
        [\top \gtrdot \mathbf{F}(F)](t_1, 
        \ldots, t_n)$. 
    \item $\mathbf{F}(\neg [P \gtrdot F](t_1, 
        \ldots, t_n)) =  
        \mathbf{F}(\neg P(t_1, \ldots, t_n)) 
        \vee [\top \gtrdot \mathbf{F}(\neg F)](t_1, 
        \ldots, t_n)$. 
    \end{enumerate}    
    The last two rules apply
    also to cases where 
    $P$ is $\top$ or $\bot$. 
    $\Box$ 
\end{definition}       
\begin{lemma} \label{defined}
     $\mathbf{F}$ is defined for $\mathcal{F}$. \hfill$\Box$
\end{lemma} 
\begin{lemma}[Negation normal form]  
     If $F$ is a formula in predicate gradual logic, 
     then $\mathbf{F}(F)$ contains 
     a sub-formula of the form $\neg F_1$ 
     for some formula $F_1$ of {\epgl} 
      only if $F_1$ is a basic formula. 
     Further, there occurs no $\supset$ 
     in $\mathbf{F}(F)$.  
\end{lemma}     
\begin{proof} 
    By Lemma \ref{defined} and by the definition of $\mathbf{F}$. \hfill$\Box$  
\end{proof}  
Let us assume for any function $\mathbf{F}$ 
and its domain DOM that $\mathbf{F}[$DOM$]$ 
is the set of all elements in the range. 
\begin{proposition}[Identity] 
    $\mathbf{F}$ is an identity function
    on $\mathbf{F}[\mathcal{F}]$. \hfill$\Box$
\end{proposition}  
\begin{lemma}[Structure compatibility] \label{struct_compatibility}   
   Let $F$ be a formula in predicate gradual logic. 
   Then if $\mathcal{M}$ is a structure 
   that coheres to $F$, 
   it also coheres to $\mathbf{F}(F)$ in
   $\epgl$.  \hfill$\Box$
\end{lemma}    
\begin{proposition}[Preservation of satisfiability] 
    If $\mathcal{M}$ is a model 
of $F$ in predicate gradual logic, 
then it is a model of $\mathbf{F}(F)$ 
in $\epgl$. 
\end{proposition}  
\begin{proof} 
    Use Lemma \ref{struct_compatibility}. 
    Straightforward. \hfill$\Box$      
\end{proof}   
\subsection{Finite cases}    
From here on I assume that 
$\mathcal{D}^-$ is a finite set.  
And from here on 
I use $\bigwedge_v F$ to mean $\forall v.F$ 
and $\bigvee_v F$ to mean $\exists v.F$, 
assuming that $\forall v.F$ is an abbreviation 
of finite conjunction and that 
$\exists v.F$ is also an abbreviation of finite 
disjunction.  
\begin{definition}[nit chains]  
    \normalfont 
Let  $\argument$ refer  to $t_1, \ldots, t_n$ 
for some terms. 
   I say that a formula $F$ 
   is a chain 
   iff $F$ is in the form: 
   $[P_1 \gtrdot [\cdots \gtrdot [P_n \gtrdot 
   F_1](\argument_n)]\cdots](\argument_1)$,  
   $n \ge 1$, 
   for some predicates and a formula. 
   I say that it is a unit chain  
   iff $P_1 = \cdots = P_n = \top$ 
   $\andC$ $F_1$ is a basic formula. 
   $\Box$ 
\end{definition}  
\ryuta{Those not relevant to the current logic  
    are hidden below.} 
\hide{ 
\begin{definition}[Unit chain expansion]  
    \normalfont 
    Let $\mathcal{O}^{u}$ be 
    a subclass of $\mathcal{F}^+$  
    such that if $F \in \mathcal{O}^{u}$, 
    then $F$ is a basic formula 
    or else a unit chain. Then I 
    define a {\it formula in unit chain expansion} 
    as follows: 
    \begin{enumerate} 
        \item Any element in $\mathcal{O}^u$. 
        \item $F_1 \wedge F_2$ 
            if $F_1$ and $F_2$ are 
            formulas 
            in unit chain expansion. 
        \item $F_1 \vee F_2$ 
            if $F_1$ and $F_2$ are 
            formulas in unit chain expansion. 
    \end{enumerate}  
    I denote the class of 
    all the formulas in unit chain expansion 
    by $\mathcal{F}^u$. 
    $\Box$ 
\end{definition} 
}
\begin{definition}[Reductions] 
    \normalfont   
Let us define two reduction schemata. 
    \begin{enumerate} 
        \item $[\top \gtrdot F_1 \wedge F_2](\argument)
            \leadsto [\top \gtrdot F_1](\argument)
            \wedge [\top \gtrdot F_2](\argument)$. 
        \item $[\top \gtrdot F_1 \vee F_2](\argument)
            \leadsto [\top \gtrdot F_1](\argument)
            \vee [\top \gtrdot F_2](\argument)$. 
    \end{enumerate}
    Let $\mathbf{G}$ be a non-deterministic function  
    that takes a formula in $\epgl$ 
     and that outputs  
    a formula in $\epgl$. 
    If neither of the reduction rules 
    applies to $F$, then 
    $\mathbf{G}(F) = F$. Otherwise, 
    $\mathbf{G}(F) = F_1$ such that 
    $F_1$ derives from applying 
    a reduction rule once.  
    I define $\mathbf{G}^{\fixpoint}(F)$ 
    to be some $\mathbf{G}^n(F)$  
    for some finite $n$ such that
    $\mathbf{G}^n(F) = \mathbf{G}^{n+1}(F)$.   
     $\Box$ 
\end{definition}       
\begin{lemma}[Uniqueness, unit chain expansion]  
  Let $F$ be a predicate gradual logic 
  formula. Then 
  $\mathbf{G}^{\fixpoint}(\mathbf{F}(F))$ 
  exists, and  is uniquely determined for 
  $\mathbf{F}(F)$.  Furthermore, 
  it holds true that  any chain that occurs 
    in 
    $\mathbf{G}^{\fixpoint}(\mathbf{F}(F))$ 
    is a unit chain. 
\end{lemma} 
\begin{proof}  
      Straightforward. \hfill$\Box$
\end{proof}  
\begin{lemma}[Independence of three values] 
   Let $F$ be a predicate gradual logic expression. 
   It holds true that 
   $\mathcal{M}, \obullet \models 
   \mathbf{G}^{\fixpoint}(\mathbf{F}(F))$ is either $\mathbb{T}, \mathbb{F}$, 
   or else $\mathbb{U}$, but not more than 
   one of them. 
\end{lemma} 
\begin{proof}  
    Let us denote $\mathcal{M}, \obullet
    \models \mathbf{G}^{\fixpoint}(\mathbf{F}(F))$ by $\mathcal{V}_{\obullet}(F)$. 
    By the size of $F$. 
    If $F$ is a basic formula, then 
    if $\mathcal{V}_{\obullet}(F) = \mathbb{U}$, 
    it cannot be that $\mathcal{V}_{\obullet}(F)$ 
    is also $\mathbb{T}$ or $\mathbb{F}$. 
    And it is straightforward 
    to see, from the definition of 
    semantics, that if $\mathcal{V}_{\obullet}(F)$ 
    is $\mathbb{T}$, 
    then $\notC$ $\mathcal{V}_{\obullet}(F)$ 
    is $\mathbb{F}$, 
    and if $\mathcal{V}_{\obullet}(F)$ is $\mathbb{F}$, 
    then $\notC$ $\mathcal{V}_{\obullet}(F)$ 
    is $\mathbb{T}$.
    For inductive cases, consider 
    what $F$ appears like. 
    \begin{enumerate} 
        \item $F = F_1 \wedge F_2$  
            for some $F_1$ and $F_2$:  
            \begin{enumerate}[label=(\Alph*)] 
                \item 
                    If $\mathcal{V}_{\obullet}(F_1) = \mathbb{U}$ $\orC$ $\mathcal{V}_{\obullet}(F_2) 
            = \mathbb{U}$, then by 
            definition of semantics 
            $\mathcal{V}_{\obullet}(F_1 \wedge F_2) = 
            \mathbb{U}$. 
               \item Trivially 
                   $\mathcal{V}_{\obullet}(F_1 \wedge F_2)$ 
                   is either $\mathbb{T}$ 
                   or else $\mathbb{F}$, otherwise.  
           \end{enumerate} 
       \item $F = F_1 \vee F_2$ for some 
           $F_1$ and $F_2$: similar.  
       \item $F = [\top \gtrdot F](t_1, \ldots, t_n)$ 
           for some $F$ and some terms: 
           \begin{enumerate}[label=(\Alph*)]  
               \item If $n \ge 1$, then 
                   $\mathcal{V}_{\obullet}([[\top 
                   \gtrdot F](t_1, 
                   \ldots, t_n)) = 
                   \mathcal{V}_{\mathcal{I}_{\mu}
                       (\delta(\obullet), t_1).
                       \cdots.\mathcal{I}_{\mu}
                       (\delta(\obullet), t_n) 
                       \oplus \obullet}(F)$.   
                 Induction hypothesis.               
           \end{enumerate}
    \end{enumerate} 
    These cover all cases. 
    \hfill$\Box$ 
\end{proof}  
}
\begin{theorem}[Consistency in finite cases] 
    Restricted to finite $\mathcal{D}$, 
   predicate gradual logic is consitent.   
   \end{theorem}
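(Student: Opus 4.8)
The plan is to reduce the consistency claim to a single well-definedness statement: that in the finite case the three-valued forcing relation assigns every formula exactly one of $\mathbb{T}$, $\mathbb{F}$, $\mathbb{U}$. Once that is in place, consistency is immediate, because a formula and its negation can then never both be forced to $\mathbb{T}$ by the same structure. First I would pass from an arbitrary predicate gradual logic formula $F$ to a negation normal form, pushing each $\neg$ inward until it stands directly before a basic formula and eliminating every $\supset$ in favour of $\neg$ and $\vee$; this is the role of the embedding $\mathbf{F}$ into \epgl. The two facts I need here are that this transformation preserves coherence and satisfiability—so that $F$ and $\mathbf{F}(F)$ are forced to the same value by the same structures—and that the only surviving negations sit in front of basic formulas. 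Both follow by a routine induction on the construction of $F$, using the three-valued tables for $\andC$, $\orC$, $\notC$ together with the $\mathbb{U}$-propagation clauses.

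Second, since $\mathcal{D}$ is finite I would read each $\forall v.F'$ and $\exists v.F'$ as a finite conjunction $\bigwedge_v F'$ and a finite disjunction $\bigvee_v F'$ ranging over the finitely many basic objects in the relevant domain $\mathfrak{D}(\delta(\Imu(\delta(\obullet),v)\oplus\obullet))$, and then apply the two distribution reductions that push a $\top\gtrdot$ prefix through $\wedge$ and $\vee$. Iterating these to a fixpoint $\mathbf{G}^{\fixpoint}(\mathbf{F}(F))$ leaves every $\gtrdot$-chain a \emph{unit chain}, i.e.\ a stack of $\top\gtrdot$ prefixes terminating in a basic formula. I would first check that this fixpoint exists and is unique, the reductions strictly decreasing a simple weight while not interfering, so that the normal form is a canonical finite object attached to $F$. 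It is precisely finiteness of $\mathcal{D}$ that keeps the quantifier unfolding finite, and hence keeps this whole normal form inside the syntactic induction.

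Third, I would prove the heart of the argument by structural induction on the unit-chain normal form: for every structure $\mathcal{M}$ and object body $\obullet$, the value of $\mathcal{M},\obullet\models\mathbf{G}^{\fixpoint}(\mathbf{F}(F))$ is exactly one of $\mathbb{T}$, $\mathbb{F}$, $\mathbb{U}$. The base cases are the basic formulas $p(\term_1,\ldots,\term_n)$, $t_1=t_2$, $t_1\simeq t_2$, the membership atoms introduced by $\mathbf{F}$, and $\top$, $\bot$: once the alphabetical override rules have been consulted for $\mathbb{U}$, each is settled by a single ZFC set-membership or equality test, so exactly one value results. The connective and quantifier cases reduce to the three-valued tables; the only genuinely new case is a unit chain $[\top\gtrdot F'](t_1,\ldots,t_n)$, whose value is $\mathbb{U}$ when some $t_j$ is improper and otherwise equals the value of $F'$ evaluated under the deepened body $\Imu(\delta(\obullet),t_1).\cdots.\Imu(\delta(\obullet),t_n)\oplus\obullet$, a strictly smaller formula to which the induction hypothesis applies.

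I expect the main obstacle to be the $\gtrdot$ case rather than the propositional bookkeeping. Concretely, I must verify that the override rule E for $[P\gtrdot F']$—which yields $\mathbb{U}$ exactly when the attributive body is improper or unforced—dovetails with the positive clause so that the two can never produce distinct values, and that the descent into attributive domains through $\obullet$ is governed by the decreasing formula-size measure. That descent is well-founded for a fixed finite formula regardless of $\mathcal{D}$; finiteness enters only to make the quantifier unfolding terminate, and it is the combination of these two that lets the induction close. With uniqueness of the three-valued assignment established, no cohering structure can satisfy both $F$ and $\neg F$, which is the asserted consistency.
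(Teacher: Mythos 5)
Your proposal is correct, and it follows the same pipeline as the paper's own (appendix) argument: the embedding $\mathbf{F}$ into \epgl{} yielding negation normal form, the finite unfolding of $\forall$ and $\exists$ licensed by finiteness of $\mathcal{D}$, the reduction $\mathbf{G}^{\fixpoint}$ to unit-chain expansion, and an induction showing the forcing relation assigns exactly one of $\mathbb{T}$, $\mathbb{F}$, $\mathbb{U}$ --- this last step is precisely the paper's lemma on independence of the three values. Where you genuinely diverge is the endgame. The paper does not stop at single-valuedness: it further rewrites the normal form into disjunctive normal form, sorts the atoms of each conjunctive clause into three groups (the $\in$-atoms introduced by $\mathbf{F}$, the $=$ and $\simeq$ atoms, and the basic-predicate atoms), and rules out a simultaneous $\mathbb{T}$ and $\mathbb{F}$ group by group, invoking the consistency of finite ZF set theory for the first group. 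You instead observe that once every formula has a unique value, the negation clause (the value of $\neg F$ is the Bochvar $\notC$ of the value of $F$) already precludes $F$ and $\neg F$ from both being satisfied. Your route is shorter and arguably isolates the real content of the theorem --- well-definedness of the three-valued valuation --- whereas the paper's DNF detour makes explicit the reduction of object-level consistency to meta-level set-theoretic consistency, which in your version is absorbed into the base cases of the induction (``settled by a single ZFC set-membership or equality test''). One point you should make explicit if you write this up: transferring single-valuedness from the normal form back to $F$ requires preservation of all three values under $\mathbf{F}$ and $\mathbf{G}^{\fixpoint}$, which is strictly stronger than the satisfaction-preservation (equisatisfiability) the paper records; you do flag this (``forced to the same value by the same structures''), and it follows by the same routine induction, but it is not literally the lemma the paper states.
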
 
   \begin{proof}  
       Mostly straightforward. 
   \end{proof} 

\hide{
\begin{theorem}[Relative consistency]   
    On the assumption that ZFC is consistent, 
   predicate gradual logic is consistent. 
\end{theorem} 
\begin{proof}  
    I show that $\mathcal{M}, \epsilon^\dagger 
    \models F$ 

   By Lemma X, it suffices to prove consistency 
   of $\mathbf{F}(\mathcal{F})$, which 
   is a class of extended predicate gradual 
   logic formulas mapped 
   by $\mathbf{F}$ from predicate gradual logic.  
   Each element $F_a$ of $\mathbf{F}(\mathcal{F})$ 
   is in unit chain expansion (Lemma 
   \ref{unit_chain_expansion}). 
   It is straightforward, by the way, 
   that if we are to map 
   each such $F_a$ into nested sets,  
   then every aomic set \ryuta{define.}
   has either a chain or 
   else a basic formula as 
   its constituent. Now, 
   although it is not strictly necessary 
   to transform $F_a$, let us, for an expository 
   purpose, transform it into 
   disjunctive normal form. Call 
   the formula $F_b$. 
   Each constituent of a conjunctive clause in $F_b$  
   could be in either of the three groups: 
   having $\top, \bot$, $p^0$, $\neg p^0$, 
   $p(\term_1, \ldots, \term_n)$ 
   or $\neg p(\term_1, \ldots, \term_n)$; 
   having $t_1 = t_2$, $\neg t_1 = t_2$, 
   $t_1 \simeq t_2$ or $\neg t_1 \simeq t_2$; 
   or having $t_1^1.\cdots.t_n^1 \in 
   t_1^2.\cdots.t_n^2$ 
   or $\neg t_1^1.\cdots.t_n^1 \in 
   t_1^2.\cdots.t_n^2$. And probably 
   just assign a new propositional variable 
   depending whether the basic formula
   falls into true or false group. 
   And show that that completes embedding 
   into propositional gradual logic. 
   But that is only for the first group. 
   For the second and the third group, 
   replace all the equality judgement, 
   and see if they are consistent. 
\end{proof} 
I believe th
Let us now recall a result 
in \cite{Arisaka15Gradual}.    
By now direction for consistency proof 
is fairly clear. 
\begin{lemma}[Consistency of propositional 
    gradual logic] 
    Let 
\end{lemma}
\begin{lemma}    
    \ryuta{Rewrite. A little lengthy.} 
    Let $F_1$ be a formula in predicate gradual 
    logic. 
    Let $H$ denote either a basic formula 
    or else a formula 
    in the form: ${[\top \gtrdot[  
        \cdots \gtrdot[ \top \gtrdot F](\argument_n)](\argument_{n-1})\cdots](\argument_1)}$ such that 
    $F$ is a basic formula, and that 
    $n \ge 1$. Then

    If $F$ is in unit chain expansion, 
    then $F = \bigcirc F$ 
   Let $F$ be in unit chain expansion. 
   Then for any subformula of $F$ in the 
   form $F_1 \gtrdot F_2$, 
   it holds true that there exist 
   no subformulas of $F_1$ or of $F_2$ 
   in either of the forms: 
   $F_a \wedge F_b$ or $F_a \vee F_b$. 
\end{lemma}  
\begin{lemma}[Reduction into unit chain expansion]  
    \normalfont 
   Let $F$ be a predicate gradual logic formula.
    Then it holds true that 
    there exists $k < \omega$ 
    such that $\mathbf{G}^n(\mathbf{F}(F)) = \mathbf{G}^{n+1}(\mathbf{F}(F))$ 
    for all $n \ge k$.  
    Further, $\mathbf{G}^n(\mathbf{F}(F))$ is unique 
    to $F$. 
\end{lemma} 
\begin{proof} 
Straightforward.\hfill$\Box$
\end{proof}        
\ryuta{Probably a little more remark: 
    that $\wedge$ and $\vee$ occur 
    only at the top level (define).} 
\begin{definition}[$\in$-normal form]  
    \normalfont 
    Let $\mathbf{H}$ be a function 
    on formulas 
\end{definition} 
\begin{lemma} 
      
\end{lemma} 

And for  
no two terms is it possible 
that both $t_1 = t_2$ and $\neg t_1 = t_2$  
or that both $t_1 \simeq t_2$ and $\neg t_1 \simeq 
t_2$, as is obvious from the given semantics 
for them. 
\begin{theorem}[Consistency] 
    \normalfont 
    Let $F$ be a formula in predicate gradual logic, 
    and let $\mathcal{M}$ be a structure
    that coheres to $F$. 
    Then if $\mathcal{M}, \epsilon^\dagger \models  
    \mathbf{F}(F)$ (in extended predicate gradual logic), then it is not the case that $\mathcal{M}, \epsilon^{\dagger}
    \models \mathbf{F}(\neg F)$, 
    and if $\mathcal{M}, \epsilon^{\dagger} \models 
    \mathbf{F}(\neg F)$, then it is not the case that $\mathcal{M}, \epsilon^{\dagger}  
    \models \mathbf{F}(F)$. 
\end{theorem}  
\begin{proof} 
 
\end{proof} 
\begin{theorem}[Consistency]   
    Let $F$ be a formula, and let $\mathcal{M}$ 
    be a structure that coheres to $F$. 
    Then 
    it holds that either 
    $\mathcal{M}, \albullet \models F$ 
    or else $\mathcal{M}, \albullet \models \neg F$ 
    for every $\albullet$ possible. \ryuta{Make 
        it precise. Or semantically 
    is it better to assume 
    the theory of this logic $\Gamma$, 
    and show that any model that satisfies 
    $\Gamma$ also satisfies every 
    logical consequence $F$ of $\Gamma$?} 
\end{theorem}  
\begin{proof}  
    It suffices to prove the case 
    for $\albullet = \epsilon$.  
    Or alternatively assume 
    that $\mathcal{M}, \albullet \models \Gamma$ 
    for the theory of logic. 
    Show that $\mathcal{M}, \albullet \models F$. 
\end{proof}   
} 
\hide{ 
\section{Predicate Gradual Logic and Propositional 
    Gradual Logic}   
I show that there is an embedding 
function from propositional gradual logic 
into predicate gradual logic such that 
satisfiability be preserved. 
It is of interest to make 
some comparisons to propositional gradual logic. 
While, if predicate focal logic were 
a direct extension of propositional focal logic, 
the latter should be easily embedded into 
the former, the truth is that some 
concept of propositional gradual logic 
are chosen to be excluded.

\begin{definition}[ ]
   I define a function $\mathbf{H}$ as follows: 
   \begin{enumerate} 
       \item 
   \end{enumerate}
\end{definition} 
\begin{proposition}[Correspondence to 
    propositional gradual logic] 
    Let $\mathbf{H}$ be a recursive function 
    defined as follows.   
    \begin{enumerate} 
        \item 
    \end{enumerate}
\end{proposition}

\section{Predicate Gradual Logic and Aristotle's Syllogisms}   
In the other work on propositional logic,  
I mentioned that quantification 
will be required to represent Aristotle's syllogisms. 
I show that basically all that play 
a major part in Aristotle's discourse 
in Prior Analytics are represented in 
predicate gradual logic.  In this section, 
those in 
the form: $\forall x.P_b(x) \supset P_a(x)$, 
are universal affirmative expressions, 
meant to be read: Every $b$ is $a$ (A in 
Figure 1), those in 
the form: $\forall x.P_b(x) \supset \neg P_a(x)$ 
are universal negative expressions, meant 
to be read: No $b$ is $a$ (E in Figure 1), 
those in the form: $\exists x.P_b(x) \wedge P_a(x)$ 
are particular affirmative expressions, meant 
to be read: Some $b$ is $a$ (I in Figure 1), 
and those in the form: $\exists x.P_b(x) \wedge 
\neg P_a(x)$ are particular negative expressions, 
meant to be read: Some $b$ is not $a$ (O in Figure 1). 
\subsection{Conversion}  
There are three conversion rules: 
one for universal negative, one for 
universal affirmative, and one 
for particular affirmative. While 
Fregean predicate logic seems to 
work except for the second one, 
actually it does not: Aristotle's logic, 
as I have shown, has three truth values, 
and classical logic returns 
$\mathbb{T}$ or $\mathbb{F}$ when 
$\mathbb{U}$ is required. 
\begin{enumerate}
    \item \textbf{Universal} 
        \begin{enumerate}[label=(\Alph*)] 
            \item Negative.
                \begin{enumerate}
                    \item $(\forall x.P_1(x)
                \supset \neg P_2(x)) \supset 
                (\forall x.P_2(x) \supset  
                \neg P_1(x))$. 
             \item Example: if no pleasure is good, 
                 then no good will be pleasure. 
         \end{enumerate}  
     \item Affirmative.  
         \begin{enumerate} 
             \item $(\forall x.P_1(x) \supset 
                 P_2(x)) \supset 
                 (\exists x.P_2(x) \wedge P_1(x))$. 
             \item Example: if every pleasure is good, 
                 some good must be pleasure.  
         \end{enumerate}
        \end{enumerate} 
    \item \textbf{Particular} 
        \begin{enumerate}[label=(\Alph*)] 
            \item Affirmative. 
                \begin{enumerate} 
                    \item $(\exists x.P_1(x) \wedge 
                        P_2(x)) \supset 
                        (\exists x.P_2(x) \wedge 
                        P_1(x))$. 
                    \item Example: if some pleasure is good, 
                        then some good will be pleasure.
                \end{enumerate}
        \end{enumerate}
\end{enumerate}
\subsection{Syllogisms: three figures}  
There are three figures in Aristotle's syllogisms 
as shown below. The first figure displays syllogisms 
    for three terms $a, b$ and $c$ when 
    $b$ is the middle term which 
    is a predicate for $c$ (minor extreme), and which is predicated 
    by $a$ (major extreme). The second figure 
    displays syllogisms 
    for three terms $a, b$ and $c$ when 
    $a$ is the middle term which 
    is a predicate both for $b$ and $c$
    and when $b$ (major extreme) is closer in relation than $c$ (minor extreme) to $a$. The third figure displays 
    syllogisms for three terms $a, b$ and $c$ when 
    $c$ is the middle term that is predicated both 
    by $a$ and $b$ and when 
    $a$ (major extreme) is further in relation than $b$ 
    (minor extreme) to $c$. While again, if 
    every sentence is to be evaluated of truth/falsehood, Fregean predicate logic does appear to return a correct 
    truth value, Aristotle's logic 
    is properly three-valued. 
    Predicate gradual logic returns a matching 
    truth value out of $\mathbb{T}, \mathbb{F}$ 
    and $\mathbb{U}$ to each expression below. 
\begin{enumerate} 
    \item \textbf{First figure}{\ }
        \begin{enumerate}[label=(\Alph*)] 
            \item Universal affirmative in major and 
                minor: \\
                    $(\forall x.P_b(x) \supset P_a(x))
                \wedge (\forall x.P_c(x) \supset 
                P_b(x)) \supset 
                (\forall x.P_c(x) \supset P_a(x))$. 
    \item Universal negative in major and universal
        negative in minor: \\
             $(\forall x.P_b(x) \supset \neg P_a(x)) 
        \wedge (\forall x.P_c(x) \supset P_b(x)) 
        \supset (\forall x.P_c(x) \supset \neg P_a(x))$. 
   \item Universal affirmative in major and
       particular affirmative in minor: \\
            $(\forall x.P_b(x) \supset P_a(x)) 
               \wedge (\exists x.P_c(x) \wedge P_b(x))
               \supset (\exists x.P_c(x) \wedge P_a(x))$.
   \item Universal negative in major and particular 
       affirmative in minor: \\
            $(\forall x.P_b(x) \supset \neg P_a(x))
               \wedge (\exists x.P_c(x) \wedge P_b(x)) 
               \supset (\exists x.P_c(x) \wedge \neg 
               P_a(x))$.  
    \end{enumerate} 
\item \textbf{Second figure}{\ } 
    \begin{enumerate}[label=(\Alph*)] 
        \item Universal negative in major 
            and universal affirmative in minor:\\
                $(\forall x.P_b(x) \supset \neg P_a(x)) \wedge (\forall x.P_c(x) \supset P_a(x)) 
                    \supset (\forall x.P_c(x) \supset \neg P_b(x))$.
        \item Universal affirmative in major 
            and universal negative in minor: \\
                 $(\forall x.P_b(x) \supset P_a(x))
                    \wedge (\forall x.P_c(x) \supset 
                    \neg P_a(x)) \supset 
                    (\forall x.P_c(x) \supset 
                    \neg P_a(x))$.  
        \item Universal negative in major 
            and particular affirmative in minor: \\
                 $(\forall x.P_b(x) \supset 
                    \neg P_a(x)) \wedge 
                    (\exists x.P_c(x) \wedge P_a(x)) 
                    \supset (\exists x.P_c(x) \wedge 
                    \neg P_b(x))$.  
        \item Universal affirmative in major 
            and particular negative in minor: \\
                 $(\forall x.P_b(x) \supset P_a(x))
                    \wedge (\exists x.P_c(x) \wedge 
                    \neg P_a(x)) \supset 
                    (\exists x.P_c(x) \wedge \neg 
                    P_b(x))$. 
    \end{enumerate}  
\item \textbf{Third figure}{\ } 
    \begin{enumerate}[label=(\Alph*)] 
        \item Universal affirmative in major 
            and minor: \\
                 $(\forall x.P_c(x) \supset P_a(x))
                    \wedge (\forall x.P_c(x) \supset 
                    P_b(x)) \supset 
                    (\exists x.P_b(x) \wedge P_a(x))$.
        \item Universal negative in major 
            and universal affirmative in minor: \\
                $(\forall x.P_c(x) \supset 
                    \neg P_a(x)) \wedge 
                    (\forall x.P_c(x) \supset P_b(x))
                    \supset (\exists x.P_b(x) \wedge 
                    \neg P_a(x))$.  
        \item Particular affirmative in major 
            and universal affirmative in minor: \\
                $(\exists x.P_c(x) \wedge P_a(x)) 
                    \wedge (\forall x.P_c(x) \supset 
                    P_b(x)) \supset 
                    (\exists x.P_b(x) \wedge P_a(x))$. 
        \item Universal affirmative in major 
            and particular affirmative in minor: \\
                $(\forall x.P_c(x) \supset P_a(x))
                   \wedge (\exists x.P_c(x) \wedge 
                    P_b(x)) \supset 
                    (\exists x.P_b(x) \wedge P_a(x))$.
        \item Particular negative in major 
            and universal affirmative in minor: \\
                $(\exists x.P_c(x) \wedge 
                    \neg P_a(x)) \wedge 
                    (\forall x.P_c(x) \supset 
                    P_b(x)) \supset 
                    (\exists x.P_b(x) \wedge 
                    \neg P_a(x))$. 
        \item Universal negative in major 
            and particular affirmative in minor: \\
                 $(\forall x.P_c(x) \supset 
                    \neg P_a(x)) \wedge 
                    (\exists x.P_c(x) \wedge P_b(x)) 
                    \supset (\exists x.P_b(x) \wedge 
                    \neg P_a(x))$.  
    \end{enumerate}
\end{enumerate}
} 
\hide{ 
In a particular attribute normal interpretation, 
we have some striking result. Let us 
call {\it dictionary interpretation} 
the attribute normal interpretation 
where $\mathfrak{D}(\delta(\obullet)) = \mathcal{D}$ 
and $\mathfrak{P}(\delta(\obullet)) = \mathcal{P}$. 
\begin{theorem}[Conversion] 
    If $\mathcal{M}, \obullet \models 
    P(t_1, \ldots, t_n) \supset F$, 
    then $\mathcal{M}, \obullet \models  
    [P \gtrdot F](t_1, \ldots, t_n)$. 
\end{theorem} 

}
\section{Revisiting the Mac sentence}  
\begin{enumerate}[label={(\protect\perhapsstar\arabic*)}] 
        \setcounter{enumi}{5}
    \item Every school which employs a 
        handsome PC instructor  
         who teaches every 
         female learner using a Mac machine 
         buys it for him. 
         \setcounter{enumi}{5} 
    \staritem  {\small $\forall x\ \exists y \lessdot x \ 
            \forall z \lessdot y \lessdot x\ \exists w \lessdot z \lessdot y \lessdot x\ \exists u.([\school \gtrdot 
            ([\pcinstructor \gtrdot \handsome(^\circ y) \wedge ([\student 
            \gtrdot \female(^\circ z) \wedge 
            (\Mac(w) \wedge  
            \use(^\circ z, w))
        ](z) 
            \supset \teach(^\circ y,z))
            ] (y) \wedge \employ(^\circ x,y) )]
            (x) \supset w \simeq u \wedge  
            \buy(x, u))$}. 
\end{enumerate}  
We may have the following semantic structure 
$(\mathcal{D}, \mathfrak{D}, \mathfrak{P}, \mu, 
\Imu, \sqsubseteq, \in^{\star})$: 
\begin{description} 
    \item[$\mathcal{D}$] is 
        $\{$school, PCinstructor, learner,
            Mac$\}$. 
    \item[$\mathfrak{D}$] satisfies: 
        \begin{enumerate} 
            \item PCinstructor $\in \mathfrak{D}($school$)$.
            \item learner $\in 
                \mathfrak{D}($PCinstructor$\lessdot$ school$)$. 
            \item Mac $\in 
                \mathfrak{D}($learner $\lessdot$ PCinstructor$\lessdot$school$)$. 
        \end{enumerate}
    \item[$\mathfrak{P}$] satisfies: 
        \begin{enumerate} 
            \item $\school, \employ \in \mathfrak{P}(\epsilon)$.
            \item $\pcinstructor, \teach \in \mathfrak{P}($school$)$. 
            \item $\handsome, \student 
                \in \mathfrak{P}($PCinstructor$\lessdot$school$)$. 
            \item $\female, \Mac, \use 
                \in \mathfrak{P}($learner $\lessdot$ PCinstructor$\lessdot$school$)$. 
        \end{enumerate}
    \item[$\Imu$] satisfies: 
        \begin{enumerate} 
            \item $\Imu(\epsilon, \school) = \{$school$\}$. 
            \item $\Imu($school$, \pcinstructor) = 
                \{$PCinstructor$\}$. 
            \item $\Imu($school$, \teach) = 
                \{($PCinstructor, learner$)\}$. 
            \item $\Imu($PCinstructor $\lessdot$school,
                \handsome
                $) = \{$PCinstructor$\}$. 
            \item $\Imu($PCinstructor $\lessdot$school,
                \student$) = \{$learner$\}$. 
            \item $\Imu($learner $\lessdot$ PCinstructor
                $\lessdot$ school, 
                $\female) = \{$learner$\}$.
            \item $\Imu($learner $\lessdot$ PCinstructor
                $\lessdot$ school,
                $\Mac) = \{$Mac$\}$. 
            \item $\Imu($learner $\lessdot$ PCinstructor
                $\lessdot$ school, 
                $\use) = \{$(learner, Mac)$\}$. 
        \end{enumerate} 
\end{description} 
Then it satisfies the well-formed formula 
($\star$6). 
\section{Predicate Gradual Logic and Aristotle's Syllogisms}   
In the other work on propositional logic
\cite{Arisaka15Gradual},  
I mentioned that quantification 
will be required to represent Aristotle's syllogisms. 
I show that basically all that play 
a major part in Aristotle's discourse 
in Prior Analytics are represented in 
predicate gradual logic.  In this section, 
those in 
the form: $\forall x.P_b(x) \supset P_a(x)$, 
are universal affirmative expressions, 
meant to be read: Every $b$ is $a$ (A in 
Figure 1), those in 
the form: $\forall x.P_b(x) \supset \neg P_a(x)$ 
are universal negative expressions, meant 
to be read: No $b$ is $a$ (E in Figure 1), 
those in the form: $\exists x.P_b(x) \wedge P_a(x)$ 
are particular affirmative expressions, meant 
to be read: Some $b$ is $a$ (I in Figure 1), 
and those in the form: $\exists x.P_b(x) \wedge 
\neg P_a(x)$ are particular negative expressions, 
meant to be read: Some $b$ is not $a$ (O in Figure 1). 
\subsection{Conversion}  
There are three conversion rules: 
one for universal negative, one for 
universal affirmative, and one 
for particular affirmative. While 
Fregean predicate logic seems to 
work except for the second one, 
actually it does not: Aristotle's logic, 
as I have shown, has three truth values, 
and classical logic returns 
$\mathbb{T}$ or $\mathbb{F}$ when 
$\mathbb{U}$ is required. 
\begin{enumerate}
    \item \textbf{Universal} 
        \begin{enumerate}[label=(\Alph*)] 
            \item Negative.
                \begin{enumerate}
                    \item $(\forall x.P_1(x)
                \supset \neg P_2(x)) \supset 
                (\forall x.P_2(x) \supset  
                \neg P_1(x))$. 
             \item Example: if no pleasure is good, 
                 then no good will be pleasure. 
         \end{enumerate}  
     \item Affirmative.  
         \begin{enumerate} 
             \item $(\forall x.P_1(x) \supset 
                 P_2(x)) \supset 
                 (\exists x.P_2(x) \wedge P_1(x))$. 
             \item Example: if every pleasure is good, 
                 some good must be pleasure.  
         \end{enumerate}
        \end{enumerate} 
    \item \textbf{Particular} 
        \begin{enumerate}[label=(\Alph*)] 
            \item Affirmative. 
                \begin{enumerate} 
                    \item $(\exists x.P_1(x) \wedge 
                        P_2(x)) \supset 
                        (\exists x.P_2(x) \wedge 
                        P_1(x))$. 
                    \item Example: if some pleasure is good, 
                        then some good will be pleasure.
                \end{enumerate}
        \end{enumerate}
\end{enumerate}
\subsection{Syllogisms: three figures}  
There are three figures in Aristotle's syllogisms 
as shown below. The first figure displays syllogisms 
    for three terms $a, b$ and $c$ when 
    $b$ is the middle term which 
    is a predicate for $c$ (minor extreme), and which is predicated 
    by $a$ (major extreme). The second figure 
    displays syllogisms 
    for three terms $a, b$ and $c$ when 
    $a$ is the middle term which 
    is a predicate both for $b$ and $c$
    and when $b$ (major extreme) is closer in relation than $c$ (minor extreme) to $a$. The third figure displays 
    syllogisms for three terms $a, b$ and $c$ when 
    $c$ is the middle term that is predicated both 
    by $a$ and $b$ and when 
    $a$ (major extreme) is further in relation than $b$ 
    (minor extreme) to $c$. While again, if 
    every sentence is to be evaluated of truth/falsehood, Fregean predicate logic does appear to return a correct 
    truth value, Aristotle's logic 
    is properly three-valued. 
    Predicate gradual logic returns a matching 
    truth value out of $\mathbb{T}, \mathbb{F}$ 
    and $\mathbb{U}$ to each expression below. 
\begin{enumerate} 
    \item \textbf{First figure}{\ }
        \begin{enumerate}[label=(\Alph*)] 
            \item Universal affirmative in major and 
                minor: \\
                    $(\forall x.P_b(x) \supset P_a(x))
                \wedge (\forall x.P_c(x) \supset 
                P_b(x)) \supset 
                (\forall x.P_c(x) \supset P_a(x))$. 
    \item Universal negative in major and universal
        negative in minor: \\
             $(\forall x.P_b(x) \supset \neg P_a(x)) 
        \wedge (\forall x.P_c(x) \supset P_b(x)) 
        \supset (\forall x.P_c(x) \supset \neg P_a(x))$. 
   \item Universal affirmative in major and
       particular affirmative in minor: \\
            $(\forall x.P_b(x) \supset P_a(x)) 
               \wedge (\exists x.P_c(x) \wedge P_b(x))
               \supset (\exists x.P_c(x) \wedge P_a(x))$.
   \item Universal negative in major and particular 
       affirmative in minor: \\
            $(\forall x.P_b(x) \supset \neg P_a(x))
               \wedge (\exists x.P_c(x) \wedge P_b(x)) 
               \supset (\exists x.P_c(x) \wedge \neg 
               P_a(x))$.  
    \end{enumerate} 
\item \textbf{Second figure}{\ } 
    \begin{enumerate}[label=(\Alph*)] 
        \item Universal negative in major 
            and universal affirmative in minor:\\
                $(\forall x.P_b(x) \supset \neg P_a(x)) \wedge (\forall x.P_c(x) \supset P_a(x)) 
                    \supset (\forall x.P_c(x) \supset \neg P_b(x))$.
        \item Universal affirmative in major 
            and universal negative in minor: \\
                 $(\forall x.P_b(x) \supset P_a(x))
                    \wedge (\forall x.P_c(x) \supset 
                    \neg P_a(x)) \supset 
                    (\forall x.P_c(x) \supset 
                    \neg P_a(x))$.  
        \item Universal negative in major 
            and particular affirmative in minor: \\
                 $(\forall x.P_b(x) \supset 
                    \neg P_a(x)) \wedge 
                    (\exists x.P_c(x) \wedge P_a(x)) 
                    \supset (\exists x.P_c(x) \wedge 
                    \neg P_b(x))$.  
        \item Universal affirmative in major 
            and particular negative in minor: \\
                 $(\forall x.P_b(x) \supset P_a(x))
                    \wedge (\exists x.P_c(x) \wedge 
                    \neg P_a(x)) \supset 
                    (\exists x.P_c(x) \wedge \neg 
                    P_b(x))$. 
    \end{enumerate}  
\item \textbf{Third figure}{\ } 
    \begin{enumerate}[label=(\Alph*)] 
        \item Universal affirmative in major 
            and minor: \\
                 $(\forall x.P_c(x) \supset P_a(x))
                    \wedge (\forall x.P_c(x) \supset 
                    P_b(x)) \supset 
                    (\exists x.P_b(x) \wedge P_a(x))$.
        \item Universal negative in major 
            and universal affirmative in minor: \\
                $(\forall x.P_c(x) \supset 
                    \neg P_a(x)) \wedge 
                    (\forall x.P_c(x) \supset P_b(x))
                    \supset (\exists x.P_b(x) \wedge 
                    \neg P_a(x))$.  
        \item Particular affirmative in major 
            and universal affirmative in minor: \\
                $(\exists x.P_c(x) \wedge P_a(x)) 
                    \wedge (\forall x.P_c(x) \supset 
                    P_b(x)) \supset 
                    (\exists x.P_b(x) \wedge P_a(x))$. 
        \item Universal affirmative in major 
            and particular affirmative in minor: \\
                $(\forall x.P_c(x) \supset P_a(x))
                   \wedge (\exists x.P_c(x) \wedge 
                    P_b(x)) \supset 
                    (\exists x.P_b(x) \wedge P_a(x))$.
        \item Particular negative in major 
            and universal affirmative in minor: \\
                $(\exists x.P_c(x) \wedge 
                    \neg P_a(x)) \wedge 
                    (\forall x.P_c(x) \supset 
                    P_b(x)) \supset 
                    (\exists x.P_b(x) \wedge 
                    \neg P_a(x))$. 
        \item Universal negative in major 
            and particular affirmative in minor: \\
                 $(\forall x.P_c(x) \supset 
                    \neg P_a(x)) \wedge 
                    (\exists x.P_c(x) \wedge P_b(x)) 
                    \supset (\exists x.P_b(x) \wedge 
                    \neg P_a(x))$.  
    \end{enumerate}
\end{enumerate}

\section{Conclusion}  
I presented a new logic and a new approach, which is 
for now a novel principle with bright outlooks, 
expected to offer a new perspective 
in formal logic. I demonstrated 
against the modern account of existential import, 
corroborating former anticipation \cite{Hart51,Strawson52}. 
I showed that Aristotle's syllogisms as well as 
conversion are realisable in 
predicate gradual logic. 
\hide{ 
\subsection{Mental spaces}    
Instead of doing this, I could just establish 
my own cognitive linguistics theory out of 
this logic, saying something like: 
predicate gradual logic could model 
many of interesting examples cited 
in the literature of cognitive linguistics.  
Mention what features are new to the examples. 
In Fauconnier's mental spaces \cite{Fauconnier85},

\ryuta{Identification Principle: a brief explanation needed.} 
\begin{enumerate}[label={(\protect\perhapsstar\arabic*)}]  
    \item In this painting, the girl with brown eyes 
        has green eyes. 
    \staritem $\exists x \ \exists z\ \exists 
    y \lessdot x\ \exists u \lessdot z.[\girl \gtrdot 
    \has(\cdot, y) \wedge [\eyes \gtrdot 
    \brown](y)](x) \wedge  
    ({x \simeq z}) \wedge 
    \neg ({x = z}) \wedge 
    ({y \lessdot x \simeq u \lessdot z}) \wedge  
    \neg ({y \lessdot x = u \lessdot z}) \wedge  
    \has(z, u \lessdot z)$. \\ 
\item Plato is on the top shelf.  [ID principle]
    \staritem $\forall x.\Plato(x) \supset 
    [\shelf \gtrdot \textsf{top}](x)$. \\ 
\item Plato is on the top shelf. It is bound 
    in leather. 
\item Plato is on the top shelf. You'll find 
    that he is a very interesting author. 
\item John kicked a door down.  (Part-whole schema)
   \staritem $\exists x\ \exists y \ \exists z \lessdot y. 
   \John(x) \wedge [\door \gtrdot \isPart(z,\cdot)](y) \wedge 
   \kick(x, z \lessdot y) \wedge  
   \down(y)$.   
   \item  The road runs through the field.  
   \staritem  ww\\
   \item  The road runs through the field.  
   \staritem The bridges walks.   
   \item The road runs through the field.  
\end{enumerate}  
}
\bibliography{references}
\bibliographystyle{plain} 
\hide{ 
\section*{Appendix} 
\section{Consistency of finite predicate gradual logic}      
I show a proof of Theorem 20. 
\begin{definition}[Extended predicate gradual 
    logic]  
    \normalfont  
    I extend $\mathcal{L}$ 
    with an additional binary operator $\in$
    as a logical symbol. I denote 
    the language by $\mathcal{L}^+$.  
    I also extend $\mathcal{R}$, 
   the set of rules for predicate gradual 
   logic formulas and predicates, 
   by 
   additional rules into \textbf{Formula}:  
   \begin{enumerate} 
       \item 
   $\tstar_1 \in \tstar_2$ 
   such that $\tstar_1$ and $\tstar_2$ 
   are term bodies. 
   \item 
   $[\top \gtrdot F](t_1, \ldots, t_n)$ 
   such that $F$ is a formula 
   and that $t_1, \ldots, t_n$ are terms.  
   \item  $[\bot \gtrdot F](t_1, \ldots, t_n)$ 
   such that $F$ is a formula 
   and that $t_1, \ldots, t_n$ are terms.  
   \end{enumerate} 
   I denote the extended $\mathcal{R}$ by $\mathcal{R}^+$.
   I denote all the formulas of $\mathcal{L}^+$ 
   generated by $\mathcal{R}^+$ 
   by $\mathcal{F}^+$ which 
   extends all the formulas of $\mathcal{L}$, which
   I denote by $\mathcal{F}$. 
   I call $\mathcal{L}^+$ the language  
   of extended predicate gradual logic, 
   or simply of {\epgl}
       and 
   $\mathcal{F}^+$ its formulas. $\Box$
\end{definition}  
\indent The semantics of 
{\epgl} extends  
that of predicate gradual 
logic with the following rules:  
\begin{enumerate} 
    \item[F] $\mathcal{M}, \obullet \models 
        \tstar_1 \in \tstar_2$ is $\mathbb{U}$ if  
        $\notC$ $\proper(\delta(\obullet), \tstar_1)$ 
        $\orC$ $\notC$ $\proper(\delta(\obullet), 
        \tstar_2)$. 
        \setcounter{enumi}{13}
    \item $\mathcal{M}, \obullet \models \tstar_1 \in \tstar_2$ if $\mathcal{I}_{\mu}(\delta(\obullet), \tstar_1) \in^{\delta(\obullet)} \mathcal{I}_{\mu}(\delta(\obullet), \tstar_2)$.  
    \item $\mathcal{M}, \obullet \models 
        \top(t_1, \ldots, t_n)$. 
    \item $\notC$ $\mathcal{M}, \obullet \models 
        \bot(t_1, \ldots, t_n)$.
\end{enumerate}   
The alphabetic rule F above  has the same priority as 
A - E, that is, it has a higher priority than any non-alphabetical 
rules. The other non-alphabetical rules 14 - 16 
have a lower priority than A - F. \\\\ 
\hide{
and modifies the rule for
$\forall v.F$: \\\\
    \indent $\mathcal{M}, \obullet 
        \models \forall v.F$ 
        iff $\mathcal{M}[\mu(v) \mapsto d^n], 
        \obullet \models F$ 
        for each $d^n \in \mathfrak{D}(\delta(\obullet))$.\\\\ 
This logic does not have so much philosophical 
motivation. It is used, at least for my purpose, 
for simplifying the proof of consistency of 
predicate gradual logic. 
}
By a basic formula of \epgl, 
I mean either a basic formula in 
$\mathcal{F}$, or 
     $t^1_1.\cdots.t^1_n \in t^2_1.\cdots.t^2_n$ 
     for some terms, 
        or $\neg t^1_1.\cdots.t^1_n \in t^2_1.\cdots.t^2_n$
        for some terms.  
\begin{definition}[Embedding function]  
    \normalfont  
    I define $\mathbf{F}$ to be a recursive   
    function 
    from $\mathcal{F}^+$ into $\mathcal{F}^+$ such that: 
    \begin{enumerate}
        \item $\mathbf{F}(\neg \neg F) = \mathbf{F}(F)$. 
        \item $\mathbf{F}(F) = F$ if $F$ is a basic 
            formula.  
            \hide{ 
        \item $\mathbf{F}(\top) = \top$. 
        \item $\mathbf{F}(\neg \top) = \bot$.
        \item $\mathbf{F}(\bot) = \bot$. 
        \item $\mathbf{F}(\neg \bot) = \top$.    
        }
        \item $\mathbf{F}(F_1 \wedge F_2) 
            = \mathbf{F}(F_1) \wedge \mathbf{F}(F_2)$.  
        \item $\mathbf{F}(\neg (F_1 \wedge F_2)) 
            = \mathbf{F}(\neg F_1) \vee 
            \mathbf{F}(\neg F_2)$. 
        \item $\mathbf{F}(F_1 \vee F_2) 
            = \mathbf{F}(F_1) \vee \mathbf{F}(F_2)$.  
        \item $\mathbf{F}(\neg (F_1 \vee F_2)) 
            = \mathbf{F}(\neg F_1) \wedge 
            \mathbf{F}(\neg F_2)$. 
        \item $\mathbf{F}(F_1 \supset F_2) 
            = \mathbf{F}(\neg F_1) \vee \mathbf{F}(F_2)$. 
        \item $\mathbf{F}(\neg (F_1 \supset F_2)) 
            = \mathbf{F}(F_1) \wedge \mathbf{F}(\neg F_2)$.
        \item $\mathbf{F}(\forall v.F) 
            = \forall v.\mathbf{F}(F)$. 
        \item $\mathbf{F}(\neg \forall v.F) 
            = \exists v.\mathbf{F}(\neg F)$. 
        \item $\mathbf{F}(\exists v.F) 
            = \exists v.\mathbf{F}(F)$. 
        \item $\mathbf{F}(\neg \exists v.F) 
            = \forall v.\mathbf{F}(\neg F)$.  
        \item $\mathbf{F}([\bigwedge_{i=1}^{l (\ge 2)}
            P_i(t^i_1, \ldots, t^i_n)](t_1, \ldots, t_n))
            = \\ 
            \bigwedge_{i=1}^{l}
            \mathbf{F}(P_i(t^i_1, \ldots, t^i_n)) 
            \wedge \bigwedge_{i=1}^l 
            t_1^i.\cdots.t_n^i  
            \in t_1.\cdots.t_n$. 
    \item  $\mathbf{F}(\neg 
        [\bigwedge_{i=1}^{l (\ge 2)}P_{i}
        (t_1^i, \ldots, t_n^i)
        ](t_1, \ldots, t_n)) 
        =\\ \bigvee_{i=1}^l \mathbf{F}(\neg 
        P_i(t_1^i, \ldots, t_n^i)) 
        \vee \bigvee_{i=1}^l \neg \
        t_1^i.\cdots.t_n^i 
        \in 
        t_1.\cdots.t_n$.   
    \item $\mathbf{F}([P \gtrdot F](t_1, 
        \ldots, t_n)) = \mathbf{F}( 
        P(t_1, \ldots, t_n)) \wedge 
        [\top \gtrdot \mathbf{F}(F)](t_1, 
        \ldots, t_n)$. 
    \item $\mathbf{F}(\neg [P \gtrdot F](t_1, 
        \ldots, t_n)) =  
        \mathbf{F}(\neg P(t_1, \ldots, t_n)) 
        \vee [\top \gtrdot \mathbf{F}(\neg F)](t_1, 
        \ldots, t_n)$. 
    \end{enumerate}    
    The last two rules apply
    also to cases where 
    $P$ is $\top$ or $\bot$. 
    $\Box$ 
\end{definition}       
\begin{lemma}[Negation normal form]  
     If $F$ is a formula in predicate gradual logic, 
     then $\mathbf{F}(F)$ contains 
     a sub-formula of the form $\neg F_1$ 
     for some formula $F_1$ of {\epgl} 
      only if $F_1$ is a basic formula. 
     Further, there occurs no $\supset$ 
     in $\mathbf{F}(F)$.  
\end{lemma}     
\begin{proof} 
    By the definition of $\mathbf{F}$. 
\end{proof}  
\subsection{Finite cases}     
From here on I assume that 
$\mathcal{D}$ is finite. 
And from here on  
I assume that each formula in $\mathcal{F}^+$ 
carries enough information
to carry out structure updates.  
That is, I am in a way merging the semantic information 
$\mathcal{M},\obullet$ and $\mathcal{F}^+$. 
For instance, if 
I am calculating 
$\mathcal{M}, \obullet \models \top \wedge \bot$, 
then $\top \wedge \bot$ is said to be carrying 
$\mathcal{M}$ and $\obullet$. 
Now, the semantics is such that 
$\mathcal{M}, \obullet \models \top \wedge \bot$ 
if $\mathcal{M}, \obullet \models \top$ $\andC$ 
$\mathcal{M}, \obullet \models \bot$. 
Therefore, from the fact that 
$\top \wedge \bot$, in this case, 
is carrying $\mathcal{M}$ and $\obullet$, 
it follows that 
both $\top$ and $\bot$ in this case 
are also carrying $\mathcal{M}$ and $\obullet$. \\
\indent From here on, I use $\bigwedge_{v} F$ to mean $\forall v.F$ 
and $\bigvee_{v} F$ to mean $\exists v.F$, 
assuming that $\forall v.F$ is {\it almost} an abbreviation 
of finite conjunction which, however, carries 
enough information, i.e. $v$, to take care of 
structure modification with the structure information 
made available to $F$, 
and 
$\exists v.F$ is also {\it almost} an abbreviation of finite 
disjunction that carries enough information, $v$, 
to take care of structure modification 
with the structure information made available 
to $F$. For instance, 
if I am calculating  
$\mathcal{M}, \obullet \models 
\forall x.p(x)$ such that 
$\mathfrak{D}(\obullet) = \{d_1^2, d_2^{30}\}$, 
then in this very case 
$\bigwedge_x p(x) = p(x)_1 \wedge p(x)_2$, 
and $\bigwedge_x p(x)$  
 is carrying 
$\mathcal{M}$ and $\obullet$, 
and $p(x)_1$ is carrying 
$\zeta(\mathcal{M}, \delta(\obullet), x, d^2_1)$ and 
$\obullet$, and $p(x)_2$ is carrying  
$\zeta(\mathcal{M}, \delta(\obullet), x, d^{30}_2)$ and 
$\obullet$. Similarly for $\exists = \bigvee$. 
I do not make this any more concrete. 
\begin{definition}[Unit chains]  
    \normalfont 
Let  $\argument$ refer  to $t_1, \ldots, t_n$ 
for some terms. 
   I say that a formula $F$ 
   is a chain 
   iff $F$ is in the form: 
   $[P_1 \gtrdot [\cdots \gtrdot [P_n \gtrdot 
   F_1](\argument_n)]\cdots](\argument_1)$,  
   $n \ge 1$, 
   for some predicates and a formula. 
   I say that it is a unit chain  
   iff $P_1 = \cdots = P_n = \top$ 
   $\andC$ $F_1$ is a basic formula. 
   $\Box$ 
\end{definition}  
\hide{ 
\begin{definition}[Unit chain expansion]  
    \normalfont 
    Let $\mathcal{O}^{u}$ be 
    a subclass of $\mathcal{F}^+$  
    such that if $F \in \mathcal{O}^{u}$, 
    then $F$ is a basic formula 
    or else a unit chain. Then I 
    define a {\it formula in unit chain expansion} 
    as follows: 
    \begin{enumerate} 
        \item Any element in $\mathcal{O}^u$. 
        \item $F_1 \wedge F_2$ 
            if $F_1$ and $F_2$ are 
            formulas 
            in unit chain expansion. 
        \item $F_1 \vee F_2$ 
            if $F_1$ and $F_2$ are 
            formulas in unit chain expansion. 
    \end{enumerate}  
    I denote the class of 
    all the formulas in unit chain expansion 
    by $\mathcal{F}^u$. 
    $\Box$ 
\end{definition} 
}
I denote 
either a basic formula in EPGL or  
a unit chain by $G$ with or without a subscript. 
\begin{definition}[Disjunctive 
    normal forms] 
   An EPGL formula $F$ is defined 
   to be in disjunctive normal form 
   only if 
   $F$ is in the form: $\exists i, j, k \in \mathbb{N}\
   \exists h_0, \ldots, h_i \in \mathbb{N}.
   \bigvee_{i=0 }^{k}(\bigwedge_{j}^{h_i} G_{ij})$. 
\end{definition} 
\begin{definition}[Reductions] 
    \normalfont   
Let us define two reduction schemata. 
    \begin{enumerate} 
        \item $[\top \gtrdot F_1 \wedge F_2](\argument)
            \leadsto [\top \gtrdot F_1](\argument)
            \wedge [\top \gtrdot F_2](\argument)$. 
        \item $[\top \gtrdot F_1 \vee F_2](\argument)
            \leadsto [\top \gtrdot F_1](\argument)
            \vee [\top \gtrdot F_2](\argument)$. 
    \end{enumerate}
    Let $\mathbf{G}$ be a non-deterministic function  
    that takes a formula in $\epgl$ 
     and that outputs  
    a formula in $\epgl$. 
    If neither of the reduction rules 
    applies to $F$, then 
    $\mathbf{G}(F) = F$. Otherwise, 
    $\mathbf{G}(F) = F_1$ such that 
    $F_1$ derives from applying 
    a reduction rule once.  
    I define $\mathbf{G}^{\fixpoint}(F)$ 
    to be some $\mathbf{G}^n(F)$  
    for some finite $n$ such that
    $\mathbf{G}^n(F) = \mathbf{G}^{n+1}(F)$.   
     $\Box$ 
\end{definition}       
\begin{lemma}[Unit chain expansion]  \label{unitchainexpansion} 
    \normalfont 
  Let $F$ be a predicate gradual logic 
  formula. Then 
  $\mathbf{G}^{\fixpoint}(\mathbf{F}(F))$ 
  exists.  Furthermore, 
  it holds true that  any chain that occurs 
    in 
    $\mathbf{G}^{\fixpoint}(\mathbf{F}(F))$ 
    is a unit chain. 
\end{lemma} 
\begin{proof}  
      Straightforward. 
\end{proof}   
\begin{lemma}[Equisatisfiability]\label{equisatisfy}   
    \normalfont 
    For $F \in \mathcal{F}$, $\mathcal{M}$ satisfies 
    $F$ in predicate gradual logic (assume 
    that $F$ is carrying $\mathcal{M}$ and $\epsilon^{\dagger}$) 
    iff it satisfies $\mathbf{F}(F)$ (assume 
    likewise that $F$ is carrying $\mathcal{M}$ and $\epsilon^{\dagger}$) 
     iff it satisfies $\mathbf{F}^k(F)$ for 
     $k \ge 1$ (assume likewise that 
     $F$ is carrying $\mathcal{M}$ and $\epsilon^{\dagger}$) iff it satisfies $\mathbf{G}^{\fixpoint}(
     \mathbf{F}(F))$ (assume likewise that 
     $F$ is carrying $\mathcal{M}$ and $\epsilon^{\dagger}$) 
     in $\epgl$. 
\end{lemma}  
\begin{proof} 
    By Lemma \ref{unitchainexpansion}, 
    the definition of $\mathbf{F}$, 
    the definition of $\mathbf{G}$, 
    and EPGL semantics. 
\end{proof}     
\hide{ 
\begin{lemma}[Negation of a unit chain]  
    \normalfont  
   $\mathcal{M}$ satisfies  
   $G$ in EPGL iff: 
   \begin{enumerate} 
       \item If $G$ is a basic formula with 
           a prefixed $\neg$, then 
           $\mathcal{M}$ satisfies $F_2$. 
       \item If $G$ is a basic formula 
           without a prefixed $\neg$, 
           then $\mathcal{M}$ satisfies $\neg F_2$. 
       \item If $G$ is  
            $[\top \gtrdot[\cdots \gtrdot [\top \gtrdot 
   F_1](\argument_n)]\cdots](\argument_1)$  
   and if $F_1$ is a basic formula 
   with a prefixed $\neg$, 
   then $\mathcal{M}$ satisfies  
$[\top \gtrdot[\cdots \gtrdot [\top \gtrdot 
   F_1](\argument_n)]\cdots](\argument_1)$.  
      \item If $G$ is $[\top \gtrdot[\cdots \gtrdot [\top \gtrdot 
   F_1](\argument_n)]\cdots](\argument_1)$  
   and if $F_1$ is a basic formula 
   without a prefixed $\neg$, 
   then  $\mathcal{M}$ satisfies  
$[\top \gtrdot[\cdots \gtrdot [\top \gtrdot 
\neg F_1](\argument_n)]\cdots](\argument_1)$.  
   \end{enumerate}
\end{lemma} 
\begin{proof} 
    Use Lemma \ref{equisatisfy}. 
    Straightforward. 
\end{proof}  
\ryuta{[Negation of a unit chain: Corrected]\\  
    \normalfont   
    Let $F_2$ be a basic formula 
    without a prefixed $\neg$. 
   Then 
   $\mathcal{M}$ satisfies  
   $\neg G$ in EPGL iff: 
   \begin{enumerate} 
       \item If $G$ is some $\neg F_2$, 
           then 
           $\mathcal{M}$ satisfies $F_2$. 
       \item If $G$ is some $F_2$
           then $\mathcal{M}$ satisfies $\neg F_2$. 
       \item If $G$ is  
            $[\top \gtrdot[\cdots \gtrdot [\top \gtrdot 
            \neg F_2](\argument_n)]\cdots](\argument_1)$,  
   then $\mathcal{M}$ satisfies  
$[\top \gtrdot[\cdots \gtrdot [\top \gtrdot 
   F_2](\argument_n)]\cdots](\argument_1)$.  
      \item If 
          $G$ is $[\top \gtrdot[\cdots \gtrdot [\top \gtrdot 
   F_2](\argument_n)]\cdots](\argument_1)$, 
   then  $\mathcal{M}$ satisfies  
$[\top \gtrdot[\cdots \gtrdot [\top \gtrdot 
\neg F_2](\argument_n)]\cdots](\argument_1)$.  
   \end{enumerate}
} 
}
\begin{lemma}[Independence of three values
    for formulas in unit chain expansion]  
    \normalfont 
   Let $F$ be a predicate gradual logic expression. 
   It holds true that 
   $\mathcal{M}, \obullet \models 
   \mathbf{G}^{\fixpoint}(\mathbf{F}(F))$ (assume 
   that $F$ is carrying 
   $\mathcal{M}$ and $\obullet$) 
   is either $\mathbb{T}, \mathbb{F}$, 
   or else $\mathbb{U}$, but not more than 
   one of them. 
\end{lemma} 
\begin{proof}    
    Let us denote $\mathcal{M}, \obullet \models F$ 
    by  $\mathcal{U}_{\mathcal{M}, \obullet}(F)$. 
    Let us denote $\mathbf{G}^{\fixpoint}(\mathbf{F}(F))$
    by $F_a$.  
        Proof is by the complexity of $F_a$: 
    $F_a$ is said to be more complex than $F'$ 
    iff $F'$ is a strict subformula of $F_a$. 
    If $F_a$ is a basic formula, 
    then 
    if $\mathcal{U}_{\mathcal{M},\obullet}(F_a) = \mathbb{U}$, 
    it cannot be that $\mathcal{U}_{\mathcal{M}, \obullet}(F_a)$ 
    is also $\mathbb{T}$ or $\mathbb{F}$, 
    for the condition that makes 
    it $\mathbb{U}$ has a higher priority. 
    And it is straightforward 
    to see from the definition of 
    semantics that if $\mathcal{U}_{\mathcal{M}, \obullet}(F_a)$ 
    is $\mathbb{T}$, 
    then $\notC$ ($\mathcal{U}_{\mathcal{M}, \obullet}(F_a)$ 
    is $\mathbb{F}$), 
    and if $\mathcal{U}_{\mathcal{M}, \obullet}(F_a)$ is $\mathbb{F}$, 
    then $\notC$ ($\mathcal{U}_{\mathcal{M}, \obullet}(F_a)$ 
    is $\mathbb{T}$).
    For inductive cases, consider 
    in which form $F$ is. 
    \begin{enumerate} 
        \item $F_a = F_1 \wedge \cdots \wedge F_n$  
            for some $F_1$, ..., and $F_n$:  
            \begin{enumerate}[label=(\Alph*)] 
                \item 
              Suppose 
               $F_i$ for each 
              $1 \le i \le n$ is carrying 
              $\mathcal{M}_i$
              (all of which are precisely determined 
              by that $F$ is carrying $\mathcal{M}$). 
                    If $\mathcal{U}_{\mathcal{M}_i, \obullet}(F_i) = \mathbb{U}$ for any 
                    $1 \le i \le n$, 
            then by 
            the definition of EPGL semantics, 
            $\mathcal{U}_{\mathcal{M}_1, 
                \obullet}(F_1)$ 
            $\andC$ $\cdots$ $\andC$ 
$\mathcal{U}_{\mathcal{M}_n, 
                \obullet}(F_n)$ is 
            $\mathbb{U}$. 
               \item Trivially 
$\mathcal{U}_{\mathcal{M}_1, 
                \obullet}(F_1)$ 
            $\andC$ $\cdots$ $\andC$ 
$\mathcal{U}_{\mathcal{M}_n, 
                \obullet}(F_n)$ is 
                   either $\mathbb{T}$ 
                   or else $\mathbb{F}$, otherwise.  
           \end{enumerate} 
       \item $F_a = F_1 \vee \cdots \vee F_n$ for some 
           $F_1$, ..., and $F_n$: similar.  
       \item $F_a = [\top \gtrdot F](t_1, \ldots, t_n)$ 
           for some $F$ and some terms: 
           \begin{enumerate}[label=(\Alph*)]  
               \item 
                   $\mathcal{U}_{\mathcal{M}, \obullet}([\top 
                   \gtrdot F](t_1, 
                   \ldots, t_n)) = 
                   \mathcal{U}_{\mathcal{M}, \mathcal{I}_{\mu}
                       (\delta(\obullet), t_1).
                       \cdots.\mathcal{I}_{\mu}
                       (\delta(\obullet), t_n) 
                       \oplus \obullet}(F)$.   
                 Induction hypothesis.               
           \end{enumerate}
    \end{enumerate} 
    These cover all cases. 
\end{proof}   
Now, it is clear that $\wedge$ and $\vee$ 
are distributive 
over each other. Therefore: 
\begin{lemma}    
    \normalfont 
    Let $F_1$ be a formula in EPGL, 
    then there exists $F_2$
    in disjunctive normal 
    form such that $F_2$ derives 
    from $F_1$ by distributivity of 
    $\wedge$ and $\vee$. 
    Moreover, $\mathcal{M}$ satisfies 
    $F_1$ (assume 
    that $F_1$ is carrying $\mathcal{M}$ 
    and $\epsilon^{\dagger}$) 
    iff $\mathcal{M}$ satisfies $F_2$ 
    (assume that $F_2$ is carrying 
    $\mathcal{M}$ and $\epsilon^{\dagger}$ likewise). 
\end{lemma} 
\begin{proof} 
   The first part is vacuous. The second 
   part is straightforward 
   by the semantics of $\wedge$ and $\vee$, 
   and by Lemma 8 and Lemma 10. 
\end{proof}
\begin{theorem}[Consistency in finite cases]   
    \normalfont 
    Assume that $\mathcal{D}$ is finite. 
    Let $\mathcal{M}$ be a structure.     
    Let $\Xi$ be the set of all 
    $F \in \mathcal{F}$ such that 
    $\mathcal{U}_{\mathcal{M}, \epsilon^{\dagger}}(F) = 
    \mathbb{T}$. 
    $\Xi$ is closed under $\wedge$ and $\vee$. 
    For no such $F$, it holds that 
    $\mathcal{U}_{\mathcal{M}, \epsilon^{\dagger}}(F) \in 
    \{\mathbb{F}, \mathbb{U}\}$. 
   \end{theorem}  
   \begin{proof}    
       For a finite $\mathcal{D}$, 
       we can assume the quasi  
       finite 
       conjunction for 
       universal quantification, 
       and the quasi finite  disjunction for 
       existential quantification.   
Let $\mathbf{F}[\mathcal{F}]$ 
    be the range of $\mathbf{F}$ for 
    $\mathcal{F}$ (assume that 
    each $F \in \mathcal{F}$ is carrying $\mathcal{M}$ and $\epsilon^{\dagger}$).   
    Let $\Xi'$ be the set of all $F_a \in 
    \mathbf{F}[\mathcal{F}]$ such that 
    $\mathcal{M}$ satisfies $F_a$ (it is carrying 
    $\mathcal{M}$ and $\epsilon^{\dagger}$) 
    in EPGL. 
    By Lemma 8, 
    it suffices to show that 
    $\Xi'$ is closed under $\wedge$ and 
    $\vee$, and that for no 
    $F_b \in \Xi'$ (carrying 
    $\mathcal{M}$ and $\epsilon^{\dagger}$) 
    it holds that $\mathcal{U}_{\mathcal{M}, \epsilon^{\dagger}}(F_b) \in 
    \{\mathbb{F}, \mathbb{U}\}$.  
    The first part is trivial  
    by Lemma 8 and Lemma 10, and 
    by the definition of EPGL semantics 
    for $\wedge$ and $\vee$. Hence 
    the only part that is worrying 
       is whether it really holds true that  
       any $\mathcal{U}_{\mathcal{M}, \epsilon^{\dagger}}(F_b) 
       \in \Xi'$ is $\mathbb{T}$ and $\mathbb{T}$ 
       alone. Firstly, however, 
       it cannot be that $\mathcal{U}_{\mathcal{M}, \epsilon^{\dagger}}(F_b)$ is 
       $\mathbb{U}$, for any condition 
       that would make $F_b$ has a higher priority 
       over those that make it $\mathbb{T}$. 
       To eliminate the other possibility, 
       let us make use of Lemma 11, 
       and let us consider $F_c$ in 
       disjunctive normal form (and also 
       in unit chain expansion). Consider each 
       conjunctive clause 
       $\mathcal{C}$ of $F_c$. 
       $\mathcal{C}$ can be sorted into the following 
       form via associativity and commutativity 
       of $\wedge$: 
       $(f^x_1 \wedge \cdots \wedge f^x_l) 
       \wedge (f^y_1 \wedge \cdots \wedge f^y_m) 
       \wedge (f^z_1 \wedge \cdots \wedge f^z_n)$, 
       where:
       (1) $l + m + n \ge 1$; (2) 
       $l, m, n \ge 0$; (3) 
       each $f^a_b$ is either a basic formula 
       or a unit chain; 
       (4) non-$\top$ basic subformula of 
       $f^x_u$ for each $1 \le u \le l$ is either 
       in the form: 
       $\tstar_1 \in \tstar_2$ or in the form: 
       $\neg \tstar_1 \in \tstar_2$; 
       (4) non-$\top$ basic subformula of $f^y_v$ for each $1 \le v \le m$ 
       is in either of 
       $t_1 = t_2$, $\neg t_1 = t_2$, 
       $t_1 \simeq t_2$ and $\neg t_1 \simeq t_2$; 
       and (5) non-$\top$ basic subformula of $f^z_w$ for each $1 \le w \le n$ 
       is in either of $p(\term_1, \ldots, \term_k)$ 
       and $\neg p(\term_1, \ldots, \term_k)$. 
       For the first group, finite ZF set theory 
       is clearly consistent, and so 
       it cannot be that 
       $\mathcal{U}_{\mathcal{M}_{f^x_1}, \epsilon^{\dagger}}(f^x_1)$ 
       $\andC$ $\cdots$ $\andC$ 
$\mathcal{U}_{\mathcal{M}_{f^x_l}, \epsilon^{\dagger}}(f^x_l)$ 
       is both $\mathbb{T}$ and $\mathbb{F}$. 
       For the second group, again 
       by the semantics of $=$ and $\simeq$, 
       it cannot be that   
$\mathcal{U}_{\mathcal{M}_{f^y_1}, \epsilon^{\dagger}}(f^y_1)$ 
       $\andC$ $\cdots$ $\andC$ 
$\mathcal{U}_{\mathcal{M}_{f^y_m}, \epsilon^{\dagger}}(f^y_m)$ 
       is both $\mathbb{T}$ and $\mathbb{F}$.  
       (At this point, strictly we are reflecting the term (surface) equalities 
       onto each domain of discourse) 
       Now consider the third group. But 
       clearly by the semantics 
       of basic predicates, 
       it cannot be that  
$\mathcal{U}_{\mathcal{M}_{f^z_1}, \epsilon^{\dagger}}(f^z_1)$ 
       $\andC$ $\cdots$ $\andC$ 
$\mathcal{U}_{\mathcal{M}_{f^z_n}, \epsilon^{\dagger}}(f^z_n)$ 
        is both 
       $\mathbb{T}$ and $\mathbb{F}$. 
       Then it is clear, from 
       the semantics of $\wedge$ and $\vee$, 
       that it cannot be that  
       $\mathcal{U}_{\mathcal{M}, \epsilon^{\dagger}}(F_c)$ 
       is both $\mathbb{T}$ and $\mathbb{F}$. 
       (Finally, it is obvious 
       that $\Xi$ does not 
       contain all the formulas. 
   \end{proof}

\hide{
\begin{theorem}[Relative consistency]   
    On the assumption that ZFC is consistent, 
   predicate gradual logic is consistent. 
\end{theorem} 
\begin{proof}  
    I show that $\mathcal{M}, \epsilon^\dagger 
    \models F$ 

   By Lemma X, it suffices to prove consistency 
   of $\mathbf{F}(\mathcal{F})$, which 
   is a class of extended predicate gradual 
   logic formulas mapped 
   by $\mathbf{F}$ from predicate gradual logic.  
   Each element $F_a$ of $\mathbf{F}(\mathcal{F})$ 
   is in unit chain expansion (Lemma 
   \ref{unit_chain_expansion}). 
   It is straightforward, by the way, 
   that if we are to map 
   each such $F_a$ into nested sets,  
   then every aomic set \ryuta{define.}
   has either a chain or 
   else a basic formula as 
   its constituent. Now, 
   although it is not strictly necessary 
   to transform $F_a$, let us, for an expository 
   purpose, transform it into 
   disjunctive normal form. Call 
   the formula $F_b$. 
   Each constituent of a conjunctive clause in $F_b$  
   could be in either of the three groups: 
   having $\top, \bot$, $p^0$, $\neg p^0$, 
   $p(\term_1, \ldots, \term_n)$ 
   or $\neg p(\term_1, \ldots, \term_n)$; 
   having $t_1 = t_2$, $\neg t_1 = t_2$, 
   $t_1 \simeq t_2$ or $\neg t_1 \simeq t_2$; 
   or having $t_1^1.\cdots.t_n^1 \in 
   t_1^2.\cdots.t_n^2$ 
   or $\neg t_1^1.\cdots.t_n^1 \in 
   t_1^2.\cdots.t_n^2$. And probably 
   just assign a new propositional variable 
   depending whether the basic formula
   falls into true or false group. 
   And show that that completes embedding 
   into propositional gradual logic. 
   But that is only for the first group. 
   For the second and the third group, 
   replace all the equality judgement, 
   and see if they are consistent. 
\end{proof} 
I believe th
Let us now recall a result 
in \cite{Arisaka15Gradual}.    
By now direction for consistency proof 
is fairly clear. 
\begin{lemma}[Consistency of propositional 
    gradual logic] 
    Let 
\end{lemma}
\begin{lemma}    
    \ryuta{Rewrite. A little lengthy.} 
    Let $F_1$ be a formula in predicate gradual 
    logic. 
    Let $H$ denote either a basic formula 
    or else a formula 
    in the form: ${[\top \gtrdot[  
        \cdots \gtrdot[ \top \gtrdot F](\argument_n)](\argument_{n-1})\cdots](\argument_1)}$ such that 
    $F$ is a basic formula, and that 
    $n \ge 1$. Then

    If $F$ is in unit chain expansion, 
    then $F = \bigcirc F$ 
   Let $F$ be in unit chain expansion. 
   Then for any subformula of $F$ in the 
   form $F_1 \gtrdot F_2$, 
   it holds true that there exist 
   no subformulas of $F_1$ or of $F_2$ 
   in either of the forms: 
   $F_a \wedge F_b$ or $F_a \vee F_b$. 
\end{lemma}  
\begin{lemma}[Reduction into unit chain expansion]  
    \normalfont 
   Let $F$ be a predicate gradual logic formula.
    Then it holds true that 
    there exists $k < \omega$ 
    such that $\mathbf{G}^n(\mathbf{F}(F)) = \mathbf{G}^{n+1}(\mathbf{F}(F))$ 
    for all $n \ge k$.  
    Further, $\mathbf{G}^n(\mathbf{F}(F))$ is unique 
    to $F$. 
\end{lemma} 
\begin{proof} 
Straightforward.\hfill$\Box$
\end{proof}        
\ryuta{Probably a little more remark: 
    that $\wedge$ and $\vee$ occur 
    only at the top level (define).} 
\begin{definition}[$\in$-normal form]  
    \normalfont 
    Let $\mathbf{H}$ be a function 
    on formulas 
\end{definition} 
\begin{lemma} 
      
\end{lemma} 

And for  
no two terms is it possible 
that both $t_1 = t_2$ and $\neg t_1 = t_2$  
or that both $t_1 \simeq t_2$ and $\neg t_1 \simeq 
t_2$, as is obvious from the given semantics 
for them. 
\begin{theorem}[Consistency] 
    \normalfont 
    Let $F$ be a formula in predicate gradual logic, 
    and let $\mathcal{M}$ be a structure
    that coheres to $F$. 
    Then if $\mathcal{M}, \epsilon^\dagger \models  
    \mathbf{F}(F)$ (in extended predicate gradual logic), then it is not the case that $\mathcal{M}, \epsilon^{\dagger}
    \models \mathbf{F}(\neg F)$, 
    and if $\mathcal{M}, \epsilon^{\dagger} \models 
    \mathbf{F}(\neg F)$, then it is not the case that $\mathcal{M}, \epsilon^{\dagger}  
    \models \mathbf{F}(F)$. 
\end{theorem}  
\begin{proof} 
 
\end{proof} 
\begin{theorem}[Consistency]   
    Let $F$ be a formula, and let $\mathcal{M}$ 
    be a structure that coheres to $F$. 
    Then 
    it holds that either 
    $\mathcal{M}, \albullet \models F$ 
    or else $\mathcal{M}, \albullet \models \neg F$ 
    for every $\albullet$ possible. \ryuta{Make 
        it precise. Or semantically 
    is it better to assume 
    the theory of this logic $\Gamma$, 
    and show that any model that satisfies 
    $\Gamma$ also satisfies every 
    logical consequence $F$ of $\Gamma$?} 
\end{theorem}  
\begin{proof}  
    It suffices to prove the case 
    for $\albullet = \epsilon$.  
    Or alternatively assume 
    that $\mathcal{M}, \albullet \models \Gamma$ 
    for the theory of logic. 
    Show that $\mathcal{M}, \albullet \models F$. 
\end{proof}   
} 
\hide{ 
\section{Predicate Gradual Logic and Propositional 
    Gradual Logic}   
I show that there is an embedding 
function from propositional gradual logic 
into predicate gradual logic such that 
satisfiability be preserved. 
It is of interest to make 
some comparisons to propositional gradual logic. 
While, if predicate focal logic were 
a direct extension of propositional focal logic, 
the latter should be easily embedded into 
the former, the truth is that some 
concept of propositional gradual logic 
are chosen to be excluded.

\begin{definition}[ ]
   I define a function $\mathbf{H}$ as follows: 
   \begin{enumerate} 
       \item 
   \end{enumerate}
\end{definition} 
\begin{proposition}[Correspondence to 
    propositional gradual logic] 
    Let $\mathbf{H}$ be a recursive function 
    defined as follows.   
    \begin{enumerate} 
        \item 
    \end{enumerate}
\end{proposition}

\section{Predicate Gradual Logic and Aristotle's Syllogisms}   
In the other work on propositional logic,  
I mentioned that quantification 
will be required to represent Aristotle's syllogisms. 
I show that basically all that play 
a major part in Aristotle's discourse 
in Prior Analytics are represented in 
predicate gradual logic.  In this section, 
those in 
the form: $\forall x.P_b(x) \supset P_a(x)$, 
are universal affirmative expressions, 
meant to be read: Every $b$ is $a$ (A in 
Figure 1), those in 
the form: $\forall x.P_b(x) \supset \neg P_a(x)$ 
are universal negative expressions, meant 
to be read: No $b$ is $a$ (E in Figure 1), 
those in the form: $\exists x.P_b(x) \wedge P_a(x)$ 
are particular affirmative expressions, meant 
to be read: Some $b$ is $a$ (I in Figure 1), 
and those in the form: $\exists x.P_b(x) \wedge 
\neg P_a(x)$ are particular negative expressions, 
meant to be read: Some $b$ is not $a$ (O in Figure 1). 
\subsection{Conversion}  
There are three conversion rules: 
one for universal negative, one for 
universal affirmative, and one 
for particular affirmative. While 
Fregean predicate logic seems to 
work except for the second one, 
actually it does not: Aristotle's logic, 
as I have shown, has three truth values, 
and classical logic returns 
$\mathbb{T}$ or $\mathbb{F}$ when 
$\mathbb{U}$ is required. 
\begin{enumerate}
    \item \textbf{Universal} 
        \begin{enumerate}[label=(\Alph*)] 
            \item Negative.
                \begin{enumerate}
                    \item $(\forall x.P_1(x)
                \supset \neg P_2(x)) \supset 
                (\forall x.P_2(x) \supset  
                \neg P_1(x))$. 
             \item Example: if no pleasure is good, 
                 then no good will be pleasure. 
         \end{enumerate}  
     \item Affirmative.  
         \begin{enumerate} 
             \item $(\forall x.P_1(x) \supset 
                 P_2(x)) \supset 
                 (\exists x.P_2(x) \wedge P_1(x))$. 
             \item Example: if every pleasure is good, 
                 some good must be pleasure.  
         \end{enumerate}
        \end{enumerate} 
    \item \textbf{Particular} 
        \begin{enumerate}[label=(\Alph*)] 
            \item Affirmative. 
                \begin{enumerate} 
                    \item $(\exists x.P_1(x) \wedge 
                        P_2(x)) \supset 
                        (\exists x.P_2(x) \wedge 
                        P_1(x))$. 
                    \item Example: if some pleasure is good, 
                        then some good will be pleasure.
                \end{enumerate}
        \end{enumerate}
\end{enumerate}
\subsection{Syllogisms: three figures}  
There are three figures in Aristotle's syllogisms 
as shown below. The first figure displays syllogisms 
    for three terms $a, b$ and $c$ when 
    $b$ is the middle term which 
    is a predicate for $c$ (minor extreme), and which is predicated 
    by $a$ (major extreme). The second figure 
    displays syllogisms 
    for three terms $a, b$ and $c$ when 
    $a$ is the middle term which 
    is a predicate both for $b$ and $c$
    and when $b$ (major extreme) is closer in relation than $c$ (minor extreme) to $a$. The third figure displays 
    syllogisms for three terms $a, b$ and $c$ when 
    $c$ is the middle term that is predicated both 
    by $a$ and $b$ and when 
    $a$ (major extreme) is further in relation than $b$ 
    (minor extreme) to $c$. While again, if 
    every sentence is to be evaluated of truth/falsehood, Fregean predicate logic does appear to return a correct 
    truth value, Aristotle's logic 
    is properly three-valued. 
    Predicate gradual logic returns a matching 
    truth value out of $\mathbb{T}, \mathbb{F}$ 
    and $\mathbb{U}$ to each expression below. 
\begin{enumerate} 
    \item \textbf{First figure}{\ }
        \begin{enumerate}[label=(\Alph*)] 
            \item Universal affirmative in major and 
                minor: \\
                    $(\forall x.P_b(x) \supset P_a(x))
                \wedge (\forall x.P_c(x) \supset 
                P_b(x)) \supset 
                (\forall x.P_c(x) \supset P_a(x))$. 
    \item Universal negative in major and universal
        negative in minor: \\
             $(\forall x.P_b(x) \supset \neg P_a(x)) 
        \wedge (\forall x.P_c(x) \supset P_b(x)) 
        \supset (\forall x.P_c(x) \supset \neg P_a(x))$. 
   \item Universal affirmative in major and
       particular affirmative in minor: \\
            $(\forall x.P_b(x) \supset P_a(x)) 
               \wedge (\exists x.P_c(x) \wedge P_b(x))
               \supset (\exists x.P_c(x) \wedge P_a(x))$.
   \item Universal negative in major and particular 
       affirmative in minor: \\
            $(\forall x.P_b(x) \supset \neg P_a(x))
               \wedge (\exists x.P_c(x) \wedge P_b(x)) 
               \supset (\exists x.P_c(x) \wedge \neg 
               P_a(x))$.  
    \end{enumerate} 
\item \textbf{Second figure}{\ } 
    \begin{enumerate}[label=(\Alph*)] 
        \item Universal negative in major 
            and universal affirmative in minor:\\
                $(\forall x.P_b(x) \supset \neg P_a(x)) \wedge (\forall x.P_c(x) \supset P_a(x)) 
                    \supset (\forall x.P_c(x) \supset \neg P_b(x))$.
        \item Universal affirmative in major 
            and universal negative in minor: \\
                 $(\forall x.P_b(x) \supset P_a(x))
                    \wedge (\forall x.P_c(x) \supset 
                    \neg P_a(x)) \supset 
                    (\forall x.P_c(x) \supset 
                    \neg P_a(x))$.  
        \item Universal negative in major 
            and particular affirmative in minor: \\
                 $(\forall x.P_b(x) \supset 
                    \neg P_a(x)) \wedge 
                    (\exists x.P_c(x) \wedge P_a(x)) 
                    \supset (\exists x.P_c(x) \wedge 
                    \neg P_b(x))$.  
        \item Universal affirmative in major 
            and particular negative in minor: \\
                 $(\forall x.P_b(x) \supset P_a(x))
                    \wedge (\exists x.P_c(x) \wedge 
                    \neg P_a(x)) \supset 
                    (\exists x.P_c(x) \wedge \neg 
                    P_b(x))$. 
    \end{enumerate}  
\item \textbf{Third figure}{\ } 
    \begin{enumerate}[label=(\Alph*)] 
        \item Universal affirmative in major 
            and minor: \\
                 $(\forall x.P_c(x) \supset P_a(x))
                    \wedge (\forall x.P_c(x) \supset 
                    P_b(x)) \supset 
                    (\exists x.P_b(x) \wedge P_a(x))$.
        \item Universal negative in major 
            and universal affirmative in minor: \\
                $(\forall x.P_c(x) \supset 
                    \neg P_a(x)) \wedge 
                    (\forall x.P_c(x) \supset P_b(x))
                    \supset (\exists x.P_b(x) \wedge 
                    \neg P_a(x))$.  
        \item Particular affirmative in major 
            and universal affirmative in minor: \\
                $(\exists x.P_c(x) \wedge P_a(x)) 
                    \wedge (\forall x.P_c(x) \supset 
                    P_b(x)) \supset 
                    (\exists x.P_b(x) \wedge P_a(x))$. 
        \item Universal affirmative in major 
            and particular affirmative in minor: \\
                $(\forall x.P_c(x) \supset P_a(x))
                   \wedge (\exists x.P_c(x) \wedge 
                    P_b(x)) \supset 
                    (\exists x.P_b(x) \wedge P_a(x))$.
        \item Particular negative in major 
            and universal affirmative in minor: \\
                $(\exists x.P_c(x) \wedge 
                    \neg P_a(x)) \wedge 
                    (\forall x.P_c(x) \supset 
                    P_b(x)) \supset 
                    (\exists x.P_b(x) \wedge 
                    \neg P_a(x))$. 
        \item Universal negative in major 
            and particular affirmative in minor: \\
                 $(\forall x.P_c(x) \supset 
                    \neg P_a(x)) \wedge 
                    (\exists x.P_c(x) \wedge P_b(x)) 
                    \supset (\exists x.P_b(x) \wedge 
                    \neg P_a(x))$.  
    \end{enumerate}
\end{enumerate}
}

\hide{ 
\subsection{Mental spaces}    
Instead of doing this, I could just establish 
my own cognitive linguistics theory out of 
this logic, saying something like: 
predicate gradual logic could model 
many of interesting examples cited 
in the literature of cognitive linguistics.  
Mention what features are new to the examples. 
In Fauconnier's mental spaces \cite{Fauconnier85},

\ryuta{Identification Principle: a brief explanation needed.} 
\begin{enumerate}[label={(\protect\perhapsstar\arabic*)}]  
    \item In this painting, the girl with brown eyes 
        has green eyes. 
    \staritem $\exists x \ \exists z\ \exists 
    y \lessdot x\ \exists u \lessdot z.[\girl \gtrdot 
    \has(\cdot, y) \wedge [\eyes \gtrdot 
    \brown](y)](x) \wedge  
    ({x \simeq z}) \wedge 
    \neg ({x = z}) \wedge 
    ({y \lessdot x \simeq u \lessdot z}) \wedge  
    \neg ({y \lessdot x = u \lessdot z}) \wedge  
    \has(z, u \lessdot z)$. \\ 
\item Plato is on the top shelf.  [ID principle]
    \staritem $\forall x.\Plato(x) \supset 
    [\shelf \gtrdot \textsf{top}](x)$. \\ 
\item Plato is on the top shelf. It is bound 
    in leather. 
\item Plato is on the top shelf. You'll find 
    that he is a very interesting author. 
\item John kicked a door down.  (Part-whole schema)
   \staritem $\exists x\ \exists y \ \exists z \lessdot y. 
   \John(x) \wedge [\door \gtrdot \isPart(z,\cdot)](y) \wedge 
   \kick(x, z \lessdot y) \wedge  
   \down(y)$.   
   \item  The road runs through the field.  
   \staritem  ww\\
   \item  The road runs through the field.  
   \staritem The bridges walks.   
   \item The road runs through the field.  
\end{enumerate}  
}
} 
\end{document}